\newtheorem{theorem}{Theorem}[section]
\newtheorem{lemma}[theorem]{Lemma}
\newtheorem{assumption}{Assumption}[section]
\theoremstyle{remark}
\newtheorem{remark}{Remark}[section]
\theoremstyle{definition}
\newtheorem{definition}{Definition}[section]
\def\m0{\mathbf{0}}
\def \mI {\mathbf{I}}
\def \bB {\boldsymbol{B}}
\def \bS {\boldsymbol{S}}
\def \bY {\boldsymbol{Y}}
\def \bZ {\boldsymbol{Z}}
\def \ba {\boldsymbol{a}}
\def \bb {\boldsymbol{b}}
\def \bs {\boldsymbol{s}}
\def \bx {\boldsymbol{x}}
\def \by {\boldsymbol{y}}
\def \bu {\boldsymbol{u}}
\def \bz {\boldsymbol{z}}
\def \mrd {\mathrm{d}}
\def \mrN {\mathrm{NN}}
\def \mcD {\mathcal{D}}
\def \mcL {\mathcal{L}}
\def \mcN {\mathcal{N}}
\def \mcT {\mathcal{T}}
\def \mcX {\mathcal{X}}
\def \mcY {\mathcal{Y}}
\def \mcZ {\mathcal{Z}}
\def\Ebb{\mathbb{E}}
\def \Rbb{\mathbb{R}}
\def\wh{\widehat}
\def\wt{\widetilde}
\def\ov{\overline}
\def\ck{\check}
\title{\textbf{
Model Free Prediction with Uncertainty  Assessment
}
}
\author{
Yuling Jiao
\thanks{School of Mathematics and Statistics, Wuhan University, Wuhan, China.
Email: yulingjiaomath@whu.edu.cn}
\and
Lican Kang
\thanks{School of Mathematics and Statistics, Wuhan University, Wuhan, China.
Email: kanglican@whu.edu.cn}
\and
Jin Liu 
\thanks{ 
School of Data Science, The Chinese University of Hong Kong, Shenzhen, China.
Email: liujinlab@cuhk.edu.cn}
\and
Heng Peng
\thanks{Department of Mathematics, Hong Kong Baptist University, Hong Kong.
Email: hpeng@hkbu.edu.hk}
\and
Heng Zuo
\thanks{School of Mathematics and Statistics, Wuhan University, Wuhan, China.
Email: zuoheng@whu.edu.cn}
}
\date{}
\begin{document}

\maketitle
\begin{abstract}
Deep nonparametric regression, characterized by the utilization of deep neural networks to learn target functions, has emerged as a focus of research attention in recent years. Despite considerable progress in understanding convergence rates, the absence of asymptotic properties hinders rigorous statistical inference. 
To address this gap, we propose a novel framework that transforms the deep estimation paradigm into a platform conducive to conditional mean estimation, leveraging the conditional diffusion model. 
Theoretically, we develop an
end-to-end convergence rate for the conditional diffusion model
and establish the asymptotic normality of the generated samples. Consequently, we are equipped to construct confidence regions, facilitating robust statistical inference. 
Furthermore, through  numerical experiments, we empirically validate the efficacy of our proposed methodology.

\vspace{0.5cm} \noindent{\bf KEY WORDS}:
Statistical inference,
Conditional diffusion model, 
End-to-End error analysis, 
Deep nonparametric regression.
\end{abstract}

\section{Introduction}
Deep learning \cite{lecun2015deep,goodfellow2016deep},
grounded in the utilization of deep neural networks (DNNs), has evolved into a cornerstone within both industrial applications and academic research. Its advent has sparked widespread interest among scholars who have increasingly turned to DNNs as a powerful tool for estimating underlying regression functions. This trend has led to a burgeoning field known as deep nonparametric regression in statistics, where DNNs are leveraged to model complex relationships between covariates and response variables. In this context, a number of recent studies, including those by \cite{bauer2019deep,schmidt2020nonparametric,kohler2021rate,farrell2021deep,jiao2023deep,bhattacharya2023deep}, have contributed significantly to the advancement of deep 
nonparametric regression methodologies and the elucidation of their theoretical underpinnings. 
Now, let us revisit the concept of deep nonparametric regression. Consider a scenario where we have a pair $(X,Y) \in \mathcal{X} \times \mathcal{Y} \subset 
\mathbb{R}^{d_{\mcX}} \times \mathbb{R}^{d_{\mcY}}$, representing a $d_{\mcX}$-dimensional covariate and its  corresponding $d_{\mcY}$-dimensional response variable. This pair follows an unknown joint distribution $P$. We denote the conditional mean as 
\begin{align}\label{condexp}
f_{0}(\bx):=\mathbb{E}(Y|X=\bx),~ \bx \in \mathcal{X},
\end{align}
where $f_0:\mathbb{R}^{d_{\mcX}}\rightarrow \mathbb{R}^{d_{\mcY}}$ also denotes the underlying regression function.
With access to independently and identically distributed (i.i.d.) data denoted by $\mathbb{S}:=
\{(X_i,Y_i)\}_{i=1}^n \sim P$, the objective of deep nonparametric estimation is to obtain the estimation of $f_0$ using DNNs,  
based on the sample set $\mathbb{S}$.
This deep estimator is denoted as $\wh{f}_n$, expressed as
\begin{align}\label{dnnest}
\wh{f}_n \in \arg\min_{f \in \mathcal{F}}\sum_{i=1}^n \left\|Y_i-f(X_i)\right\|^2,
\end{align}
where $\mathcal{F}$ represents the DNNs.
With this deep estimation  $\wh{f}_n$ in mind, we are empowered to make predictions $\wh{f}_n(\bx)$ of $f_0(\bx)$ for each $\bx \in \mathcal{X}$.
While extant literature \cite{bauer2019deep,schmidt2020nonparametric,kohler2021rate,farrell2021deep,jiao2023deep,bhattacharya2023deep} has made significant strides in providing theoretical guarantees for $\wh{f}_n$, achieving consistency and delineating convergence rates, a lingering challenge persists: the construction of asymptotic convergence for $\wh{f}_n(\bx)$ remains elusive. This gap presents an obstacle in the pathway to making robust statistical inferences.
In light of this, our endeavor in this paper is to illuminate this obscure terrain and devise methodologies to tackle this pertinent issue, thereby bridging the gap in deep nonparametric regression.

In traditional statistical inference frameworks, the utilization of methods such as $M$-estimators and $Z$-estimators is customary for the establishment of asymptotic distributions. These distributions, once established, facilitate the application of the Bootstrap method to construct asymptotic confidence intervals for the obtained estimator. Comprehensive discussions on these techniques and their theoretical foundations can be found in these works, 
such as \cite{van1997weak, van2000asymptotic,kosorok2008introduction}.
This iterative process serves to culminate the statistical inference procedure by providing reliable estimates of population parameters or functions.  
For instance, in the context of kernel estimation, this estimation can be  explicitly expressed as
\begin{align}\label{kernelest0}
f_{n,h}(\bx):=\sum_{i=1}^n Y_i \ell_i(\bx),~ \bx \in \mathcal{X},
\end{align}
where
$
\ell_i(\bx):=\frac{K\left(\frac{X_i-\bx}{h}\right)}{\sum_{i=1}^n K\left(\frac{X_i-\bx}{h}\right)}.
$
Here, $K(\cdot)$ denotes the kernel function with bandwidth $h > 0$.  
Furthermore, under some conditions met by the kernel function $K(\cdot)$  in \eqref{kernelest0}, we can establish the asymptotic normality of $f_{n,h}(\bx)$.
One can refer to \cite{wasserman2006all} for a  detailed analysis  about kernel estimations. 
We notice  that within \eqref{dnnest}, we procure an estimation encompassing the entirety of the functional $f_0$. Conversely, in \eqref{kernelest0}, we derive a pointwise estimation $f_{n,h}(\bx)$ of $f_0(\bx)$, characterized by its explicit expression.  This pivotal property endows us with the capability to engage in various statistical inference procedures. However, it is essential to underscore that such a feat remains elusive for the deep estimation  $\wh{f}_{n}(\bx)$.
Herein lies a fundamental challenge: unlike kernel settings, obtaining the asymptotic distribution of $\wh{f}_n(\bx)$ for every $\bx \in \mathcal{X}$ becomes an elusive endeavor since the techniques of $M$-estimators and $Z$-estimators can not be applicable. Consequently, the attainment of pointwise weak convergence properties akin to those observed in kernel estimation becomes unattainable. 
As a corollary, the application of traditional statistical inference methodologies to deep estimators becomes infeasible.

In light of above predicament,  we propose an alternative approach that circumvents the constraints of conventional statistical inference methods. Rather than fixating on the construction of weak convergence properties for $\wh{f}_n(\bx)$ in \eqref{dnnest}, we advocate for the adoption of conditional generative learning techniques. 
To elaborate, let us denote the random variable 
$Y_{\bx}$ distributed from  the conditional distribution of $Y$ given $X=\bx$, represented as $P_{Y|X=\bx}$.  Consequently, we have $f_0(\bx)=\mathbb{E}(Y_{\bx})$ derived from  the expression in  \eqref{condexp}.
Utilizing the fundamental principles of the law of large numbers and the central limit theorem, we can leverage the sample mean derived from the conditional distribution $P_{Y|X=\bx}$ to effectively estimate the conditional expectation function  $f_0(\bx)$ and delineate its asymptotic convergence properties. 
This methodology entails the estimation of the unknown conditional distribution $P_{Y|X=\bx}$, followed by data acquisition from the estimated conditional distribution.  
By harnessing the power of conditional generative models,  we can  learn $P_{Y|X=\bx}$ from the available data, culminating in the derivation of an estimated distribution denoted as $\wh{P}_{Y|X=\bx}$. Following this estimation process, we proceed to generate data  drawn from the estimated conditional distribution $\wh{P}_{Y|X=\bx}$.
To achieve this purpose,  we design a 
conditional diffusion model.
Under some conditions  delineated within our theoretical framework, we rigorously establish an upper bound for the total variation ($\mathrm{TV}$) 
distance  between the estimated distribution $\wh{P}_{Y|X=\bx}$ and the true conditional distribution $P_{Y|X=\bx}$. 
Furthermore, drawing upon the insights gleaned from our theoretical investigations, we undertake the construction of asymptotic normality within the distribution of the generated data samples. 
Specifically, assuming  that
$\{\wh{Y}_{\bx,i}\}_{i=1}^M$ are i.i.d. drawn from  
$\wh{P}_{Y|X=\bx}$, we  denote
$
\overline{\wh{Y}}_{\bx}:=\frac{1}{M}\sum_{i=1}^M \wh{Y}_{\bx,i}
$
and
$\wh{S}^2_{\bx}:=\frac{1}{M-1}\sum_{i=1}^M \left(\wh{Y}_{\bx,i}-\overline{\wh{Y}}_{\bx}\right)\cdot\left(\wh{Y}_{\bx,i}-\overline{\wh{Y}}_{\bx}\right)^{\top}$.
Therefore, we can deduce  that
$
\sqrt{M}\wh{S}_{\bx}^{-1}\left(\overline{\wh{Y}}_{\bx}-f_0(\bx)\right)
$ 
weakly converges to the standard Gaussian distribution under some certain conditions.
This pivotal result forms a robust foundation for the application of our methodology in statistical inference tasks within the intricate domain of deep nonparametric regression.
\subsection{Main Contributions}
Our main contributions can be summarized as follows:
\begin{itemize}
\item 
This work 
explores statistical inference for deep nonparametric regression. Central to this pursuit is the development of a conditional diffusion model, which serves as a cornerstone in estimating the target conditional distribution. 
We rigorously establish the asymptotic convergence results for the deep estimation, allowing the derivation of a validated confidence interval.
In addition, we have designed a series of simulation experiments and real-data analyzes to thoroughly evaluate the efficacy and numerical efficiency of our proposed method.
\item 
We develop a novel conditional diffusion model, representing a methodological advancement within the domain of diffusion models. Theoretically, we meticulously construct comprehensive end-to-end analytical frameworks tailored to our proposed conditional diffusion models. Specifically, we first conduct a rigorous theoretical analysis focusing on deep score estimation. Then, we rigorously establish an upper bound for the 
$\mathrm{TV}$ distance
between the law of the generated data and the underlying conditional distribution.

\end{itemize}

\subsection{Related Work}
In this section, we discuss related work, including deep nonparametric regression, statistical inference, and diffusion models.

\noindent
\textbf{Deep nonparametric regression:}
The integration of DNNs into regression analysis has ignited significant interest among researchers aiming to uncover intricate relationships between response and predictor variables. Seminal works by \cite{bauer2019deep, schmidt2020nonparametric, kohler2021rate,farrell2021deep, jiao2023deep,bhattacharya2023deep} have spearheaded this exploration, focusing on leveraging DNNs to learn the underlying regression  from data, thus diverging from conventional nonparametric approaches.
Theoretical foundations supporting the efficacy of DNN-based regression models are rooted in empirical process theory \cite{van1997weak,van2000asymptotic,gyorfi2002distribution,kosorok2008introduction} 
and deep approximation theory
\cite{lu2021deep,petersen2018optimal,yarotsky2017error}. These frameworks provide invaluable insights into the convergence rates of deep nonparametric regression, elucidating their minimax optimal convergence rates under some conditions, as established in \cite{bauer2019deep, schmidt2020nonparametric, kohler2021rate,farrell2021deep, jiao2023deep,bhattacharya2023deep}. 
In addition, deep nonparametric regression can address the curse of dimensionality, an ubiquitous challenge in high-dimensional data settings. This capability is particularly notable when the underlying regression function exhibits a composite structure or when the data conform to low-dimensional structures, such as manifolds.
Comparative analyses between deep nonparametric regression and traditional counterparts underscore the superiority of the former, especially in scenarios characterized by high-dimensional data.  While traditional nonparametric regression methods may struggle with the dimensionality of the data, deep nonparametric regression excels in deciphering intricate patterns and capturing nuanced relationships, offering a more robust and comprehensive framework for regression analysis.
For further details on deep nonparametric regression, we refer the readers to 
\cite{bauer2019deep, schmidt2020nonparametric, kohler2021rate,farrell2021deep, jiao2023deep,bhattacharya2023deep} and the references therein.

\noindent
\textbf{Statistical inference:}
Statistical inference is paramount in statistics, providing a structured methodology for extracting meaningful insights from observed data. In the context of nonparametric regression, the initiation of statistical inference necessitates the establishment of asymptotic convergence for estimators. 
Asymptotic convergence  \cite{van1997weak, van2000asymptotic,kosorok2008introduction} serves as a fundamental characteristic of  estimators, elucidating their behavior as sample sizes expand indefinitely.   Such convergence lays the theoretical groundwork for numerous statistical methodologies, facilitating the construction of asymptotic confidence intervals. Confidence intervals play a pivotal role in statistical inference, enabling the quantification of uncertainty surrounding estimates of population parameters. By constructing intervals centered around  estimates, confidence intervals provide a range of plausible values for the true parameter, accompanied by a predetermined level of confidence. This allows us to assess the precision of the estimates and evaluate the reliability of the findings, thereby enhancing the robustness of statistical estimators.
To elucidate, we introduce some pertinent methods, such as  
nearest neighbor estimator \cite{biau2015lectures}, 
kernel estimation \cite{wasserman2006all},
distributional random forests \cite{cevid2022distributional},
and conformal prediction
\cite{vovk2005algorithmic,lei2018distribution,fontana2023conformal},
among others.

The nearest neighbor estimator, a prominent method in nonparametric estimation, has garnered widespread usage, as detailed in 
\cite{biau2015lectures}.
Given $\bx \in \mathbb{R}^{d_{\mcX}}$,  
the dataset   
$$
\left\{\left(X_1, Y_1\right), \ldots,\left(X_n, Y_n\right)\right\}
$$ 
are rearranged based on the ascending order of their distances from $\bx$. 
This reordering yields a new sequence of data pairs:
$$
\left\{\left(X_{(1)}(\bx), Y_{(1)}(\bx)\right), \ldots,\left(X_{(n)}(\bx), Y_{(n)}(\bx)\right)\right\}.
$$
Here, $X_{(i)}(\bx)$, $i \in \{1,\ldots,n\}$, is termed the $i$th nearest neighbor of $\bx$.
Consequently, the nearest neighbor estimation is defined as
$$
f_n(\bx)=\frac{1}{n} \sum_{i=1}^{n} v_i Y_{(i)}(\bx),
$$
where $(v_1,\ldots,v_n)$ is a weight vector summing to one.
Specifically, if we set $v_i=\frac{1}{k} I(1\leq i \leq k)$ for $k \in \{1,\ldots,n\}$, then we obtain the 
$k$-nearest neighbor estimation, formulated as 
$$
f_n(\bx)=\frac{1}{k} \sum_{i=1}^{k} Y_{(i)}(\bx).
$$
Under certain conditions, the asymptotic convergence of 
$k$-nearest neighbor estimation can be established \cite{biau2015lectures}.
To enhance the implementation of the nearest neighbor estimator, \cite{steele2009exact,biau2010rate} employ the bagging technique to adaptively select weights in a distributional manner, resulting in the distributional nearest neighbor estimator.
The inference of distributional nearest neighbor estimators is further explored in \cite{demirkaya2024optimal}, where a two-scale distributional nearest neighbor estimator is proposed and shown to be asymptotically normal.

Kernel  estimation can be  explicitly expressed as \eqref{kernelest0}.
Under some conditions, the asymptotic normality of the kernel estimation
can be obtained, as discussed in \cite{wasserman2006all}. 
Additionally, kernel estimation can be viewed as a variation of nearest neighbor estimation when the weights of the nearest neighbor estimator are represented in a kernel form.
Furthermore, conformal prediction is a robust technique that ensures valid predictive inference by leveraging conformity scores derived from a fitted predictive model and calibration data. This method systematically evaluates the proximity of new data points to existing data and utilizes these conformity scores to construct reliable prediction intervals.
Fundamentally, by ranking the conformity scores and selecting those corresponding to a desired confidence level, conformal prediction provides prediction intervals with guaranteed coverage probabilities, offering a principled approach to uncertainty quantification in predictive modeling.
For further details, refer to \cite{vovk2005algorithmic,lei2018distribution,fontana2023conformal}.
Recently, \cite{liu2024novel}   introduced a perturbation-assisted inference framework for uncertainty quantification, which uses synthetic data generated by generative models.

Distributional random forests, as pioneered by \cite{cevid2022distributional}, epitomize a noteworthy advancement within the domain of Random Forest algorithms, originally conceptualized by \cite{breiman2001random}. 
This novel framework is tailored for the estimation of multivariate conditional distributions, thus furnishing a versatile tool conducive to predicting diverse targets, including conditional average treatment effects, conditional quantiles, and conditional correlations.
Furthermore, the consistency and convergence rate of distributional random forests are also expounded upon in \cite{cevid2022distributional}. These theoretical results provide a pivotal understanding of the methodology's performance characteristics, laying a solid foundation for its rigorous evaluation and application in empirical studies.
Recent developments in this burgeoning field have been elucidated in \cite{naf2023confidence}, with a specific emphasis on the asymptotic convergence and inference of distributional random forests.

\noindent
\textbf{Diffusion models:}
Diffusion models, a pivotal category within the domain of deep generative learning methods, have garnered considerable attention in   research efforts \cite{sohl2015deep,song2019generative,ho2020denoising,song2020improved,song2020score,nichol2021improved,benton2024denoising}. 
These models are composed of two fundamental components: the forward and backward processes, which are designed to introduce and remove noises, respectively.
In the backward process, a critical component involves training a score neural network. This network is systematically trained to effectively facilitate sampling, allowing for the generation of samples that closely align with the target distribution.
In applications, diffusion models have emerged as pivotal catalysts for advancements across diverse domains, notably including  text-to-image generation, natural language processing, and image and audio synthesis
\cite{dhariwal2021diffusion,ho2022cascaded,rombach2022high,saharia2022photorealistic,zhang2023adding,han2022card,austin2021structured,li2022diffusion}.   
Moreover, the error analysis of diffusion models can be categorized into two primary paradigms: those wherein the error in score estimation is acknowledged and those wherein it remains unidentified. The former line  has been extensively pursued in studies \cite{chen2023improved,conforti2023score,lee2022convergence,lee2023convergence,benton2023linear,li2023towards,gao2023wasserstein}. Conversely, the latter one has been elucidated by \cite{oko2023diffusion,chen2023score,jiao2024latent},
whose endeavors are underpinned by an end-to-end framework. Here, the integration of score estimation theory plays a pivotal role in their theoretical developments. 
In this paper, we also establish the 
end-to-end error analysis of our proposed method.
Conditional diffusion models were first introduced in \cite{song2020score}, which
share similarities with diffusion models, but with a distinct emphasis on modeling conditional distributions. This distinction arises from their explicit incorporation of external factors into the modeling framework.  
For instance, within text-to-image diffusion models, visually striking images are generated by inputting a text prompt.
As a result, the score function within these models also adopts a conditional form.
For examples of conditional diffusion models, refer to \cite{han2022card,shen2023non,zhou2023testing},
 and for  theoretical exploration of these models, see \cite{fu2024unveil}.
For an in-depth exploration of diffusion models, interested readers are directed to the specialized review by  \cite{yang2023diffusion,chen2024overview}, which offers a comprehensive analysis elucidating the intricacies of diffusion model principles and their practical applications.
Furthermore, there exist alternative conditional generative learning methods, such as conditional generative adversarial networks \cite{mirza2014conditional}, conditional variational autoencoders \cite{sohn2015learning}, generative conditional distribution samplers \cite{zhou2023deep}, and conditional F\"ollmer flows \cite{chang2024deep}, among others. 
Recently,  the end-to-end error analyses for 
ODE-based  generative models have been conducted by \cite{chang2024deep,gao2024convergence,jiao2024convergence}. 
Nonetheless, it is imperative to acknowledge that these methods have yet to attain asymptotic convergence and establish statistical inference capabilities.

\subsection{Notations and Paper Organization}
We introduce the notations used in this paper. 
Let $[N]:=\{0,1,\cdots,N-1\}$ represent the set of integers ranging from 0 to $N-1$. 
Let $\mathbb{N}^+$ denote the set of positive integers.
For matrices $A, B \in \mathbb{R}^{d\times d}$, we assert $A\preccurlyeq B$ when the matrix $B - A$ is positive semi-definite. 
We denote by $\mI_d$
the identity matrix in $\mathbb{R}^{d\times d}$.
The $\ell^2$-norm of a vector $\by=\{y_1,\ldots,y_d\}^{\top}\in\mathbb{R}^d$ is defined by $\Vert\by\Vert:=\sqrt{\sum_{i=1}^{d}y_i^2}$, and the outer product of the vector $\by$ is defined by $\by^{\otimes2}:=\by\by^{\top}$. 
We denote $\Vert\by\Vert_0$ as the number of 
non-zero elements in $\by$.
Simultaneously, the operator norm of a matrix $A$ is articulated as $\Vert{A}\Vert:=\sup_{\Vert\by\Vert\leq{1}}\Vert A\by\Vert$. 
The function space $C^2(\mathbb{R}^d)$ encompasses functions that are twice continuously differentiable from $\mathbb{R}^{d}$ to $\mathbb{R}$. For any $f\in C^2(\mathbb{R}^{d})$, the symbols $\nabla{f}$, $\nabla^2{f}$, and ${\rm{\Delta}}f$ signify its gradient, Hessian matrix, and Laplacian, respectively.
The $L^{\infty}(K)$-norm, denoted as  
$\Vert{f}\Vert_{L^{\infty}(K)}:=\sup_{\by\in K}|f(\by)|$,
captures the supermum of the absolute values of a function over a  set $K \subset \mathbb{R}^d$. For a vector function $\boldsymbol{v}:\mathbb{R}^{d}\rightarrow\mathbb{R}^{d}$, the $L^{\infty}(K)$-norm is defined as $\Vert \boldsymbol{v} \Vert_{L^{\infty}(K)} := \sup_{\by\in K}\Vert \boldsymbol{v}(\by)\Vert$. The asymptotic notation $f(\by) = \mathcal{O}\left(g(\by)\right)$ is employed to signify that $f(\by)\leq Cg(\by)$ for some constant $C > 0$. Additionally, the notation $\widetilde{\mathcal{O}}(\cdot)$ is utilized to discount logarithmic factors in the asymptotic analysis.

The remainder of this paper is organized as follows.
In Section \ref{sec:prel}, we provide the necessary preliminaries, including an introduction to the OU process, definitions of $\mathrm{TV}$ distance, DNNs, and covering numbers. 
Section \ref{sec:method} presents a detailed description of the methodology. 
The end-to-end theoretical analysis for the conditional diffusion models and asymptotic normality are developed in Section \ref{sec:ta}.
Section \ref{sec:prs} contains the detailed proof sketches.
Section \ref{sec:na} shows the numerical experiments conducted. 
The conclusion is presented in Section \ref{sec:con}. 
In the Appendix, we provide detailed proofs for all lemmas and theorems presented in this paper.

\section{Preliminaries}\label{sec:prel}
In this section, we provide an overview of the preliminary concepts, including the OU process, definitions of the $\mathrm{TV}$ distance, DNNs, and covering numbers.

\noindent
\textbf{OU Process}: We consider  the Ornstein-Uhlenbeck (OU) process as follows
\begin{equation}\label{sde: OU}
    \mrd\ov{\bY}_t = -\ov{\bY}_t\mrd t + \sqrt{2}\mrd\bB_t,~ \ov{\bY}_0\sim p_{0}(\by), ~t\geq 0,
\end{equation}
where $\{\bB_t\}_{t\geq 0}$ is a Brownian motion in $\Rbb^{d_{\mcY}}$. 
Then, $\ov{\bY}_t$ has an explicit solution
$$
\ov{\bY}_t = e^{-t}\ov{\bY}_0 + e^{-t}\int_{0}^{t}\sqrt{2}e^{s}\mrd\bB_s,
$$
and
$$
\ov{\bY}_t|\ov{\bY}_0\sim\mcN(e^{-t}\ov{\bY}_0, (1-e^{-2t})\mI_{d_{\mcY}}).
$$
Therefore, when $t\rightarrow\infty$, we know $\ov{\bY}_t\rightarrow\mcN(\m0,\mI_{d_{\mcY}})$.

\begin{definition}[$\mathrm{TV}$ Distance]
\label{def:tv}
Given two probability distributions $\mu$ and $\nu$ on a sample space $\Omega\subseteq\Rbb^{d_{\mcY}}$, the $\mathrm{TV}$  distance between $\mu$ and $\nu$ is defined as:
$$
\mathrm{TV}(\mu,\nu)=\sup_{A \subseteq \Omega}\left|\mu(A)-\nu(A)\right|.
$$
For functions $f:\Omega\rightarrow \mathbb{R}$, define $\|f\|_{\Omega,\infty}:=\sup_{\by \in \Omega} |f(\by)|$.
Then, we can also write the TV distance as:
$$
\mathrm{TV}(\mu,\nu)=\sup_{\|f\|_{\Omega,\infty}\leq 1}\left|\int_{\Omega} f(\by)\mu(\mrd\by)-\int_{\Omega} f(\by)\nu(\mrd\by)\right|.
$$

\end{definition}

\begin{definition}[ReLU DNNs]\label{relufnns}
ReLU deep neural networks, denoted as NN$(L,M,J,K,\kappa)$, with depth $L$, width $M$,  
sparsity level $J$,
boundness $K$,  
and weight $\kappa$,
can be defined as
    \begin{equation*}
\begin{aligned}
            {\rm{NN}}(L,M,J,K,\kappa) = \Big\{&\bs(t,\bz): (\boldsymbol{W}_L{\rm{ReLU}}(\cdot) + \bb_L)\circ\cdots\circ(\boldsymbol{W}_1{\rm{ReLU}}(\cdot) + \bb_1)(
            [t, \bz^{\top}]^{\top}):\\
&
 \boldsymbol{W}_i \in \mathbb{R}^{d_{i+1}\times d_i},  \bb_i \in \mathbb{R}^{d_{i+1}}, i=0,1,\ldots,L-1,\\
& M := \max\{d_0,\ldots,d_L\},~\mathop{\sup}_{t,\bz }\Vert{\bs(t, \bz)}\Vert\leq{K},\\
&\mathop{\max}_{1\leq{i}\leq{L}}\{\Vert{\bb}_i\Vert_{\infty}, \Vert{\boldsymbol{W}_i}\Vert_{\infty}\}\leq{\kappa},\\
&\sum_{i=1}^L\left(\Vert{\boldsymbol{W}_i}\Vert_0 + \Vert{\bb_i}\Vert_0\right)\leq J\Big\}.
\end{aligned}
\end{equation*}
\end{definition}

\begin{definition}[Covering number]
    Let $\rho$ be a pseudo-metric on $\mathcal{U}$ and $S\subseteq\mathcal{U}$. For any $\delta > 0$, a set $A\subseteq\mathcal{U}$ is called a $\delta$-covering number of $S$ if for any $\ba\in S$ there exists $\bb\in A$ such that $\rho(\ba,\bb)\leq\delta$. The $\delta$-covering number of $S$, denoted by $\mathcal{N}(\delta,S,\rho)$, is the minimum cardinality of any $\delta$-covering of $S$.
\end{definition}

\section{Problem Setting}\label{sec:method}
In this section, we explicate the procedural intricacies of our methodology, which centrally revolves around the utilization of a conditional diffusion model to facilitate rigorous statistical inference. Initially, we undertake a comprehensive exposition on the conditional diffusion model, encompassing forward and reverse processes, score matching, as well as prior distribution replacement and Euler Maruyama (EM) discretization. 
Subsequently, leveraging the established conditional diffusion model, we can obtain data that closely approximate the target distribution. Through this rigorous approach, we achieve the foundational objective of statistical inference,  facilitating robust  estimations within deep nonparametric regression.
\\\\
\noindent
\textbf{Forward Process}: We rewrite the SDE \eqref{sde: OU} with conditional prior distribution as follows:
\begin{equation}\label{sde: cond_OU}
    \mrd\ov{\bY}_t = -\ov{\bY}_t\mrd t + \sqrt{2}\mrd\bB_t, ~ \ov{\bY}_0\sim p_0(\by|\bx).
\end{equation}
It determines a diffusion process,  starting with a conditional distribution $p_{0}(\by|\bx)$ for some observation $\bx\in\mcX$ at time $t=0$,  that approaches $\mcN(\m0,\mI_{d_\mcY})$ as $t\rightarrow\infty$. 
In this context, we set $p_{0}(\by|\bx)$ as the target conditional distribution
$P_{Y|X=\bx}$.
We denote by $p_{t}(\by|\bx)$  the conditional probability density function of $\ov{\bY}_t$.
The process \eqref{sde: cond_OU} describes how data are transformed into noises. Conditioning on $\ov{\bY}_0$ and $\bx$, the transition probability distribution from $\ov{\bY}_0$ to $\ov{\bY}_t$ is given by 
$\ov{\bY}_t|(\ov{\bY}_0,\bx)\sim\mcN(e^{-t}\ov{\bY}_0, (1-e^{-2t})\mI_{d_\mcY})$ for each $t\geq 0$.
\\\\
\noindent\textbf{Reverse Process}: 
According to
\cite{anderson1982reverse,haussmann1986time}, the SDE \eqref{sde: cond_OU} can be reversed if we know the score of the distribution at each time $t$, denoted as $\nabla_{\by}\log p_{t}(\by|\bx)$. The reverse SDE reads
\begin{equation}\label{sde: reverse_cond_OU}
    \mrd\bY_t^R = \left[-\bY_t^R - 2\nabla_{\by}\log p_{t}(\bY_t^R|\bx)\right]\mrd t + \sqrt{2}\mrd\widetilde{\bB}_t,
\end{equation}
where 
$\{\widetilde{\bB}\}_{t\geq 0}$ is a standard Brownian motion when time flows backwards from $t=\infty$ to $t=0$, and $\mrd t$ is an infinitesimal negative timestep. The reverse SDE \eqref{sde: reverse_cond_OU} describes how to generate samples from noises. 

For convenience, we reformulate the reverse SDE \eqref{sde: reverse_cond_OU} in a forward version by switching time direction $t\rightarrow T - t$:
\begin{equation}\label{sde: forward_cond_OU}
    \mrd\bY_t = \left[\bY_t + 2\nabla_{\by}\log p_{T-t}(\bY_t|\bx)\right]\mrd t + \sqrt{2}\mrd \bB_t, ~\bY_0\sim p_{T}(\by|\bx),
\end{equation}
where $\{\bB_t\}_{0\leq t\leq T}$ is the usual (forward) Brownian motion.
\\\\
\noindent\textbf{Score Matching:}
The process \eqref{sde: forward_cond_OU} transforms noises into samples from 
$p_{0}(\by|\bx)$. 
However, direct simulation of this process \eqref{sde: forward_cond_OU} poses challenges due to the unavailability of the conditional score function $\nabla_{\by}\log p_{t}(\by|\bx)$.
Consequently, the drift term $\bb(t,\by,\bx) := \by + 2\nabla_{\by}\log{p}_{t}(\by|\bx)$ remains inaccessible.
To address this challenge,  we implement score matching methods \cite{hyvarinen2005estimation,vincent2011connection} to derive an estimator for the drift term.
Specifically, we  introduce the following population-level loss function:
$$
\mcL(\bs) := \frac{1}{T - T_0}\int_{T_0}^{T}\mathbb{E}_{\ov\bY_{t}, \bx}\Vert \bs(t,\ov{\bY}_t,\bx) - \bb(t,\ov{\bY}_t,\bx) \Vert^2\mrd t,
$$
where $0 < T_0 < T < \infty$. We notice that the selection of $T_0$ is motivated by the singularity of $\partial_t\bb(t,\by,\bx)$ at $t = 0$. 
With slight notational abuse, 
using denoising score matching allows us to represent this objective as:
$$
\begin{aligned}
\mcL(\bs) 
&= \frac{1}{T - T_0}\int^{T}_{T_0}\Ebb_{\ov\bY_{t},\bx}\left\Vert \bs(t,\ov{\bY}_t,\bx) - \ov{\bY}_t - 2\nabla_{\by}\log p_{t}(\ov{\bY}_t|\ov{\bY}_0,\bx) \right\Vert^2\mrd t\\
&=\frac{1}{T - T_0}\int^{T}_{T_0}\Ebb_{(\ov{\bY}_0,\bx)}\Ebb_{\ov{\bY}_t|(\ov{\bY}_0,\bx)}\left\Vert\bs(t,\ov{\bY}_t,\bx) -\ov{\bY}_t + \frac{2\left(\ov{\bY}_t - e^{-t}\ov{\bY}_0\right)}{1-e^{-2t}} \right\Vert^2\mrd t\\
&=\frac{1}{T - T_0}\int_{T_0}^{T}\Ebb_{(\ov{\bY}_0,\bx)}\Ebb_{\bZ}\left\Vert \bs(t,e^{-t}\ov{\bY}_0 + \sqrt{1-e^{-2t}}\bZ,\bx) -e^{-t}\ov{\bY}_0 + \frac{(1 + e^{-2t})\bZ}{\sqrt{1-e^{-2t}}}\right\Vert^2\mrd t.
\end{aligned}
$$
In practice, we only have access to observable data.
Specifically, given $n$ i.i.d. samples 
$\left\{(\ov{\bY}_{0,i}, \bx_{i})\right\}_{i=1}^{n}$ from $p_{0}(\by|\bx)p(\bx)$, $m$ i.i.d. samples $\left\{(t_j, \bZ_j)\right\}_{j=1}^{m}$ from $U[T_0, T]$ and $\mcN(0, \mI_{d_\mcY})$, we can utilize the empirical risk minimizer (ERM) as the  estimator of  the conditional score function. This ERM, denoted as $\wh{\bs}$, is determined by
\begin{equation}\label{eq: edrift}
\wh{\bs}\in\mathop{\arg\min}_{\bs\in\mrN}\wh{\mcL}(\bs),
\end{equation}
where $\mrN$ refers to ReLU DNNs defined in Definition \ref{relufnns}, and 
$\wh\mcL(\bs)$ denotes the empirical risk loss function, defined as
$$
\wh\mcL(\bs):=\frac{1}{mn}\sum_{j=1}^{m}\sum_{i=1}^{n}\left\Vert \bs(t_j,e^{-t_j}\ov{\bY}_{0,i} + \sqrt{1-e^{-2t_j}}\bZ_j, \bx_i) -e^{-t_j}\ov{\bY}_{0,i} + \frac{(1 + e^{-2t_j})\bZ_j}{\sqrt{1-e^{-2t_j}}}\right\Vert^2.
$$
\\\\
\noindent
\textbf{Prior Distribution Replacement and EM Discretization:}
Given the estimated conditional score function $\wh{\bs}$, we can  establish an SDE  initializing from the prior distribution $p_{T}(\by|\bx)$, defined as 
\begin{equation}\label{sde: sampling}
    \mrd\wh{\bY}_t = \wh{\bs}(T-t,\wh{\bY}_t,\bx)\mrd t + \sqrt{2}\mrd \bB_t, \quad {\wh{\bY}}_0 = \bY_0\sim p_{T}(\by|\bx), \quad 0\leq t\leq T-T_0.
\end{equation}
Subsequently, we can employ a discrete-approximation method for the sampling dynamics \eqref{sde: sampling}. Let
$$
0 = t_0 < t_1 < \cdots < t_N = T-T_0, \quad N\in\mathbb{N}^{+},
$$
be the discretization points on $[0, T-T_0]$. We consider the explicit EM scheme:
\begin{equation}\label{sde: EM_scheme}
    \mrd\wt{\bY}_t = \wh{\bs}(T-t_k, \wt{\bY}_{t_k}, \bx)\mrd t + \sqrt{2} \mrd \bB_t, \quad \wt{\bY}_0 = \bY_0\sim p_{T}(\by|\bx), \quad t\in[t_k, t_{k+1}],
\end{equation}
for $k=0,1,\cdots,N-1$. Unfortunately, the conditional distribution $p_{T}(\by|\bx)$ is unknown, rendering it impossible to utilize dynamics \eqref{sde: EM_scheme} for generating new samples.
Thus, we substitute $p_{T}(\by|\bx)$ with $\mcN(\m0,\mI_{d_\mcY})$ since  $p_{T}(\by|\bx)$ converges to  $\mcN(\m0,\mI_{d_\mcY})$ as $T$  tends to infinity. This adaptation presents a practically viable strategy.
Therefore, the sampling dynamics  take the following form:
\begin{equation}\label{sde: distri_replace}
    \mrd\ck{\bY}_t = \wh{\bs}(T-t_k, \ck{\bY}_{t_k}, \bx)\mrd t + \sqrt{2}\mrd \bB_t,\quad \ck{\bY}_0 \sim\mcN(\m0,\mI_{d_\mcY}),\quad t\in[t_k, t_{k + 1}].
\end{equation}
Consequently, we can employ this dynamics \eqref{sde: distri_replace} to generate new samples that are approximately distributed according to the target conditional distribution.
\\\\
\noindent
\textbf{Statistical Inference:}
With the conditional diffusion  model \eqref{sde: distri_replace}, we can obtain a conditional distribution denoted as  $\wh{P}_{Y|X=\bx}$, which approximates 
the target conditional distribution $P_{Y|X=\bx}$, as illustrated in Theorem \ref{th: end_to_end_convergence}.
Subsequently, running \eqref{sde: distri_replace} $M$ times  produces the generated data set  
$\{\wh{Y}_{\bx,i}\}_{i=1}^M$ which are i.i.d. drawn from   $\wh{P}_{Y|X=\bx}$.
Let
$
\overline{\wh{Y}}_{\bx}:=\frac{1}{M}\sum_{i=1}^M \wh{Y}_{\bx,i}
$
and
$\wh{S}^2_{\bx}:=\frac{1}{M-1}\sum_{i=1}^M \left(\wh{Y}_{\bx,i}-\overline{\wh{Y}}_{\bx}\right)\cdot
\left(\wh{Y}_{\bx,i}-\overline{\wh{Y}}_{\bx}\right)^{\top}
$.
Thus, we can conclude  that
$
\sqrt{M}\wh{S}_{\bx}^{-1}\left(\overline{\wh{Y}}_{\bx}-f_0(\bx)\right)
$ 
weakly converges to the standard Gaussian distribution under some conditions, as demonstrated in Theorem \ref{th: weakcon}.
Then, given $\alpha \in (0,1)$, we can construct the asymptotic $1-\alpha$ confidence interval for $f_0(\bx)$, denoted as
\begin{align}\label{asycond}
\left[\overline{\wh{Y}}_{\bx}-Z_{\alpha/2}\wh{S}_{\bx}/\sqrt{M},\overline{\wh{Y}}_{\bx}+Z_{\alpha/2}\wh{S}_{\bx}/\sqrt{M}\right].
\end{align}
Here, 
$Z_{\alpha/2}$ denotes the upper  quantile of a standard Gaussian distribution corresponding to the $\alpha/2$ significance level.

Overall, the structural framework of our proposed method is encapsulated within the following Algorithm \ref{sampling_algorithm}.
\begin{algorithm}[H]
\caption{
Statistical Inference via Conditional Diffusion Models
}
\label{sampling_algorithm}
\begin{algorithmic}
\STATE 
1. {\bf Input:}
$T_0$, $T$, $M$, $\alpha \in (0,1)$, 
$\left\{(\ov{\bY}_{0,i}, \bx_{i})\right\}_{i=1}^{n}\sim p_{0}(\by|\bx)p(\bx)$,  $\left\{(t_j, \bZ_j)\right\}_{j=1}^{m}$ $\sim$ $U[T_0, T]$ and $\mcN(0, \mI_{d_\mcY})$.
\STATE
2. {\bf Score estimation:} 
Obtain  $\widehat{\boldsymbol{s}}$ by  \eqref{eq: edrift}.
\STATE 
3. {\bf Data Generation:} 
Running the sampling dynamic \eqref{sde: distri_replace} $M$ times  generates the  data set  
$\{\wh{Y}_{\bx,i}\}_{i=1}^M$.
\STATE
4. Compute
$
\overline{\wh{Y}}_{\bx}=\frac{1}{M}\sum_{i=1}^M \wh{Y}_{\bx,i}
$
and
$\wh{S}^2_{\bx}=\frac{1}{M-1}\sum_{i=1}^M \left(\wh{Y}_{\bx,i}-\overline{\wh{Y}}_{\bx}\right)\cdot
\left(\wh{Y}_{\bx,i}-\overline{\wh{Y}}_{\bx}\right)^{\top}
$.

5. {\bf Output:} The asymptotic $1-\alpha$ confidence interval \eqref{asycond}.       
    \end{algorithmic}
\end{algorithm}

\section{Theoretical Analysis}\label{sec:ta}
In this section, we present our theoretical findings. Initially, we establish the TV distance error bound between the generated distribution and the target distribution, providing an end-to-end assessment of the conditional diffusion model. Subsequently, we develop the framework for asymptotic normality. To facilitate this analysis, we introduce several essential assumptions.

\begin{assumption}[Bounded Support]\label{ass: bounded_support}
    For any $\bx\in[0, B_{\mcX}]^{d_{\mcX}}$, the conditional density function $p_0(\by|\bx)$ is supported on $[0,1]^{d_\mcY}$.
\end{assumption}

\begin{assumption}[Lipschitz Continuity]\label{ass: Lip_target}
    The conditional density function $p_{0}(\by|\bx)$ is L-Lipschitz continuous w.r.t. $\by$ on $\mathbb{R}^{d_\mcY}\times[0,B_{\mcX}]^{d_{\mcX}}$.
\end{assumption}

\begin{assumption}[Lipschitz Score]\label{ass: smoothness_y}
    The conditional score function $\nabla_{\by}\log p_t(\by|\bx)$ is $\beta$-Lipschitz continuous w.r.t. $\bx$ on $[T_0, T]\times\Rbb^{d_\mcY}\times[0,B_{\mcX}]^{d_{\mcX}}$.
\end{assumption}
\begin{remark}
These foundational assumptions play a crucial role in advancing the theoretical framework of diffusion models.   Specifically, the introduction of the bounded support assumption
has been documented in the works such as \cite{lee2023convergence,li2023towards,oko2023diffusion}.
Meanwhile, in nonparametric regression, the imposition of a bounded assumption concerning the response variable is widely observed in the literature \cite{bauer2019deep,gyorfi2002distribution,kohler2021rate,farrell2021deep}. This assumption carries significant technical implications, with the potential for further generalization to encompass unbounded scenarios through the incorporation of an exponential tail.
The Lipschitz continuity 
imposed on the target distribution serves as a technical condition, ensuring mathematical rigor in our analysis.
Specifically, this assumption 
implies that  $p_0(\by|\bx)$ has no jump point on the boundary. 
It encompasses many common distributions, broadening the applicability of our theoretical framework.
 For simplicity, we assume $d_{\mcX}=d_{\mcY}=1$. One example of a distribution that satisfies the Lipschitz continuity is the conditional beta distribution, defined as $p_0(y|x):=\frac{y^{a(x)-1}(1-y)^{b(x) - 1}}{B(a(x), b(x))}\mI_{\{y\in[0,1]\}}$,
    where $x\in[0, 1]$, 
    $a(x), b(x) > 2$, and $B(\cdot, \cdot)$ denotes the beta function.
Additionally,  the modified Gaussian mixture distribution also satisfies this assumption. By truncating the distribution, we limit the range of the Gaussian components, ensuring they remain within a bounded interval. Scaling and normalization further adjust the distribution to meet the Lipschitz continuity condition, making it a suitable candidate for our analysis.
Moreover,  the Lipschitz property of score 
is also assumed in the study of diffusion models 
\cite{chen2023improved,lee2022convergence,lee2023convergence,gao2023wasserstein,chen2023score}.
    
\end{remark}

\begin{remark}
If $\bx$ takes values from a finite set, i.e. $\bx\in\{\bx_1,\bx_2,\cdots, \bx_{n_r}\}$ (a common scenario in tasks such as image generation with $n_r$ classifications), then $\bb(t,\by,\bx)$ can be seen as $\bb(t,\by,\bx)=\sum_{i=1}^{n_r}\bb_i(t,\by)\delta(\bx-\bx_i)$, 
where $\delta(\bx)$ refers to Dirac delta function. 
Partitioning the overall dataset into $n_r$ subsets, each composed of data with a single label, allows for the individual training of $\bb_i(t,\by)$. This scenario turns into unconditional generation, temporarily setting aside theoretical analysis in this paper.
\end{remark}

With these assumptions, we derive our first primary result, elucidating the convergence rate of the  TV distance between the generated and target distributions in an end-to-end manner, as shown in the following theorem.
\begin{theorem}[End-to-End Convergence Rate]\label{th: end_to_end_convergence}
Suppose that Assumptions \ref{ass: bounded_support}-\ref{ass: smoothness_y} hold and the drift estimator $\wh{\bs}$  defined in \eqref{eq: edrift} is constructed as introduced in Theorem \ref{th: drift_estimation}. Let $m > n^{\frac{d_{\mcX} + d_{\mcY} + 5}{ d_{\mcX} + d_{\mcY} + 3}}$. By choosing $\max_{k=0,\cdots,N-1}|t_{k+1} - t_k| = \mathcal{O}\Big(n^{-\frac{6}{(d_{\mcY} + 3)(d_{\mcX} + d_{\mcY} + 3)}}\Big)$, $T_0 = \mathcal{O}\left(n^{-\frac{1}{(d_\mcY + 3)(d_\mcX + d_{\mcY} + 3)}}\right)$,
and $T = \mathcal{O}(\log n)$,  we have 
\begin{equation*}
    \Ebb_{\mcD,\mcT,\mcZ}\left[\mathrm{TV}(\ck{p}_{T_0}, p_0)\right]= \widetilde{\mathcal{O}}\Big(n^{-\frac{1}{2(d_\mcY + 3)(d_\mcX + d_{\mcY} + 3)}}\Big).
\end{equation*}
\end{theorem}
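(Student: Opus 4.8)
The plan is to split $\mathrm{TV}(\ck p_{T_0}, p_0)$ by the triangle inequality into an \emph{early-stopping} term $\mathrm{TV}(p_{T_0}, p_0)$, which is deterministic and so untouched by the expectation over $(\mcD,\mcT,\mcZ)$, and a \emph{generation} term $\mathrm{TV}(\ck p_{T_0}, p_{T_0})$, and then to bound each piece and optimize the free parameters $T_0$, $T$, and the mesh size. Here $p_{T_0}(\cdot\mid\bx)$ denotes the forward marginal at time $T_0$, which is also the terminal marginal of the exact reverse SDE \eqref{sde: forward_cond_OU}.

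For the early-stopping term, note that $\ov\bY_{T_0}\mid(\ov\bY_0,\bx)\sim\mcN(e^{-T_0}\ov\bY_0,(1-e^{-2T_0})\mI_{d_\mcY})$, so $p_{T_0}(\cdot\mid\bx)$ is a rescaled Gaussian smoothing of $p_0(\cdot\mid\bx)$ at noise level $\sqrt{1-e^{-2T_0}}=\mathcal{O}(\sqrt{T_0})$. Combining Assumption \ref{ass: bounded_support} (so that, even after $\mathcal{O}(\sqrt{T_0\log(1/T_0)})$ of Gaussian spreading, essentially all mass stays in a bounded region) with the $L$-Lipschitz bound of Assumption \ref{ass: Lip_target}, I would show $\big|p_{T_0}(\by\mid\bx)-p_0(\by\mid\bx)\big|\lesssim L\sqrt{T_0}$ up to a tail of negligible mass, hence $\mathrm{TV}(p_{T_0},p_0)=\mathcal{O}(\sqrt{T_0})$. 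With $T_0=\mathcal{O}\big(n^{-1/((d_\mcY+3)(d_\mcX+d_\mcY+3))}\big)$ this already delivers exactly the claimed order $\widetilde{\mathcal{O}}\big(n^{-1/(2(d_\mcY+3)(d_\mcX+d_\mcY+3))}\big)$, so the remaining work is to verify the generation term is no larger.

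For the generation term I would run a Girsanov change of measure plus Pinsker's inequality: comparing the path law of the exact reverse SDE \eqref{sde: forward_cond_OU} started from $p_T(\cdot\mid\bx)$ with that of the EM scheme \eqref{sde: distri_replace} started from $\mcN(\m0,\mI_{d_\mcY})$ with $\wh\bs$, the KL chain rule gives $\mathrm{TV}(\ck p_{T_0},p_{T_0})^2\lesssim \mathrm{KL}\big(p_T(\cdot\mid\bx)\,\big\|\,\mcN(\m0,\mI_{d_\mcY})\big)+\int_{T_0}^{T}\Ebb_{\ov\bY_t,\bx}\big\|\wh\bs(t,\ov\bY_t,\bx)-\bb(t,\ov\bY_t,\bx)\big\|^2\mrd t+\mathcal E_{\mathrm{disc}}$. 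The first term is $\mathcal{O}(e^{-2T})$ by the exponential contraction of the OU semigroup and boundedness of the support, hence negligible for $T=\mathcal{O}(\log n)$; the middle term is, up to the $1/(T-T_0)$ normalization, precisely $\mcL(\wh\bs)$, whose expectation $\Ebb_{\mcD,\mcT,\mcZ}[\mcL(\wh\bs)]$ is controlled by Theorem \ref{th: drift_estimation} — and this is where the hypothesis $m>n^{(d_\mcX+d_\mcY+5)/(d_\mcX+d_\mcY+3)}$ enters, ensuring the Monte-Carlo-in-time error of the empirical risk is dominated by its statistical error; the discretization remainder $\mathcal E_{\mathrm{disc}}$ I would bound using the Lipschitz regularity of $\bb$ in time and in $\by$ (with the $1/(1-e^{-2t})$ blow-up near $t=T_0$) together with standard second-moment estimates for $\|\wt\bY_t-\wt\bY_{t_k}\|$, obtaining a bound of order $\max_k|t_{k+1}-t_k|$ times a $\mathrm{poly}(1/T_0,\log n)$ factor. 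Pinsker together with Jensen's inequality then turns all of this into a bound on $\Ebb_{\mcD,\mcT,\mcZ}[\mathrm{TV}(\ck p_{T_0},p_{T_0})]$.

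To finish, I would substitute the stated scalings of $\max_k|t_{k+1}-t_k|$, $T_0$, and $T$ and check term by term that the initialization, score-estimation, and discretization contributions are each $\widetilde{\mathcal{O}}$ of $\sqrt{T_0}$, so the early-stopping term dominates and the total is $\widetilde{\mathcal{O}}\big(n^{-1/(2(d_\mcY+3)(d_\mcX+d_\mcY+3))}\big)$. I expect the main obstacle to be the behavior near $t=T_0$: there the conditional score, the Lipschitz constants of $\bb$, and the score-estimation rate all carry inverse powers of $1-e^{-2T_0}$, so the delicate bookkeeping is to propagate these $T_0^{-\mathrm{power}}$ factors through both the Girsanov bound and the discretization bound and confirm that the prescribed polynomial mesh size and the prescribed $T_0$ keep every term balanced — equivalently, that the genuinely new ingredient, the statistical score-estimation rate from Theorem \ref{th: drift_estimation}, does not overwhelm the $\sqrt{T_0}$ early-stopping error.
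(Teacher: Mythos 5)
Your proposal is correct and follows essentially the same route as the paper: early-stopping bound for $\mathrm{TV}(p_{T_0},p_0)$ via Gaussian smoothing under Assumptions \ref{ass: bounded_support}--\ref{ass: Lip_target}, Girsanov plus Pinsker with the drift-estimation rate of Theorem \ref{th: drift_estimation} (where the condition on $m$ enters) and EM discretization remainders for the generation error, OU contraction $e^{-T}$ for the prior replacement, and the same parameter bookkeeping showing every term is $\widetilde{\mathcal{O}}(\sqrt{T_0})$. The only cosmetic difference is that you absorb the prior-replacement error into the KL chain rule over path measures, whereas the paper isolates it by a three-way split through the intermediate law $\wt{p}_{T_0}$ (EM scheme started from the true $p_T$) and the data processing inequality; the two treatments are equivalent.
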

\begin{remark}
In Theorem \ref{th: end_to_end_convergence}, we establish a rigorous analysis of the TV distance error bound associated with the learning distribution. This analysis is accomplished through the integration of score estimation, early stopping techniques, and numerical analysis of SDE. This result represents an end-to-end framework that aligns seamlessly with the established theoretical paradigm pertaining to diffusion models \cite{oko2023diffusion,chen2023score,jiao2024latent}. Furthermore, our result exhibits a superior level of comprehensiveness compared to prior works that focused solely on accurate score estimation error \cite{chen2023improved,conforti2023score,lee2022convergence,lee2023convergence,benton2023linear,li2023towards,gao2023wasserstein}.
Recently,  \cite{fu2024unveil}  provided an end-to-end convergence rate for conditional diffusion models, which is deemed optimal due to the introduction of a more stringent assumption (Assumption 3.3) compared to our own. Specifically, Assumption 3.3 posits that the target conditional density can be expressed as $p_{0}(\by|\bx)=\exp(-C_1\|\by\|^2/2)\cdot f(\by,\bx)$, where  
$f(\by, \bx) \geq C_2 $  and  $f$ is H\"older continuous, with $ C_1, C_2 > 0$.  Essentially, this assumption facilitates a factor of 
$\frac{T}{\log \frac{1}{T_0}}$ in the error bound for their score estimation, as detailed in Theorems 4.1-4.2 of \cite{fu2024unveil}.  Contrastingly, 
in  Theorem \ref{th: drift_estimation}, the error bound for drift estimation incorporates a factor of $
\frac{T\xi^{d_\mcY}}{T_0^5}  
$.  If we adopt the same conditions as Assumption 3.3 of \cite{fu2024unveil}, we can reduce this factor in Theorem 
\ref{th: drift_estimation}, leading to an optimal convergence rate. We consider this a promising direction for future work.
\end{remark}

Next, we proceed to present our second main result. Herein, we establish the property of asymptotic normality, which paves the way for statistical inference in deep nonparametric regression.
\begin{theorem}[Asymptotic Normality]\label{th: weakcon}
Suppose that the conditions of Theorem \ref{th: end_to_end_convergence} hold and set $n\geq M^{(d_\mcY + 3)(d_\mcX + d_{\mcY} + 4)}$.
Then, 
$
\sqrt{M}\wh{S}_{\bx}^{-1}\left(\overline{\wh{Y}}_{\bx}-f_0(\bx)\right)
$
weakly converges to standard normal distribution $\mcN(0, \mI_{d_\mcY})$,
where
$
\overline{\wh{Y}}_{\bx}=\frac{1}{M}\sum_{i=1}^M \wh{Y}_{\bx,i}
$
and
$\wh{S}^2_{\bx}=\frac{1}{M-1}\sum_{i=1}^M \left(\wh{Y}_{\bx,i}-\overline{\wh{Y}}_{\bx}\right)\cdot
\left(\wh{Y}_{\bx,i}-\overline{\wh{Y}}_{\bx}\right)^{\top}
$.
\end{theorem}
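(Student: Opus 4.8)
The plan is to couple the generated sample with a genuine i.i.d.\ sample from the true conditional law, reduce the statement to the classical multivariate central limit theorem for the latter, and transfer the conclusion back by controlling the coupling discrepancy through the $\mathrm{TV}$ bound of Theorem \ref{th: end_to_end_convergence}. Write $\wh P_\bx := \wh P_{Y|X=\bx}$ for the generated conditional law $\ck{p}_{T_0}(\cdot\mid\bx)$ and $P_\bx := P_{Y|X=\bx}$ for the target; conditionally on the training data $(\mcD,\mcT,\mcZ)$ the draws $\wh Y_{\bx,1},\dots,\wh Y_{\bx,M}$ are i.i.d.\ from $\wh P_\bx$.

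First I would build the coupling: for each $i$ take $(\wh Y_{\bx,i},Y_{\bx,i})$ to be a maximal coupling of $\wh P_\bx$ and $P_\bx$, the $M$ pairs chosen independently over $i$. Then $\Pbb[\wh Y_{\bx,i}\neq Y_{\bx,i}\mid(\mcD,\mcT,\mcZ)]=\tau_n:=\mathrm{TV}(\wh P_\bx,P_\bx)$, while $Y_{\bx,1},\dots,Y_{\bx,M}$ are i.i.d.\ from $P_\bx$, hence take values in $[0,1]^{d_\mcY}$ by Assumption \ref{ass: bounded_support} with $\Ebb[Y_{\bx,i}]=f_0(\bx)$ by \eqref{condexp}. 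Setting $\ov Y_\bx:=\tfrac1M\sum_i Y_{\bx,i}$, $S^2_\bx:=\tfrac1{M-1}\sum_i(Y_{\bx,i}-\ov Y_\bx)^{\otimes2}$ and $\Sigma_\bx:=\mathrm{Cov}_{P_\bx}(Y)$ (which is positive definite under a mild non-degeneracy condition on the conditional law, implicit in the appearance of $\wh S_\bx^{-1}$), the classical multivariate Lindeberg--L\'evy CLT and the SLLN give $\sqrt M(\ov Y_\bx-f_0(\bx))\Rightarrow\mcN(\m0,\Sigma_\bx)$ and $S^2_\bx\to\Sigma_\bx$ in probability.

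The crux is to show that replacing $\{Y_{\bx,i}\}$ by $\{\wh Y_{\bx,i}\}$ costs nothing asymptotically, i.e.\ $\sqrt M\,\Vert\ov{\wh Y}_\bx-\ov Y_\bx\Vert\to0$ and $\Vert\wh S^2_\bx-S^2_\bx\Vert\to0$ in probability. Every summand of these differences vanishes off the mismatch event $\{\wh Y_{\bx,i}\neq Y_{\bx,i}\}$, so both are controlled once one bounds $\Ebb\big[\mathbf{1}\{\wh Y_{\bx,1}\neq Y_{\bx,1}\}(1+\Vert\wh Y_{\bx,1}\Vert^2)\mid(\mcD,\mcT,\mcZ)\big]$. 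For this I would use two facts: (i) a uniform higher-moment bound $\sup_n\Ebb_{\wh P_\bx}\Vert\wh Y\Vert^{\,d_\mcX+d_\mcY+4}=\widetilde{\mathcal{O}}(1)$, which follows from the boundedness of the score network (Definition \ref{relufnns}) together with a tail (confinement) estimate for the Euler--Maruyama dynamics \eqref{sde: distri_replace}; and (ii) the fact that, since $P_\bx$ is supported in $[0,1]^{d_\mcY}$, the definition of $\mathrm{TV}$ forces $\wh P_\bx$ to place mass at most $\tau_n$ outside any fixed ball containing $[0,1]^{d_\mcY}$. Splitting the relevant integrals at such a ball and applying H\"older's inequality with exponent $d_\mcX+d_\mcY+4$ then yields $\Ebb\big[\mathbf{1}\{\wh Y_{\bx,1}\neq Y_{\bx,1}\}\Vert\wh Y_{\bx,1}\Vert\mid(\mcD,\mcT,\mcZ)\big]=\widetilde{\mathcal{O}}\big(\tau_n^{\,1-1/(d_\mcX+d_\mcY+4)}\big)$, and analogously for the square. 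Taking expectations over the training data, using concavity of $t\mapsto t^{1-1/(d_\mcX+d_\mcY+4)}$ and Theorem \ref{th: end_to_end_convergence} (so $\Ebb_{\mcD,\mcT,\mcZ}[\tau_n]=\widetilde{\mathcal{O}}(n^{-\frac{1}{2(d_\mcY+3)(d_\mcX+d_\mcY+3)}})$), one checks that the hypothesis $n\ge M^{(d_\mcY+3)(d_\mcX+d_\mcY+4)}$ is calibrated precisely so that $\sqrt M\,\tau_n^{1-1/(d_\mcX+d_\mcY+4)}\to0$; hence $\sqrt M\Vert\ov{\wh Y}_\bx-\ov Y_\bx\Vert\to0$ in probability, and (since $d_\mcX+d_\mcY+4>2$) also $\Vert\wh S^2_\bx-S^2_\bx\Vert\to0$.

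Assembling the pieces: Slutsky's theorem applied to $\sqrt M(\ov Y_\bx-f_0(\bx))\Rightarrow\mcN(\m0,\Sigma_\bx)$ and $\sqrt M\Vert\ov{\wh Y}_\bx-\ov Y_\bx\Vert\to0$ gives $\sqrt M(\ov{\wh Y}_\bx-f_0(\bx))\Rightarrow\mcN(\m0,\Sigma_\bx)$; combining $S^2_\bx\to\Sigma_\bx$ with $\wh S^2_\bx-S^2_\bx\to0$ gives $\wh S^2_\bx\to\Sigma_\bx\succ\m0$, so $\wh S_\bx^{-1}\to\Sigma_\bx^{-1/2}$ in probability, and a final application of Slutsky yields $\sqrt M\,\wh S_\bx^{-1}(\ov{\wh Y}_\bx-f_0(\bx))\Rightarrow\mcN(\m0,\mI_{d_\mcY})$. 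I expect the main obstacle to be ingredients (i)--(ii): converting the \emph{total-variation} control of Theorem \ref{th: end_to_end_convergence} into a bound on the \emph{first two moments} of the generated law at a rate that survives multiplication by $\sqrt M$ — a crude Cauchy--Schwarz bound loses a square root and is not enough, which is exactly why one needs a genuine tail/confinement estimate for the sampling SDE \eqref{sde: distri_replace} and the polynomially large sample size $n\ge M^{(d_\mcY+3)(d_\mcX+d_\mcY+4)}$ rather than merely $n\to\infty$.
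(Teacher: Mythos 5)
Your route is genuinely different from the paper's: you couple each generated draw with a draw from the true conditional law via a maximal coupling, run the CLT on the true sample, and transfer; the paper instead centers at $\wh f(\bx):=\Ebb_{\wh Y_{\bx}\sim\wh P_{Y|X=\bx}}\wh Y_{\bx}$, applies the CLT directly to the generated i.i.d.\ sample (so $\sqrt{M}\wh S_{\bx}^{-1}(\overline{\wh Y}_{\bx}-\wh f(\bx))$ is asymptotically standard normal), and then kills the bias term by $\Ebb_{\mcD,\mcT,\mcZ}|\wh f(\bx)-f_0(\bx)|\lesssim\Ebb_{\mcD,\mcT,\mcZ}[\mathrm{TV}(\wh P_{Y|X=\bx},P_{Y|X=\bx})]$ together with the condition on $n$ versus $M$. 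However, your transfer step has a genuine gap, located exactly where you yourself flag the main obstacle. First, the uniform moment bound $\sup_n\Ebb_{\wh P_{Y|X=\bx}}\Vert\wh Y\Vert^{d_\mcX+d_\mcY+4}=\widetilde{\mathcal O}(1)$ is asserted, not proved: the only estimate that the boundedness $K$ of the network yields for the EM output is $\Vert\ck\bY_{T-T_0}\Vert\le\Vert\ck\bY_0\Vert+K(T-T_0)+\sqrt2\,\Vert\bB_{T-T_0}\Vert$, and under the choices of Theorem \ref{th: end_to_end_convergence} one has $K\asymp\sqrt{\log(\cdot)}/T_0$ with $T_0=n^{-1/((d_\mcY+3)(d_\mcX+d_\mcY+3))}$, so this moment constant grows polynomially in $n$ rather than being $\widetilde{\mathcal O}(1)$; the TV bound cannot repair it, because it only says the mass of $\wh P_{Y|X=\bx}$ outside $[0,1]^{d_\mcY}$ is at most $\tau_n$, not how far out that mass sits.

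Second, and independently of the moment issue, your calibration check fails quantitatively. With H\"older exponent $1-1/p$, $p=d_\mcX+d_\mcY+4$, and $\Ebb_{\mcD,\mcT,\mcZ}[\tau_n]=\widetilde{\mathcal O}\big(n^{-\frac{1}{2(d_\mcY+3)(d_\mcX+d_\mcY+3)}}\big)$, the hypothesis $n\ge M^{(d_\mcY+3)(d_\mcX+d_\mcY+4)}$ gives $(\Ebb_{\mcD,\mcT,\mcZ}[\tau_n])^{1-1/p}=\widetilde{\mathcal O}(M^{-1/2})$, hence $\sqrt M\,(\Ebb_{\mcD,\mcT,\mcZ}[\tau_n])^{1-1/p}=\widetilde{\mathcal O}(1)$: exactly borderline, not $o(1)$, and possibly divergent once the hidden polylogarithmic factors are accounted for. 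The loss of exponent $1/(d_\mcX+d_\mcY+4)$ consumes precisely the margin $\frac{1}{2(d_\mcX+d_\mcY+3)}$ that the hypothesis provides when no H\"older loss is incurred (which is how the paper uses it: $\sqrt M\,\Ebb_{\mcD,\mcT,\mcZ}[\tau_n]\lesssim M^{-\frac{1}{2(d_\mcX+d_\mcY+3)}}\to0$). So the key step $\sqrt M\Vert\overline{\wh Y}_{\bx}-\ov Y_{\bx}\Vert\to0$ is not established as written; the variance comparison is less delicate since it carries no $\sqrt M$ factor. To salvage your approach you would need generated samples that are (essentially) bounded, or sub-Gaussian with $\widetilde{\mathcal O}(1)$ parameters, so the H\"older exponent can be pushed arbitrarily close to $1$, or else a strictly stronger relation between $n$ and $M$ than the one in the statement.
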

\begin{remark}
In Theorem \ref{th: weakcon}, we establish the asymptotic normality, providing valuable insights into statistical inference within deep nonparametric regression. This achievement constitutes a significant advancement, effectively bridging a notable gap within the existing theoretical landscape concerning deep nonparametric regression.  Consequently, it contributes to a deeper understanding of the statistical properties inherent in deep nonparametric regression.
\end{remark}

\section{Proof Sketch}\label{sec:prs}
In this section,  we provide a detailed sketch of the proof for   Theorem 
\ref{th: end_to_end_convergence}.
For any fixed $\bx\in [0,B_\mcX]^{d_\mcX}$, we denote the distributions of $\bY_t$, $\wh{\bY}_t$, $\wt{\bY}_t$, $\ck{\bY}_t$ as $p_{T-t}$, $\wh{p}_{T-t}$, $\wt{p}_{T-t}$, $\ck{p}_{T-t}$ respectively.  
Using the triangle inequality, we have
\begin{align}\label{eq1}
\Ebb_{\mcD,\mcT,\mcZ}[\mathrm{TV}(\ck{p}_{T_0}, p_0)]\leq \Ebb_{\mcD,\mcT,\mcZ}[\mathrm{TV}(\ck{p}_{T_0},\wt{p}_{T_0})] + \Ebb_{\mcD,\mcT,\mcZ}[\mathrm{TV}(\wt{p}_{T_0},p_{T_0})] + \mathrm{TV}(p_{T_0}, p_0).
\end{align}
To bound $\Ebb_{\mcD,\mcT,\mcZ}[\mathrm{TV}(\ck{p}_{T_0}, p_0)]$, we subsequently focus on bounding each of the three terms on the right-hand side of \eqref{eq1} separately in the following Sections \ref{sec:b1}-\ref{sec:b3}.
As a preliminary step,  we leverage Assumption \ref{ass: bounded_support} to establish the Lipschitz continuity of   the drift term $\bb(t,\by,\bx)$ w.r.t. $\by$.
 This foundational analysis provides a crucial basis for our subsequent investigations into the various components that contribute to the desired TV error bound.
\begin{lemma}\label{lem: smooth_lipschitz}
    Suppose that Assumption \ref{ass: bounded_support} holds, and let $T_0 < \frac{\log 2}{2}$. Then, the drift term $\bb(t,\by,\bx)$ is $\xi$-Lipschitz continuous w.r.t. $\by$ on $[T_0, T]\times\Rbb^{d_\mcY}\times[0,B_{\mathcal{X}}]^{d_{\mathcal{X}}}$, where $\xi\leq\frac{2d_\mcY}{T_0^2}$.
\end{lemma}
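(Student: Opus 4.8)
\textbf{Proof proposal for Lemma \ref{lem: smooth_lipschitz}.}

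The plan is to compute $\nabla_{\by} \bb(t,\by,\bx) = \mI_{d_\mcY} + 2\nabla_{\by}^2 \log p_t(\by|\bx)$ and control the operator norm of the Hessian $\nabla_{\by}^2 \log p_t(\by|\bx)$ uniformly over $(t,\by,\bx) \in [T_0,T]\times\Rbb^{d_\mcY}\times[0,B_{\mcX}]^{d_{\mcX}}$. The key structural fact is that, by Assumption \ref{ass: bounded_support}, conditionally on $\bx$ the density $p_t(\cdot|\bx)$ is the law of $\ov{\bY}_t = e^{-t}\ov{\bY}_0 + \sqrt{1-e^{-2t}}\,\bZ$ where $\ov{\bY}_0 \sim p_0(\cdot|\bx)$ is supported on $[0,1]^{d_\mcY}$ and $\bZ \sim \mcN(\m0,\mI_{d_\mcY})$ is independent. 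Hence $p_t(\by|\bx)$ is a Gaussian convolution: writing $\sigma_t^2 := 1-e^{-2t}$ and $\varphi_{\sigma_t^2}$ for the centered Gaussian density with covariance $\sigma_t^2\mI_{d_\mcY}$, we have $p_t(\by|\bx) = \int_{[0,1]^{d_\mcY}} \varphi_{\sigma_t^2}(\by - e^{-t}\ba)\, p_0(\ba|\bx)\,\mrd\ba$. This is exactly the setup where the score and its derivative admit the standard posterior-mean representation.

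The core computation is the Tweedie/posterior identities. Define the posterior distribution $\mu_{t,\by,\bx}(\mrd\ba) \propto \varphi_{\sigma_t^2}(\by - e^{-t}\ba)\, p_0(\ba|\bx)\,\mrd\ba$ over $\ba \in [0,1]^{d_\mcY}$. Then
\begin{equation*}
\nabla_{\by}\log p_t(\by|\bx) = -\frac{1}{\sigma_t^2}\Big(\by - e^{-t}\,\Ebb_{\mu_{t,\by,\bx}}[\ba]\Big),
\end{equation*}
and differentiating once more,
\begin{equation*}
\nabla_{\by}^2 \log p_t(\by|\bx) = -\frac{1}{\sigma_t^2}\mI_{d_\mcY} + \frac{e^{-2t}}{\sigma_t^4}\,\mathrm{Cov}_{\mu_{t,\by,\bx}}(\ba).
\end{equation*}
Since $\ba$ ranges over $[0,1]^{d_\mcY}$, each coordinate of $\ba$ has variance at most $1$ (a bounded random variable on an interval of length $1$ has variance $\le 1/4$, but the crude bound $\le 1$ suffices), so $\mathrm{Cov}_{\mu_{t,\by,\bx}}(\ba) \preccurlyeq \mI_{d_\mcY}$ and its operator norm is at most $d_\mcY$ (in fact at most its trace, which is $\le d_\mcY$). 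Therefore
\begin{equation*}
\big\|\nabla_{\by}^2 \log p_t(\by|\bx)\big\| \le \frac{1}{\sigma_t^2} + \frac{e^{-2t}}{\sigma_t^4}\, d_\mcY \le \frac{1}{\sigma_t^2} + \frac{d_\mcY}{\sigma_t^4},
\end{equation*}
and consequently $\|\nabla_{\by}\bb(t,\by,\bx)\| \le 1 + \frac{2}{\sigma_t^2} + \frac{2d_\mcY}{\sigma_t^4} =: \xi$. It remains to simplify using $t \ge T_0$ and $T_0 < \frac{\log 2}{2}$: the latter gives $e^{-2T_0} > \tfrac12$, hence $\sigma_t^2 = 1-e^{-2t} \ge 1-e^{-2T_0}$, but more usefully the elementary bound $1-e^{-2t} \ge t$ for $t \in [0, \frac{\log 2}{2}]$ (since $1-e^{-2t}$ is concave, lies above its chord from $(0,0)$ to $(\frac{\log 2}{2}, \frac12)$... actually cleaner: $1-e^{-2t}\ge t$ holds for $t\in[0,\approx 0.797]$, which contains $[0,\frac{\log2}{2}]$ as $\frac{\log 2}{2}\approx 0.347$). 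With $\sigma_t^2 \ge T_0$ and $T_0 \le \frac{\log 2}{2} < 1$, we get $\frac{1}{\sigma_t^4} \ge \frac{1}{\sigma_t^2}$, and bounding crudely, $\xi \le 1 + \frac{2}{T_0^2} + \frac{2d_\mcY}{T_0^2}$; since $T_0 < 1$ and $d_\mcY \ge 1$ one absorbs everything into $\frac{2d_\mcY}{T_0^2}$ after checking the constants, yielding $\xi \le \frac{2d_\mcY}{T_0^2}$ as claimed. (A little care with the constant juggling is needed here, possibly using $1-e^{-2t}\ge t(1-e^{-2T_0})/T_0 \cdot$ type refinements or simply a slightly more generous definition of $\xi$ absorbed into the $\mcO$; the stated bound $\xi \le 2d_\mcY/T_0^2$ should come out after one tightens $\sigma_t^{-2}$.)

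The main obstacle I anticipate is not the conceptual structure — the Gaussian-convolution/posterior-covariance argument is standard — but rather the bookkeeping to land exactly on the clean constant $\xi \le \frac{2d_\mcY}{T_0^2}$ rather than something like $\frac{C d_\mcY}{T_0^2}$ or a bound involving $T_0^{-4}$. The $T_0^{-4}$ term from $\frac{e^{-2t}}{\sigma_t^4}\mathrm{Cov}$ is the dangerous one; to kill the extra power of $T_0^{-2}$ one must exploit that the posterior covariance is itself small when $\sigma_t^2$ is small — indeed a sharper bound is $\mathrm{Cov}_{\mu_{t,\by,\bx}}(\ba) \preccurlyeq \frac{\sigma_t^2}{e^{-2t}}\mI_{d_\mcY}$ (this follows from the Brascamp–Lieb / Cramér–Rao type inequality for log-concave-perturbed Gaussians, or directly since the posterior is a Gaussian restricted/tilted to $[0,1]^{d_\mcY}$ and such a restriction only decreases covariance: $\mathrm{Cov} \preccurlyeq \sigma_t^2 e^{2t}\,\mI_{d_\mcY}$ is the pre-restriction covariance). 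Plugging $\|\mathrm{Cov}\| \le \min\{d_\mcY,\ \sigma_t^2 e^{2t}\}$ into the Hessian bound gives $\frac{e^{-2t}}{\sigma_t^4}\|\mathrm{Cov}\| \le \frac{1}{\sigma_t^2}$, so in fact $\|\nabla_{\by}^2\log p_t(\by|\bx)\| \le \frac{2}{\sigma_t^2}$ and $\xi \le 1 + \frac{4}{\sigma_t^2} \le 1 + \frac{4}{T_0} \le \frac{2d_\mcY}{T_0^2}$ for $T_0$ small. That is the route I would actually write up: establish the posterior-covariance-vs-Gaussian-covariance comparison first, then the two bounds on $\mathrm{Cov}$, then assemble.
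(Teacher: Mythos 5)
Your structural computation is exactly the paper's: both you and the paper express the Jacobian as $\nabla_{\by}\bb(t,\by,\bx)=\bigl(1-\tfrac{2}{1-e^{-2t}}\bigr)\mI_{d_\mcY}+\tfrac{2e^{-2t}}{(1-e^{-2t})^2}\mathrm{Cov}[\ov{\bY}_0\mid\by,\bx]$ via the posterior (Tweedie) representation and bound the posterior covariance by $d_\mcY\mI_{d_\mcY}$ from the bounded support. (Minor slip: $\mathrm{Cov}\preccurlyeq\mI_{d_\mcY}$ is not true for a general $[0,1]^{d_\mcY}$-valued vector — e.g.\ mass split between $\m0$ and $(1,\dots,1)$ has $\Vert\mathrm{Cov}\Vert=d_\mcY/4>1$ — but the trace bound $\Vert\mathrm{Cov}\Vert\le d_\mcY$ that you and the paper actually use is fine.) The genuine gap is in how you reach the stated constant $\xi\le 2d_\mcY/T_0^2$. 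Your first route, taking norms termwise, only gives roughly $1+\tfrac{2}{\sigma_t^2}+\tfrac{2d_\mcY}{\sigma_t^4}\le (2d_\mcY+3)/T_0^2$, which cannot be ``absorbed'' into $2d_\mcY/T_0^2$. And the sharper bound you say you would actually write up, $\mathrm{Cov}_{\mu_{t,\by,\bx}}(\ba)\preccurlyeq \sigma_t^2e^{2t}\mI_{d_\mcY}$ because ``restriction/tilting of a Gaussian decreases covariance,'' is false under the paper's assumptions: that comparison (Brascamp--Lieb) needs the tilting factor $p_0(\cdot|\bx)$ to be log-concave, whereas Assumption \ref{ass: bounded_support} only gives bounded support (and Lemma \ref{lem: smooth_lipschitz} does not even assume Lipschitzness of $p_0$). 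Concretely, for $d_\mcY=1$ take $p_0(\cdot|x)$ to be two bumps near $0$ and $1$ and $\by=e^{-t}/2$: the posterior is essentially an even mixture of the two bumps, with variance of order $1/4$, while $\sigma_t^2e^{2t}=e^{2t}-1\approx 2T_0$ can be made arbitrarily small. So that step fails.

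The constant comes out without any refined covariance bound if you keep the two-sided PSD sandwich instead of the triangle inequality — this is the paper's argument. Since $1-\tfrac{2}{1-e^{-2t}}<0$, the upper bound is $\nabla_{\by}\bb\preccurlyeq\tfrac{2d_\mcY e^{-2t}}{(1-e^{-2t})^2}\mI_{d_\mcY}\preccurlyeq\tfrac{2d_\mcY}{(1-e^{-2t})^2}\mI_{d_\mcY}$ (the negative scalar term only helps), while the lower bound $\nabla_{\by}\bb\succcurlyeq\bigl(1-\tfrac{2}{1-e^{-2t}}\bigr)\mI_{d_\mcY}\succcurlyeq-\tfrac{2}{(1-e^{-2t})^2}\mI_{d_\mcY}$ is also dominated by $\tfrac{2d_\mcY}{(1-e^{-2t})^2}$ in absolute value (using $1-e^{-2t}\le 1$ and $d_\mcY\ge1$). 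Hence $\Vert\nabla_{\by}\bb\Vert\le\tfrac{2d_\mcY}{(1-e^{-2T_0})^2}\le\tfrac{2d_\mcY}{T_0^2}$, the last step using $1-e^{-2t}\ge t$ on $[0,\tfrac{\log 2}{2}]$, which is precisely where the hypothesis $T_0<\tfrac{\log 2}{2}$ enters (as you correctly sensed). With that replacement your argument coincides with the paper's proof.
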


\subsection{Bound $\Ebb_{\mcD,\mcT,\mcZ}[\mathrm{TV}(\ck{p}_{T_0},\wt{p}_{T_0})]$}\label{sec:b1}
To bound $\Ebb_{\mcD,\mcT,\mcZ}[\mathrm{TV}(\ck{p}_{T_0},\wt{p}_{T_0})]$,
we first introduce a lemma which describes the KL divergence between two multivariate normal distributions.
\begin{lemma}\label{lem: KL_2_normal}
 Let $p(\by)=\mcN(\boldsymbol{\mu}_1,\boldsymbol{\Sigma}_1)$, $q(\by)=\mcN(\boldsymbol{\mu}_2,\boldsymbol{\Sigma}_2)$, then the $\mathrm{KL}$ divergence between $p$ and $q$ satisfies
$$
    \mathrm{KL}(p|q) = \frac{1}{2}\left[
    (\boldsymbol{\mu}_1-\boldsymbol{\mu}_2)^{\top}\boldsymbol{\Sigma}_2^{-1}(\boldsymbol{\mu}_1 - \boldsymbol{\mu}_2) - \log\left|\boldsymbol{\Sigma}_2^{-1}\boldsymbol{\Sigma}_1\right| + \mathrm{Tr}(\boldsymbol{\Sigma}_2^{-1}\boldsymbol{\Sigma}_1) - d_\mcY
    \right].
    $$
    In particular, when $q(\by)=\mcN(\m0,\mI_{d_\mcY})$, the result  simplifies to
    $$
    \mathrm{KL}(p|q) = \frac{1}{2}\left[\Vert\boldsymbol{\mu}_1\Vert^2-\log|\boldsymbol{\Sigma}_1| + \mathrm{Tr}(\boldsymbol{\Sigma}_1) - d_\mcY\right].
    $$
\end{lemma}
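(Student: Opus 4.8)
The plan is to expand the Kullback--Leibler divergence directly from its definition and then evaluate the resulting Gaussian moments in closed form. Writing $\mathrm{KL}(p|q) = \Ebb_{\by\sim p}\left[\log p(\by) - \log q(\by)\right]$ and substituting the densities $p(\by) = (2\pi)^{-d_\mcY/2}|\boldsymbol{\Sigma}_1|^{-1/2}\exp\!\big(-\tfrac12(\by-\boldsymbol{\mu}_1)^{\top}\boldsymbol{\Sigma}_1^{-1}(\by-\boldsymbol{\mu}_1)\big)$ and the analogous expression for $q$, the normalizing factors $(2\pi)^{-d_\mcY/2}$ cancel and one is left with
$$
\mathrm{KL}(p|q) = \tfrac12\log\frac{|\boldsymbol{\Sigma}_2|}{|\boldsymbol{\Sigma}_1|} - \tfrac12\,\Ebb_{\by\sim p}\!\left[(\by-\boldsymbol{\mu}_1)^{\top}\boldsymbol{\Sigma}_1^{-1}(\by-\boldsymbol{\mu}_1)\right] + \tfrac12\,\Ebb_{\by\sim p}\!\left[(\by-\boldsymbol{\mu}_2)^{\top}\boldsymbol{\Sigma}_2^{-1}(\by-\boldsymbol{\mu}_2)\right].
$$

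The remaining work is to evaluate the two quadratic-form expectations using the cyclic property of the trace. For the first, since $\Ebb_{\by\sim p}\big[(\by-\boldsymbol{\mu}_1)(\by-\boldsymbol{\mu}_1)^{\top}\big] = \boldsymbol{\Sigma}_1$, we get $\Ebb_{\by\sim p}\big[(\by-\boldsymbol{\mu}_1)^{\top}\boldsymbol{\Sigma}_1^{-1}(\by-\boldsymbol{\mu}_1)\big] = \mathrm{Tr}\big(\boldsymbol{\Sigma}_1^{-1}\boldsymbol{\Sigma}_1\big) = d_\mcY$. For the second term --- the only place any care is needed --- I would decompose $\by - \boldsymbol{\mu}_2 = (\by - \boldsymbol{\mu}_1) + (\boldsymbol{\mu}_1 - \boldsymbol{\mu}_2)$; expanding the quadratic form and taking the expectation, the cross term vanishes because $\Ebb_{\by\sim p}[\by-\boldsymbol{\mu}_1]=\m0$, the centered term contributes $\mathrm{Tr}(\boldsymbol{\Sigma}_2^{-1}\boldsymbol{\Sigma}_1)$ by the same trace argument, and the deterministic term contributes $(\boldsymbol{\mu}_1-\boldsymbol{\mu}_2)^{\top}\boldsymbol{\Sigma}_2^{-1}(\boldsymbol{\mu}_1-\boldsymbol{\mu}_2)$. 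Substituting back and rewriting $\log\frac{|\boldsymbol{\Sigma}_2|}{|\boldsymbol{\Sigma}_1|} = -\log|\boldsymbol{\Sigma}_2^{-1}\boldsymbol{\Sigma}_1|$ produces exactly the claimed identity.

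For the special case $q(\by) = \mcN(\m0,\mI_{d_\mcY})$, I would simply set $\boldsymbol{\mu}_2 = \m0$ and $\boldsymbol{\Sigma}_2 = \mI_{d_\mcY}$ in the general formula, so that $\boldsymbol{\Sigma}_2^{-1} = \mI_{d_\mcY}$, $\boldsymbol{\Sigma}_2^{-1}\boldsymbol{\Sigma}_1 = \boldsymbol{\Sigma}_1$, and the first quadratic term collapses to $\|\boldsymbol{\mu}_1\|^2$, giving the stated simplification at once. There is no substantive obstacle here: this is a standard closed-form computation, and the only things worth double-checking are the sign bookkeeping in the log-determinant term and the vanishing of the cross term in the second quadratic-form expectation.
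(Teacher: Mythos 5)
Your proposal is correct and follows essentially the same route as the paper: both expand $\mathrm{KL}(p|q)=\Ebb_p[\log p]-\Ebb_p[\log q]$ and evaluate the Gaussian quadratic-form expectations via the trace trick, differing only in bookkeeping (you split $\by-\boldsymbol{\mu}_2=(\by-\boldsymbol{\mu}_1)+(\boldsymbol{\mu}_1-\boldsymbol{\mu}_2)$ and compute $\Ebb_p[\log p]$ directly, while the paper expands the second moment of $\by-\boldsymbol{\mu}_2$ and recovers $\Ebb_p[\log p]$ by setting $q=p$ in the cross-entropy formula). No gaps.
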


According to Lemma \ref{lem: KL_2_normal} and Pinsker's inequality  $\mathrm{TV}^2(p,q)\leq\frac{1}{2}\mathrm{KL}(p|q)$, we can bound $\Ebb_{\mcD,\mcT,\mcZ}[\mathrm{TV}(\ck{p}_{T_0},\wt{p}_{T_0})]$. 
\begin{theorem}\label{th:TV_initial_distribution}
    Under Assumption \ref{ass: bounded_support}, we have
    $$
    \Ebb_{\mcD,\mcT,\mcZ}[\mathrm{TV}(\ck{p}_{T_0},\wt{p}_{T_0})]\lesssim e^{-T}.
    $$
\end{theorem}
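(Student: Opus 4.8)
\textbf{Proof proposal for Theorem \ref{th:TV_initial_distribution}.}

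The plan is to compare the two processes \eqref{sde: EM_scheme} and \eqref{sde: distri_replace}, which run the \emph{same} drift $\wh{\bs}(T-t_k,\cdot,\bx)$ with the \emph{same} Brownian motion but are started from different initial laws: $\wt{\bY}_0\sim p_T(\by|\bx)$ versus $\ck{\bY}_0\sim\mcN(\m0,\mI_{d_\mcY})$. The key observation is that, because the EM scheme with a fixed-in-$\by$ drift over each subinterval is an affine (indeed, the increments are deterministic linear maps of the current state plus independent Gaussian noise when the drift is linear) — more robustly, because the data-processing inequality holds for TV under any (random) map applied jointly — we have
$$
\mathrm{TV}(\ck{p}_{T_0},\wt{p}_{T_0})\leq \mathrm{TV}\!\left(\mcN(\m0,\mI_{d_\mcY}),\,p_T(\by|\bx)\right).
$$
Indeed, both $\ck{\bY}_{T_0}$ and $\wt{\bY}_{T_0}$ are obtained by applying the \emph{same} measurable map $\Phi$ (the composition of the $N$ EM steps, which can be realized as a deterministic function of the initial point and an independent noise vector) to their respective initial conditions; the data-processing inequality then gives the displayed bound, since coupling the shared noise shows $\mathrm{TV}(\mathrm{Law}(\Phi(\ck\bY_0,\xi)),\mathrm{Law}(\Phi(\wt\bY_0,\xi)))\le\mathrm{TV}(\mathrm{Law}(\ck\bY_0),\mathrm{Law}(\wt\bY_0))$. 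Note this bound has no dependence on $\mcD,\mcT,\mcZ$ since the initial laws do not depend on the training randomness, so the expectation $\Ebb_{\mcD,\mcT,\mcZ}$ is vacuous here.

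It then remains to bound $\mathrm{TV}(\mcN(\m0,\mI_{d_\mcY}),p_T(\by|\bx))$. By Pinsker's inequality this is $\leq\sqrt{\tfrac12\mathrm{KL}(p_T(\cdot|\bx)\,\|\,\mcN(\m0,\mI_{d_\mcY}))}$. To control this KL term I would \emph{not} use Lemma \ref{lem: KL_2_normal} directly on $p_T$ (which is not Gaussian in general), but rather the standard exponential-convergence estimate for the OU semigroup: starting from $p_0(\cdot|\bx)$ supported on $[0,1]^{d_\mcY}$ (Assumption \ref{ass: bounded_support}), the marginal $p_T(\cdot|\bx)$ is a mixture of Gaussians $\mcN(e^{-T}\ov\bY_0,(1-e^{-2T})\mI_{d_\mcY})$ over $\ov\bY_0\in[0,1]^{d_\mcY}$. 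By convexity of KL in its first argument, $\mathrm{KL}(p_T(\cdot|\bx)\|\mcN(\m0,\mI_{d_\mcY}))\leq \Ebb_{\ov\bY_0}\,\mathrm{KL}(\mcN(e^{-T}\ov\bY_0,(1-e^{-2T})\mI_{d_\mcY})\,\|\,\mcN(\m0,\mI_{d_\mcY}))$, and now Lemma \ref{lem: KL_2_normal} (the $q=\mcN(\m0,\mI)$ special case) applies to each summand with $\boldsymbol\mu_1=e^{-T}\ov\bY_0$, $\boldsymbol\Sigma_1=(1-e^{-2T})\mI_{d_\mcY}$. A short computation gives $\mathrm{KL}=\tfrac12\big[e^{-2T}\|\ov\bY_0\|^2 - d_\mcY\log(1-e^{-2T}) + d_\mcY(1-e^{-2T}) - d_\mcY\big] = \tfrac12\big[e^{-2T}\|\ov\bY_0\|^2 - d_\mcY\log(1-e^{-2T}) - d_\mcY e^{-2T}\big]$, and since $\|\ov\bY_0\|^2\le d_\mcY$ on the unit cube and $-\log(1-u)-u = O(u^2)$ for small $u$, this is $\mathcal{O}(d_\mcY e^{-2T})$ uniformly in $\ov\bY_0$ and $\bx$. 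Hence $\mathrm{TV}(\ck p_{T_0},\wt p_{T_0})\lesssim e^{-T}$, as claimed.

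The main obstacle — and the step needing the most care — is the first reduction: justifying that the two EM-generated laws at time $T_0$ differ in TV by no more than their initial laws do. The cleanest route is to fix the discretization grid and the driving Brownian increments, observe that conditionally on these the map from $\bY_0$ to $\wt\bY_{T_0}$ (resp. $\ck\bY_{T_0}$) is one fixed measurable function $\Phi(\cdot;\text{noise})$ applied to the two initial conditions, and invoke the data-processing inequality for TV together with the coupling that uses the \emph{same} noise for both chains; then average over the noise using joint convexity of TV. One should double-check that $\wh{\bs}$ is measurable (it is, being a DNN output) so that $\Phi$ is well-defined, and that nothing about the argument requires $\wh{\bs}$ to be Lipschitz — it does not, which is why no score-estimation error enters this particular bound. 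The remaining calculations (Pinsker, convexity of KL, the Gaussian KL formula) are routine.
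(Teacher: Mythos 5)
Your proposal is correct and follows essentially the same route as the paper: reduce via the data-processing inequality (same EM map, different initial laws) to $\mathrm{TV}(p_T(\cdot|\bx),\mcN(\m0,\mI_{d_\mcY}))$, then combine Pinsker's inequality with convexity of the KL divergence and the Gaussian KL formula of Lemma \ref{lem: KL_2_normal} applied to the mixture components $\mcN(e^{-T}\ov\by_0,(1-e^{-2T})\mI_{d_\mcY})$, using the bounded support to get $\mathrm{KL}\lesssim e^{-2T}$ and hence $\mathrm{TV}\lesssim e^{-T}$. Your extra care in justifying the data-processing step (shared-noise coupling, measurability of $\wh{\bs}$, vacuity of the expectation over $\mcD,\mcT,\mcZ$) only makes explicit what the paper leaves implicit.
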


\subsection{Bound $\mathrm{TV}(p_{T_0},p_0)$}\label{sec:b2}
Indeed, from the definition of $p_{T_0}$, it naturally ensues that $p_{T_0}$ converges to $p_{0}$ as $T_0$ approaches zero. Consequently, the TV distance $\mathrm{TV}(p_{T_0}, p_0)$ can be effectively controlled through the parameter $T_0$.
\begin{theorem}\label{th: TV_early_stopping}
    Under Assumption \ref{ass: bounded_support} and Assumption \ref{ass: Lip_target},  for any $\bx \in[0,B_\mcX]^{d_\mcX}$, we have 
    $$
    \mathrm{TV}(p_{T_0}, p_0)=\mathcal{O}\left(\sqrt{T_0}\log^{(d_\mcY + 1)/2}\frac{1}{T_0}\right).
    $$ 
\end{theorem}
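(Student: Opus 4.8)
The goal is to prove Theorem \ref{th: TV_early_stopping}: bounding $\mathrm{TV}(p_{T_0}, p_0)$ where $p_{T_0}$ is the distribution at time $T_0$ of the OU process started from $p_0(\cdot|\bx)$, which is supported on $[0,1]^{d_\mcY}$ and $L$-Lipschitz. The plan is to exploit the explicit form of the OU transition kernel: $p_{T_0}(\by|\bx) = \int p_0(\by_0|\bx)\, \varphi_{T_0}(\by - e^{-T_0}\by_0)\, \mrd\by_0$, where $\varphi_{T_0}$ is the density of $\mcN(\m0,(1-e^{-2T_0})\mI_{d_\mcY})$. Equivalently, if $\ov\bY_0 \sim p_0(\cdot|\bx)$ and $\bZ\sim\mcN(\m0,\mI_{d_\mcY})$ independent, then $\ov\bY_{T_0} = e^{-T_0}\ov\bY_0 + \sqrt{1-e^{-2T_0}}\,\bZ$.

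First I would use the coupling/triangle-inequality decomposition: compare $\mathrm{Law}(\ov\bY_{T_0})$ with $\mathrm{Law}(e^{-T_0}\ov\bY_0)$ and then $\mathrm{Law}(e^{-T_0}\ov\bY_0)$ with $\mathrm{Law}(\ov\bY_0) = p_0$. The second comparison is a pure rescaling: since $p_0$ is $L$-Lipschitz and supported on $[0,1]^{d_\mcY}$, the TV distance between a density and its dilation by factor $e^{-T_0} \approx 1 - T_0$ is $\mathcal{O}(T_0)$ (the map $\by\mapsto e^{T_0}\by$ moves points by $\mathcal{O}(T_0)$ over a bounded region, and Lipschitzness converts this into an $L^1$ bound of order $T_0$, up to the Jacobian factor $e^{-d_\mcY T_0} = 1 + \mathcal{O}(T_0)$). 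This term is lower-order compared to $\sqrt{T_0}$, so it is not the bottleneck. The first comparison is the heart of the matter: I need to bound $\mathrm{TV}$ between a bounded-support Lipschitz density $\wt p_0$ (the law of $e^{-T_0}\ov\bY_0$, supported on $[0,e^{-T_0}]^{d_\mcY}$) and its Gaussian smoothing $\wt p_0 * \varphi_{\sigma}$ with $\sigma^2 = 1 - e^{-2T_0} \asymp 2T_0$.

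For this convolution-smoothing bound, the standard approach is: split $\mathbb{R}^{d_\mcY}$ into a region near the support (within distance $r$ of $[0,1]^{d_\mcY}$, say) and the far region. On the far region, $\wt p_0$ vanishes and $\wt p_0 * \varphi_\sigma$ has Gaussian tails, contributing $\mathcal{O}(\exp(-r^2/(2\sigma^2)))$ to the TV; choosing $r \asymp \sigma\sqrt{\log(1/\sigma)} \asymp \sqrt{T_0\log(1/T_0)}$ makes this negligible. On the near region (volume $\mathcal{O}((1+r)^{d_\mcY}) = \mathcal{O}(1)$ but more precisely we want a shell argument), one writes $|\wt p_0 * \varphi_\sigma(\by) - \wt p_0(\by)| = |\Ebb[\wt p_0(\by - \sigma \bZ) - \wt p_0(\by)]|$. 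Away from the boundary of the support, Lipschitzness gives $\leq L\sigma\Ebb\|\bZ\| = \mathcal{O}(\sigma\sqrt{d_\mcY})$, and integrating over the bulk (volume $\mathcal{O}(1)$) yields $\mathcal{O}(\sigma) = \mathcal{O}(\sqrt{T_0})$. Near the boundary — within a shell of width $\mathcal{O}(r)$ around $\partial[0,1]^{d_\mcY}$, of volume $\mathcal{O}(r \cdot d_\mcY)$ — the density can jump by $\mathcal{O}(1)$ (it's bounded but the one-sided smoothing error is order one there), contributing $\mathcal{O}(r) = \mathcal{O}(\sqrt{T_0\log(1/T_0)})$ to $\int|\wt p_0*\varphi_\sigma - \wt p_0|$.

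Thus the dominant term is the boundary-shell contribution of order $\sqrt{T_0}\cdot\sqrt{\log(1/T_0)}$ per coordinate direction, and with $d_\mcY$ directions and the need to iterate the shell/tail bound carefully one picks up the exponent $(d_\mcY+1)/2$ on the logarithm — essentially because controlling the shell requires $r \asymp \sqrt{T_0}\log^{1/2}(1/T_0)$ and the union over coordinate hyperplanes together with the $d_\mcY$-dimensional tail integral contributes the remaining $\log^{d_\mcY/2}$ factor. I expect the main obstacle to be the careful bookkeeping of this boundary-layer estimate: making the shell decomposition rigorous, tracking how the dimension enters both the Gaussian tail integral $\int_{\|\bz\|\geq r/\sigma}\varphi(\bz)\mrd\bz$ (which behaves like $(r/\sigma)^{d_\mcY-2}e^{-r^2/(2\sigma^2)}$) and the volume of the shell, and optimizing $r$ to land exactly on $\sqrt{T_0}\log^{(d_\mcY+1)/2}(1/T_0)$ rather than a slightly worse power of the logarithm. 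The rescaling term and the far-tail term are routine; the $d_\mcY$-dependent logarithmic exponent is the delicate part.
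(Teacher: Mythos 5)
Your route is sound and would deliver the stated bound, but it differs from the paper's argument. The paper works directly with the identity $p_0(\by|\bx)-p_{T_0}(\by|\bx)=\int\bigl(p_0(\by|\bx)-p_0(\by_0|\bx)\bigr)\frac{\mu_{T_0}^{d_\mcY}}{(2\pi\sigma_{T_0}^2)^{d_\mcY/2}}e^{-\Vert\by-\mu_{T_0}\by_0\Vert^2/(2\sigma_{T_0}^2)}\mrd\by_0+(\mu_{T_0}^{d_\mcY}-1)p_{T_0}(\by|\bx)$, uses the global Lipschitz assumption to get the pointwise bound $\lesssim\frac{1-\mu_{T_0}}{\mu_{T_0}}\Vert\by\Vert+\frac{\sigma_{T_0}}{\mu_{T_0}}$, and then integrates this over a crude truncation cube $[-C,C]^{d_\mcY}$ with $C\asymp\sqrt{\log(1/T_0)}$ chosen only from the Gaussian tail of $p_{T_0}$; the factor $\log^{d_\mcY/2}(1/T_0)$ (stated generously as $\log^{(d_\mcY+1)/2}$) is simply the volume of that cube times the $\sqrt{T_0}$ pointwise bound, not a boundary-layer effect. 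You instead split dilation from smoothing and localize to an $r$-neighborhood of the support with $r\asymp\sigma\sqrt{\log(1/\sigma)}$; carried out carefully this is actually sharper, since the near region then has volume $\mathcal{O}(1)$ and the convolution estimate gives $\mathcal{O}(\sqrt{T_0})$ (plus your pessimistic shell term $\mathcal{O}(\sqrt{T_0\log(1/T_0)})$), both of which are dominated by the claimed rate, so the theorem follows a fortiori. Two caveats: under Assumption \ref{ass: Lip_target} the density is Lipschitz on all of $\mathbb{R}^{d_\mcY}$, hence cannot jump at $\partial[0,1]^{d_\mcY}$ (the paper's remark makes this explicit), so your $\mathcal{O}(1)$-jump boundary shell is unnecessary (though harmless); and your heuristic that the exact exponent $(d_\mcY+1)/2$ "emerges" from a union over coordinate hyperplanes and the tail integral is not really how that exponent arises and need not be pursued --- since the statement is only an upper bound, landing below $\sqrt{T_0}\log^{(d_\mcY+1)/2}(1/T_0)$ is enough, and trying to reverse-engineer the paper's looser exponent from your tighter scheme is wasted effort.
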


\subsection{Bound $\Ebb_{\mcD,\mcT,\mcZ}[\mathrm{TV}(\widetilde{p}_{T_0}, p_{T_0})]$}\label{sec:b3}
Using Girsanov theorem and 
Pinsker's inequality,
$\Ebb_{\mcD,\mcT,\mcZ}[\mathrm{TV}(\widetilde{p}_{T_0}, p_{T_0})]$ is primarily determined by the drift estimation. Through error decomposition, this can be further bounded by controlling statistical and approximation errors.
To elucidate, we begin by exploring the relationship between the loss function
$\mathcal{L}(\bs)$ and the $L^2$ approximation error $\mathbb{E}_{\ov{\bY}_t,\bx}\Vert \bs(t,\ov{\bY}_t,\bx) - \bb(t,\ov{\bY}_t,\bx)\Vert^2$.
\begin{lemma}\label{lem: connection}
For any $\bs:[T_0, T]\times\mathbb{R}^{d_\mcY}\times[0,B_{\mathcal{X}}]^{d_{\mathcal{X}}}\rightarrow\mathbb{R}^{d_\mcY}$, we have
\begin{equation*}
    \mathcal{L}(\bs) - \mathcal{L}(\bb) = \frac{1}{T - T_0}\int_{T_0}^{T}\mathbb{E}_{\ov\bY_t,\bx}\Vert \bs(t,\overline{\bY}_t,\bx) - \bb(t,\overline{\bY}_t,\bx)\Vert^2\mrd t.
\end{equation*}
\end{lemma}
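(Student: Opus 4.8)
\textbf{Proof proposal for Lemma~\ref{lem: connection}.}

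The plan is to expand the squared norm inside the definition of $\mcL(\bs)$ and identify the cross term as a quantity that vanishes once we subtract $\mcL(\bb)$. Concretely, for fixed $t$ write $\bs = \bs(t,\ov{\bY}_t,\bx)$, $\bb = \bb(t,\ov{\bY}_t,\bx)$, and recall from the score-matching derivation that
$$
\mcL(\bs) = \frac{1}{T-T_0}\int_{T_0}^{T}\Ebb_{\ov{\bY}_0,\bx}\Ebb_{\ov{\bY}_t\mid(\ov{\bY}_0,\bx)}\left\Vert \bs - \ov{\bY}_t + \frac{2(\ov{\bY}_t - e^{-t}\ov{\bY}_0)}{1-e^{-2t}}\right\Vert^2 \mrd t,
$$
i.e.\ $\mcL(\bs)$ differs from $\frac{1}{T-T_0}\int_{T_0}^T\Ebb\Vert \bs - \bg_t\Vert^2\,\mrd t$ only by the replacement of the conditional ``target'' $\bg_t := \ov{\bY}_t - \tfrac{2(\ov{\bY}_t - e^{-t}\ov{\bY}_0)}{1-e^{-2t}}$ with $\bb$. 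First I would establish the standard fact that $\bb(t,\ov{\bY}_t,\bx) = \Ebb[\bg_t \mid \ov{\bY}_t,\bx]$, which follows because $\bb = \by + 2\nabla_{\by}\log p_t(\by\mid\bx)$ and Tweedie's identity (equivalently, differentiating $p_t(\by\mid\bx) = \Ebb_{\ov{\bY}_0,\bx}[p_t(\by\mid\ov{\bY}_0,\bx)]$) gives $\nabla_{\by}\log p_t(\by\mid\bx) = \Ebb[\nabla_{\by}\log p_t(\ov{\bY}_t\mid\ov{\bY}_0,\bx)\mid \ov{\bY}_t,\bx]$.

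Next I would do the bias--variance split pointwise in $t$: for any measurable $\bs$,
$$
\Ebb\Vert \bs - \bg_t\Vert^2 = \Ebb\Vert \bs - \bb\Vert^2 + 2\,\Ebb\langle \bs - \bb,\ \bb - \bg_t\rangle + \Ebb\Vert \bb - \bg_t\Vert^2,
$$
and the middle term vanishes: conditioning on $(\ov{\bY}_t,\bx)$, both $\bs$ and $\bb$ are deterministic functions of $(\ov{\bY}_t,\bx)$, while $\Ebb[\bg_t \mid \ov{\bY}_t,\bx] = \bb$, so $\Ebb[\langle \bs-\bb, \bb-\bg_t\rangle \mid \ov{\bY}_t,\bx] = \langle \bs-\bb, \bb - \Ebb[\bg_t\mid\ov{\bY}_t,\bx]\rangle = 0$. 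Therefore $\mcL(\bs) = \frac{1}{T-T_0}\int_{T_0}^{T}\Ebb_{\ov{\bY}_t,\bx}\Vert\bs - \bb\Vert^2\,\mrd t + \frac{1}{T-T_0}\int_{T_0}^T\Ebb\Vert\bb-\bg_t\Vert^2\,\mrd t$. Since the last term does not depend on $\bs$, applying this identity with $\bs = \bb$ shows it equals $\mcL(\bb)$, and subtracting gives exactly the claimed identity $\mcL(\bs) - \mcL(\bb) = \frac{1}{T-T_0}\int_{T_0}^{T}\Ebb_{\ov{\bY}_t,\bx}\Vert\bs(t,\ov{\bY}_t,\bx) - \bb(t,\ov{\bY}_t,\bx)\Vert^2\,\mrd t$.

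I expect the only genuine subtlety to be the justification that $\bb = \Ebb[\bg_t\mid\ov{\bY}_t,\bx]$ — that is, verifying the denoising-score-matching / Tweedie identity in the conditional setting and checking that the relevant integrals are finite so that the conditional expectation and the interchange of $\nabla_{\by}$ with the integral over $\ov{\bY}_0$ are legitimate; Assumption~\ref{ass: bounded_support} (bounded support of $p_0(\cdot\mid\bx)$) together with $t \geq T_0 > 0$, which makes the Gaussian transition density smooth with all moments finite, takes care of this. Everything else is the elementary orthogonality computation above, done uniformly in $t \in [T_0,T]$ and then integrated.
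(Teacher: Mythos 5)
Your proposal is correct and follows essentially the same route as the paper: both arguments reduce to the identity $\bb(t,\ov{\bY}_t,\bx)=\Ebb\bigl[\ov{\bY}_t-\tfrac{2(\ov{\bY}_t-e^{-t}\ov{\bY}_0)}{1-e^{-2t}}\,\big|\,\ov{\bY}_t,\bx\bigr]$ and then kill the cross term by conditioning on $(\ov{\bY}_t,\bx)$; the paper expands $\Vert\bs-\bg_t\Vert^2-\Vert\bb-\bg_t\Vert^2$ directly while you phrase it as a bias--variance split, which is the same computation. No gap.
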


\noindent\textbf{Error Decomposition.} Let $\mathcal{D}:=\{(\overline{\bY}_{0,i},\bx_{i})\}_{i=1}^{n}$, $\mathcal{T}:=\{t_j\}_{j=1}^{m}$, 
and
$\mathcal{Z}=\{\bZ_j\}_{j=1}^{m}$. We denote
$$
\overline{\mathcal{L}}_{\mathcal{D}}(\bs):=\frac{1}{n}\sum_{i=1}^{n}\ell_{\bs}(\overline{\bY}_{0,i},\bx_{i}), \quad \wh{\mathcal{L}}_{\mathcal{D},\mathcal{T},\mathcal{Z}}(\bs):=\frac{1}{n}\sum_{i=1}^{n}\wh{\ell}_{\bs}(\overline{\bY}_{0,i},\bx_{i}),
$$
where
$$
\ell_{\bs}(\overline{\bY}_{0,i},\bx_{i}) = \frac{1}{T - T_0}\int_{T_0}^{T}\mathbb{E}_{\bZ}\left\Vert \bs(t,e^{-t}\overline{\bY}_{0,i} + \sqrt{1-e^{-2t}}\bZ,\bx_i) -e^{-t}\overline{\bY}_{0,i} + \frac{(1 + e^{-2t})\bZ}{\sqrt{1-e^{-2t}}}\right\Vert^2\mrd t,
$$
and
$$
\wh{\ell}_{\bs}(\overline{\bY}_{0,i},\bx_{i}) = \frac{1}{m}\sum_{j=1}^{m}\left\Vert \bs(t_j,e^{-t_j}\overline{\bY}_{0,i} + \sqrt{1-e^{-2t_j}}\bZ_j, \bx_i) -e^{-t_j}\overline{\bY}_{0,i} + \frac{(1 + e^{-2t_j})\bZ_j}{\sqrt{1-e^{-2t_j}}}\right\Vert^2.
$$
Then, for any $\bs\in\mathrm{NN}$, it yields that
$$
\begin{aligned}
\mathcal{L}(\wh{\bs}) - \mathcal{L}(\bb) &= \mathcal{L}(\wh{\bs}) - 2\overline{\mathcal{L}}_{\mathcal{D}}(\wh{\bs}) + \mathcal{L}(\bb) + 2\left(\overline{\mathcal{L}}_{\mathcal{D}}(\wh{\bs}) - \wh{\mathcal{L}}_{\mathcal{D},\mathcal{T},\mathcal{Z}}(\wh{\bs})\right) + 2\left(\mathcal{L}_{\mathcal{D},\mathcal{T},\mathcal{Z}}(\wh{\bs}) - \mathcal{L}(\bb)\right)  \\
&\leq\mathcal{L}(\wh{\bs}) - 2\overline{\mathcal{L}}_{\mathcal{D}}(\wh{\bs}) + \mathcal{L}(\bb) + 2\left(\overline{\mathcal{L}}_{\mathcal{D}}(\wh{\bs}) - \wh{\mathcal{L}}_{\mathcal{D},\mathcal{T},\mathcal{Z}}(\wh{\bs})\right) + 2\left(\mathcal{L}_{\mathcal{D},\mathcal{T},\mathcal{Z}}(\bs) - \mathcal{L}(\bb)\right).
\end{aligned}
$$
Taking expectations, followed by taking the infimum over $\bs\in\mathrm{NN}$ on both sides of the above inequality, it holds that
$$
\begin{aligned}
&~\mathbb{E}_{\mathcal{D},\mathcal{T},\mathcal{Z}}\left(\frac{1}{T - T_0}\int_{T_0}^{T}\mathbb{E}_{\ov\bY_t,\bx}\Vert\wh{\bs}(t,\overline{\bY}_t,\bx) - \bb(t,\overline{\bY}_t,\bx) \Vert^2\mrd t\right)\\
=&~\mathbb{E}_{\mathcal{D},\mathcal{T},\mathcal{Z}}\mathcal{L}(\wh{\bs}) - \mathcal{L}(\bb)\\
\leq &~ \mathbb{E}_{\mathcal{D},\mathcal{T},\mathcal{Z}}\left(\mathcal{L}(\wh{\bs}) - 2\overline{\mathcal{L}}_{\mathcal{D}}(\wh{\bs}) + \mathcal{L}(\bb)\right) + 2\mathbb{E}_{\mathcal{D},\mathcal{T},\mathcal{Z}}\left(\overline{\mathcal{L}}_{\mathcal{D}}(\wh{\bs}) - \wh{\mathcal{L}}_{\mathcal{D},\mathcal{T},\mathcal{Z}}(\wh{\bs})\right) + 2\mathop{\inf}_{\bs\in\mathrm{NN}}\left(\mathcal{L}(\bs) - \mathcal{L}(\bb)\right).
\end{aligned}
$$
In the above inequality, the terms $\mathbb{E}_{\mathcal{D},\mathcal{T},\mathcal{Z}}\left(\mathcal{L}(\wh{\bs}) - 2\overline{\mathcal{L}}_{\mathcal{D}}(\wh{\bs}) + \mathcal{L}(\bb)\right) + 2\mathbb{E}_{\mathcal{D},\mathcal{T},\mathcal{Z}}\left(\overline{\mathcal{L}}_{\mathcal{D}}(\wh{\bs}) - \wh{\mathcal{L}}_{\mathcal{D},\mathcal{T},\mathcal{Z}}(\wh{\bs})\right)$ and $\mathop{\inf}_{\bs\in\mathrm{NN}}\left(\mathcal{L}(\bs) - \mathcal{L}(\bb)\right)$ denote the statistical error and approximation error, respectively. Next, we bound these two errors in the following Lemmas 
\ref{lem: approx_err}-\ref{lem: stat_error}.
\begin{lemma}[Approximation Error]\label{lem: approx_err}
Suppose that Assumption \ref{ass: bounded_support} holds. Given an approximation error $\epsilon > 0$, if $0 < T_0 < \frac{\log 2}{2}$, we can choose a neural network $\bs$ with the following structure:
$$
L = \mathcal{O}\left(\log{\frac{1}{\epsilon}} + d_\mcX + d_{\mathcal{Y}}\right),
M = \mathcal{O}\Bigg(\frac{d_\mcY^{\frac{5}{2}}(T-T_0)}{T_0^3}\left(\log{\frac{d_\mcY}{\epsilon T_0}}\right)^{\frac{d_\mcY+1}{2}}\xi^{d_\mcY}(\beta B_{\mcX})^{d_{\mcX}}\epsilon^{-(d_\mcX + d_{\mcY} + 1)}\Bigg),
$$
$$
J = \mathcal{O}\Bigg(\frac{d_\mcY^{\frac{5}{2}}(T-T_0)}{T_0^3}\left(\log{\frac{d_\mcY}{\epsilon T_0}}\right)^{\frac{d_\mcY+1}{2}}\xi^{d_\mcY}(\beta B_{\mathcal{X}})^{d_{\mcX}}\epsilon^{-(d_\mcX+d_{\mathcal{Y}}+1)}\left(\log{\frac{1}{\epsilon}} + d_\mcX + d_{\mathcal{Y}}\right)\Bigg), 
$$
$$
K = \mathcal{O}\Bigg(\frac{\sqrt{d_\mcY\log{\frac{d_\mcY}{\epsilon T_0}}}}{T_0}\Bigg),
\kappa=\mathcal{O}\Bigg(
\xi\sqrt{\log{\frac{d_\mcY}{\epsilon T_0}}}
\vee
\frac{(T-T_0)d_{\mcY}^{3/2}\sqrt{\log\frac{d_\mcY}{\epsilon T_0}}}{T_0^3}\vee\beta B_{\mathcal{X}}
\Bigg),
$$
$$
\gamma_1 = \mathcal{O}\left(\frac{20 d_\mcY^2}{T_0^2}\right), \gamma_2 = \mathcal{O}\Big(\frac{10 d_\mcY^{3/2}\sqrt{\log\frac{d_\mcY}{\epsilon T_0}}}{T_0^3}\Big),
$$
such that for any $t\in[T_0, T]$ and $\bx\in[0,B_\mcX]^{d_\mcX}$, we have
$$
\mathbb{E}_{\ov\bY_t}\Vert \bs(t,\overline{\bY}_t,\bx) - \bb(t,\overline{\bY}_t,\bx)\Vert^2\leq (1 + d_\mcY)\epsilon^2,
$$
where 
$\xi$ is defined in Lemma \ref{lem: smooth_lipschitz}, 
and $\gamma_1$ and $\gamma_2$ are the Lipschitz constants for $\bs$ with respect to $\by$ and $t$, respectively.
\end{lemma}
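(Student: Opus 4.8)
The plan is to approximate the drift $\bb(t,\by,\bx) = \by + 2\nabla_{\by}\log p_t(\by|\bx)$ by first establishing a near-closed-form expression for the conditional score, then approximating each analytic ingredient by a ReLU network, and finally combining the pieces with a product/composition network. Recall that under Assumption \ref{ass: bounded_support}, the conditional marginal density admits the convolution representation $p_t(\by|\bx) = \int_{[0,1]^{d_\mcY}} \frac{1}{(2\pi\sigma_t^2)^{d_\mcY/2}} \exp\!\big(-\frac{\|\by - e^{-t}\bu\|^2}{2\sigma_t^2}\big) p_0(\bu|\bx)\,\mrd\bu$ with $\sigma_t^2 = 1-e^{-2t}$, so that the score has the Gaussian-smoothed form $\nabla_{\by}\log p_t(\by|\bx) = \frac{1}{\sigma_t^2}\big(e^{-t}\,\Ebb[\bu\mid \ov\bY_t=\by,\bx] - \by\big)$, a ratio of two Gaussian integrals. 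First I would truncate the $\by$-domain to a box of radius $R = \mathcal{O}\big(\sqrt{d_\mcY \log(d_\mcY/(\epsilon T_0))}\big)$, controlling the tail contribution to $\Ebb_{\ov\bY_t}\|\bs - \bb\|^2$ by sub-Gaussian concentration of $\ov\bY_t$ (this is where the $\log(d_\mcY/(\epsilon T_0))$ factors and the bound $K$ on the network output originate). On this box, $T_0 \le t$ keeps $\sigma_t^2$ bounded below by a constant multiple of $T_0$, which is the source of every $T_0$-power in the statement.

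Next I would build the network in stages. Discretize the integral $\int_{[0,1]^{d_\mcY}}(\cdot)\,p_0(\bu|\bx)\,\mrd\bu$ over a grid of mesh $\sim\epsilon$ — this produces the $\epsilon^{-(d_\mcX+d_\mcY+1)}$ factor in $M$ and $J$ (the extra $+1$ and the $(\beta B_\mcX)^{d_\mcX}$ absorb the $\bx$-dependence, using Assumption \ref{ass: smoothness_y} to control how $p_0(\cdot|\bx)$ and the score vary in $\bx$, plus Assumption \ref{ass: Lip_target} for the $\by$-regularity of the integrand). Each summand is a Gaussian times $p_0$ evaluated at a grid point; approximate the exponential by the standard ReLU construction for $e^{-x}$ (cost $\mathcal{O}(\log(1/\epsilon))$ depth, contributing the $\log\frac1\epsilon$ term in $L$) and multiply using the ReLU square-trick for products $xy = \frac14((x+y)^2-(x-y)^2)$. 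Then approximate the reciprocal $1/p_t(\by|\bx)$, using a uniform lower bound on $p_t$ on the truncated box (again a constant depending polynomially on $T_0^{-1}$), and form the ratio. Finally, compose with the affine map to recover $\bb$ and propagate the per-coordinate error through the $d_\mcY$ coordinates, giving the $(1+d_\mcY)\epsilon^2$ bound and the $d_\mcY$-powers in $M,J,K,\kappa$. The Lipschitz constants $\gamma_1,\gamma_2$ in $\by$ and $t$ come from differentiating the closed-form score: $\nabla_{\by}^2\log p_t$ is $\mathcal{O}(\sigma_t^{-2}) = \mathcal{O}(T_0^{-2})$ on the box by Lemma \ref{lem: smooth_lipschitz}-type reasoning, yielding $\gamma_1 = \mathcal{O}(d_\mcY^2/T_0^2)$, while $\partial_t$ brings an extra $\sigma_t^{-2}$ and an $R$-factor, giving $\gamma_2 = \mathcal{O}(d_\mcY^{3/2}\sqrt{\log(d_\mcY/(\epsilon T_0))}/T_0^3)$; I would enforce these via a final clipping/truncation layer rather than by tracking them through every sub-network.

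The main obstacle I anticipate is the reciprocal/ratio step: approximating $1/p_t$ by a ReLU network requires a quantitative pointwise lower bound on $p_t(\by|\bx)$ over the truncated box $\{\|\by\|\le R\}\times[0,B_\mcX]^{d_\mcX}$, and since $p_0(\cdot|\bx)$ is only assumed Lipschitz (not bounded below), near the edge of the box $p_t$ can be exponentially small in $R^2/\sigma_t^2 \sim \log(d_\mcY/(\epsilon T_0))/T_0$. Making the radius $R$ just large enough to kill the tail but small enough that $1/p_t$ stays polynomially bounded is the delicate trade-off; this is exactly what forces the $T_0^{-3}$ (rather than $T_0^{-2}$) scaling in $M,J$ and the $\kappa$ bound, and it must be done carefully so that the approximation error of the reciprocal network, when multiplied back against the numerator, still collapses to $\mathcal{O}(\epsilon^2)$ after integration against $p_t$ — the integration helps because the regions where $1/p_t$ is large have exponentially small mass. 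A secondary technical point is keeping the network \emph{uniformly} valid for all $t\in[T_0,T]$ simultaneously: I would feed $t$ as an input coordinate (as in Definition \ref{relufnns}) and approximate $\sigma_t^{-2}$ and $e^{-t}$ as functions of $t$ by their own ReLU sub-networks on $[T_0,T]$, which is routine once the $t$-dependence of every constant above is made explicit, and contributes the $(T-T_0)/T_0^3$ prefactors.
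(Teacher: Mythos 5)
Your route is genuinely different from the paper's, and as sketched it has two concrete gaps. The paper never goes through the closed-form ratio $\nabla_{\by}p_t/p_t$ at all: having already established that $\bb(t,\by,\bx)$ is $\xi$-Lipschitz in $\by$ (Lemma \ref{lem: smooth_lipschitz}), $\beta B_{\mcX}$-Lipschitz in $\bx$ (Assumption \ref{ass: smoothness_y}), and $\tau(R)=\mathcal{O}(d_\mcY^{3/2}(R+1)/T_0^3)$-Lipschitz in $t$ on $[-R,R]^{d_\mcY}$ (Lemma \ref{lem: lipschitz_t}), it approximates $\bb$ \emph{directly} as a Lipschitz function on the rescaled cube via a trapezoid partition-of-unity network (invoking Lemma 10 of Chen et al.\ 2022 with these constants), extends it to all of $\Rbb^{d_\mcY}$ by composing with the $1$-Lipschitz input-clipping map $\mathcal{T}_{\mathcal{K}}$, and handles the tail $\{\Vert\ov\bY_t\Vert_\infty>R\}$ by a fourth-moment bound on $\bb$ plus Gaussian tail probability, which is where $R=\mathcal{O}(\sqrt{\log(d_\mcY/(\epsilon T_0))})$ comes from. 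No lower bound on $p_t$, no exponential/product/reciprocal gadgets, and the $T_0^{-3}$ in $M,J$ comes from the time-Lipschitz constant $\tau(R)$, not from any denominator issue (so your attribution of that factor is off).

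The first gap is the reciprocal step you yourself flag but do not resolve. Under Assumptions \ref{ass: bounded_support}--\ref{ass: Lip_target} only, the available lower bound on $p_t(\by|\bx)$ over $\Vert\by\Vert_\infty\le R$ is of order $\sigma_t^{-d_\mcY}\exp(-c\,d_\mcY(R+1)^2/\sigma_t^2)$ with $\sigma_t^2\asymp T_0$ near $t=T_0$; with $R\gtrsim\sqrt{\log(1/\epsilon)}$ forced by the tail, this is $(\epsilon T_0/d_\mcY)^{c d_\mcY/T_0}$ — the exponent itself scales like $1/T_0$, so there is no choice of $R$ that both kills the tail and keeps $1/p_t$ polynomially bounded; this is exactly the lower-bound-type condition (cf.\ Assumption 3.3 of Fu et al.) that the paper deliberately avoids. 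The fallback of clipping the denominator at a threshold $\tau$ and discarding $\{p_t<\tau\}$ can control the $L^2(p_t)$ error, but the ratio error is amplified by $1/\tau$, so the numerator and denominator must then be approximated to accuracy $\sim\epsilon\tau$ with $\tau$ polynomial in $\epsilon$; with your mesh-$\epsilon$ discretization of the Gaussian convolution this inflates the network size well beyond the stated $\epsilon^{-(d_\mcX+d_\mcY+1)}$ scaling, so the lemma's complexity bounds would not follow from the construction as described. The second gap is the claim that $\gamma_1,\gamma_2$ can be "enforced via a final clipping/truncation layer": output clipping only controls the range $K$, not the Lipschitz constants of the network with respect to its inputs $\by$ and $t$, and these constants are genuinely needed downstream (Theorem \ref{th: TV_sampling_error} uses them for the discretization error). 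In the paper they are obtained from the construction itself — the controlled slopes of the trapezoid partition of unity — together with clipping applied to the \emph{input} via $\mathcal{T}_{\mathcal{K}}$ to get global Lipschitzness in $\by$; your proposal would need an analogous argument tracking Lipschitz constants through every sub-network (exponential, product, reciprocal, ratio), which is not routine and is not supplied.
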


\begin{lemma}[Statistical Error]\label{lem: stat_error}
Given an approximation error $\epsilon > 0$, if $0 < T_0 < \frac{\log 2}{2}$, then the estimator $\wh{\bs}$ defined in \eqref{eq: edrift}, with the neural network structure introduced in Lemma \ref{lem: approx_err}, satisfies
\begin{equation*}
\mathbb{E}_{\mathcal{D},\mathcal{T},\mathcal{Z}}\left(\mathcal{L}(\wh{\bs}) - 2\overline{\mathcal{L}}_{\mathcal{D}}(\wh{\bs}) + \mathcal{L}(\bb)\right)
= \widetilde{\mathcal{O}}\left(\frac{1}{n}\cdot\frac{(T-T_0)\xi^{d_\mcY}(\beta B_{\mathcal{X}})^{d_{\mathcal{X}}}\epsilon^{-(d_\mcX + d_{\mathcal{Y}} + 1)}}{T_0^5}\right)
\end{equation*}
and
\begin{equation*}
\mathbb{E}_{\mathcal{D},\mathcal{T},\mathcal{Z}}\left(\overline{\mathcal{L}}_{\mathcal{D}}(\wh{\bs}) - \wh{\mathcal{L}}_{\mathcal{D},\mathcal{T},\mathcal{Z}}(\wh{\bs})\right) = \widetilde{\mathcal{O}}\left(\frac{1}{\sqrt{m}}\cdot\frac{(T-T_0)^{\frac{1}{2}}\xi^{\frac{d_\mcY}{2}}(\beta B_{\mathcal{X}})^{\frac{d_{\mathcal{X}}}{2}}\epsilon^{-\frac{d_\mcX + d_{\mathcal{Y}} + 1}{2}}}{T_0^{\frac{7}{2}}}\right).
\end{equation*}
\end{lemma}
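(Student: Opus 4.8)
Below is a plan for proving Lemma~\ref{lem: stat_error}.

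\textbf{Overview.} Both displayed bounds are empirical-process (uniform-deviation) estimates for the empirical risk minimizer $\wh{\bs}$ over the sparse bounded ReLU class $\mrN=\mathrm{NN}(L,M,J,K,\kappa)$ whose hyperparameters are fixed by Lemma~\ref{lem: approx_err}. The plan is: (i) reduce each quantity to a supremum over $\mrN$; (ii) bound that supremum by a Bernstein/Talagrand concentration argument together with a covering-number estimate for $\mrN$ — exploiting the squared-loss ``variance $\lesssim$ mean'' inequality and a localization (peeling) step to obtain the \emph{fast} $1/n$ rate in the first bound, and a plain symmetrization plus Dudley chaining to obtain the \emph{slow} $1/\sqrt m$ rate in the second; (iii) substitute $\log\mcN(\delta,\mrN,\|\cdot\|_{\infty})=\widetilde{\mathcal{O}}(J)$ and the explicit values of $L,M,J,K,\kappa$ from Lemma~\ref{lem: approx_err}, tracking the dependence on $T_0,\epsilon,\xi,\beta,B_{\mcX},T$ and the dimensions.

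\textbf{First bound (fast $1/n$ rate).} Decompose
\[
\mcL(\wh{\bs})-2\overline{\mcL}_{\mcD}(\wh{\bs})+\mcL(\bb)
=\Bigl[\bigl(\mcL(\wh{\bs})-\mcL(\bb)\bigr)-2\bigl(\overline{\mcL}_{\mcD}(\wh{\bs})-\overline{\mcL}_{\mcD}(\bb)\bigr)\Bigr]
-2\bigl[\overline{\mcL}_{\mcD}(\bb)-\mcL(\bb)\bigr],
\]
where the last bracket has mean zero under $\mcD$; hence it suffices to bound $\Ebb_{\mcD}\big[\sup_{\bs\in\mrN}\big((\mcL(\bs)-\mcL(\bb))-2(\overline{\mcL}_{\mcD}(\bs)-\overline{\mcL}_{\mcD}(\bb))\big)\big]$. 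Put $g_{\bs}:=\ell_{\bs}-\ell_{\bb}$, a function of $(\ov{\bY}_{0},\bx)$; Lemma~\ref{lem: connection} identifies $\Ebb[g_{\bs}]=\mcL(\bs)-\mcL(\bb)\geq0$ with the weighted $L^{2}$ drift error. Because $g_{\bs}$ is an integrated difference of two squares sharing the common denoising-score-matching target $g=e^{-t}\ov{\bY}_{0}-(1+e^{-2t})(1-e^{-2t})^{-1/2}\bZ$, a Cauchy--Schwarz computation gives the Bernstein condition $\Ebb[g_{\bs}^{2}]\leq R_{1}\,\Ebb[g_{\bs}]$ with $R_{1}=\widetilde{\mathcal{O}}(T_0^{-2})$: restricting to the high-probability region where the Gaussian $\bZ$ and the sub-Gaussian $\ov{\bY}_{t}$ have norm $\widetilde{\mathcal{O}}(1)$, one has $\|\bs\|\leq K$, $\|\bb(t,\cdot,\bx)\|=\widetilde{\mathcal{O}}(T_0^{-1})$ (the factor $(1-e^{-2t})^{-1}\lesssim T_0^{-1}$ in $\nabla_{\by}\log p_{t}(\by|\bx)$) and $\|g\|=\widetilde{\mathcal{O}}(T_0^{-1/2})$, so $\|\bs+\bb-2g\|^{2}=\widetilde{\mathcal{O}}(T_0^{-2})$. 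A Bernstein inequality over a $\delta$-net of $\mrN$ followed by peeling over the shells $\{\bs:\Ebb[g_{\bs}]\asymp 2^{k}\delta'\}$ then gives $\Ebb_{\mcD}[\sup(\cdots)]=\widetilde{\mathcal{O}}\big(R_{1}\log\mcN(\delta,\mrN,\|\cdot\|_{\infty})/n\big)$ for $\delta$ polynomially small in $n$. Inserting $\log\mcN(\delta,\mrN,\|\cdot\|_{\infty})=\widetilde{\mathcal{O}}(J)$ and $J=\widetilde{\mathcal{O}}\big((T-T_0)\xi^{d_{\mcY}}(\beta B_{\mcX})^{d_{\mcX}}\epsilon^{-(d_{\mcX}+d_{\mcY}+1)}/T_0^{3}\big)$ from Lemma~\ref{lem: approx_err} yields the stated $T_0^{-5}$ bound.

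\textbf{Second bound (slow $1/\sqrt m$ rate).} For every $\bs$ the sample $(t_j,\bZ_j)_{j=1}^{m}$ enters $\wh{\mcL}_{\mcD,\mcT,\mcZ}(\bs)$ only through an average whose mean is $\overline{\mcL}_{\mcD}(\bs)$, so $\overline{\mcL}_{\mcD}(\wh{\bs})-\wh{\mcL}_{\mcD,\mcT,\mcZ}(\wh{\bs})\leq\sup_{\bs\in\mrN}|\overline{\mcL}_{\mcD}(\bs)-\wh{\mcL}_{\mcD,\mcT,\mcZ}(\bs)|$, which I would bound by symmetrization and a Dudley chaining integral. Truncating $\|\bZ_j\|\lesssim\sqrt{d_{\mcY}\log m}$ (the complementary event contributes $\widetilde{\mathcal{O}}(m^{-c})$), each summand of $\wh{\mcL}_{\mcD,\mcT,\mcZ}$ is bounded by $R_{2}=\widetilde{\mathcal{O}}(T_0^{-2})$ and is $\widetilde{\mathcal{O}}(R_{2}^{1/2})$-Lipschitz in $\bs$ for $\|\cdot\|_{\infty}$, whence $\log\mcN(\delta',\{\text{loss}\},\|\cdot\|_{\infty})=\widetilde{\mathcal{O}}(J)$ and the chaining bound gives $\Ebb_{\mcT,\mcZ}[\sup(\cdots)]=\widetilde{\mathcal{O}}(R_{2}\sqrt{J}/\sqrt m)$. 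Substituting $R_{2}=\widetilde{\mathcal{O}}(T_0^{-2})$ and $\sqrt{J}=\widetilde{\mathcal{O}}\big((T-T_0)^{1/2}\xi^{d_{\mcY}/2}(\beta B_{\mcX})^{d_{\mcX}/2}\epsilon^{-(d_{\mcX}+d_{\mcY}+1)/2}/T_0^{3/2}\big)$ produces the stated $T_0^{-7/2}$ bound.

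\textbf{The hard part.} The routine ingredients are the Bernstein/Dudley machinery and the covering-number estimate $\log\mcN(\delta,\mrN,\|\cdot\|_{\infty})=\widetilde{\mathcal{O}}\big(JL\log(L\kappa M/\delta)\big)$ for the bounded sparse ReLU class of Definition~\ref{relufnns}. The delicate part is the book-keeping of the $t\to T_0$ behaviour — the singular factors $(1-e^{-2t})^{-1/2}$ in the score-matching target and $(1-e^{-2t})^{-1}$ in $\nabla_{\by}\log p_{t}$, together with $K\asymp T_0^{-1}$ from Lemma~\ref{lem: approx_err} — which must be tracked precisely enough to certify the loss-range/Bernstein constant as $\widetilde{\mathcal{O}}(T_0^{-2})$; a looser estimate changes the exponent of $T_0$ and hence the stated $T_0^{-5}$ and $T_0^{-7/2}$. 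A secondary difficulty is carrying the intricate hyperparameters of Lemma~\ref{lem: approx_err} through the covering number so that exactly the factor $(T-T_0)\xi^{d_{\mcY}}(\beta B_{\mcX})^{d_{\mcX}}\epsilon^{-(d_{\mcX}+d_{\mcY}+1)}$ (resp.\ its square root) survives, and executing the localization/peeling cleanly so that the first bound attains the fast $1/n$ rate rather than $1/\sqrt n$.
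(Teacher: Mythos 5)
Your proposal is correct and takes essentially the same route as the paper's proof: the same offset decomposition (the mean-zero term $\overline{\mathcal{L}}_{\mathcal{D}}(\bb)-\mathcal{L}(\bb)$ plus a uniform deviation satisfying the Bernstein condition ``variance $\leq B_{T_0}^2\times$ excess risk'' with $B_{T_0}^2=\widetilde{\mathcal{O}}(T_0^{-2})$), the same covering-number bound $\log\mathcal{N}_{\delta}=\widetilde{\mathcal{O}}(JL\log(\cdot))$ with $J\propto T_0^{-3}$, and the same truncation of $\bZ$ at radius $\asymp\sqrt{\log m}$ for the $1/\sqrt{m}$ term, yielding the identical $T_0^{-5}$ and $T_0^{-7/2}$ factors. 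The only cosmetic differences are that the paper realizes the localization through the offset Bernstein tail bound (the $t/2+V^2/(2B_{T_0}^2)$ shift) and a single-scale net with Hoeffding plus tail integration, where you propose explicit peeling and Dudley chaining, and that for the first bound no high-probability restriction on $\bZ$ is actually needed since $\ell_{\bs}$ already integrates $\bZ$ out and is deterministically bounded by $A_{T_0}=\widetilde{\mathcal{O}}(T_0^{-2})$.
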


Combining the error analysis for approximation and statistical errors presented in Lemmas \ref{lem: approx_err}-\ref{lem: stat_error}, we can derive the error bound for the drift estimation, as demonstrated in the following theorem.

\begin{theorem}[Error Bound for Drift Estimation]\label{th: drift_estimation} Suppose Assumption \ref{ass: bounded_support} and Assumption \ref{ass: smoothness_y} hold, let $0 < T_0 < \frac{\log 2}{2}$,
$T < \infty$. By choosing $\epsilon=n^{-\frac{1}{d_\mcX + d_{\mathcal{Y}} + 3}}$ in Lemmas \ref{lem: approx_err}-\ref{lem: stat_error}, the drift estimator $\wh{\bs}$ satisfies

\begin{equation*}
\begin{aligned}
&~\mathbb{E}_{\mathcal{D},\mathcal{T},\mathcal{Z}}\left(\frac{1}{T - T_0}\int_{T_0}^{T}\mathbb{E}_{\ov\bY_t,\bx}\Vert \wh\bs(t,\overline{\bY}_t,\bx) - \bb(t,\overline{\bY}_t,\bx)\Vert^2 \mrd t\right)\\
=&~\widetilde{\mathcal{O}}\left(
\frac{(T-T_0)\xi^{d_\mcY}(\beta B_{\mathcal{X}})^{d_{\mathcal{X}}}}{T_0^5}  
\left(n^{-\frac{2}{d_\mcX + d_{\mathcal{Y}} + 3}} + n^{\frac{d_\mcX + d_{\mathcal{Y}} + 1}{2(d_\mcX + d_{\mathcal{Y}} + 3)}}m^{-\frac{1}{2}}\right)
\right).
\end{aligned}
\end{equation*}
\end{theorem}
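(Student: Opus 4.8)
The statement is obtained by assembling the error decomposition established just above Lemma \ref{lem: approx_err} with the approximation and statistical bounds of Lemmas \ref{lem: approx_err}--\ref{lem: stat_error}, and then substituting $\epsilon = n^{-1/(d_\mcX + d_\mcY + 3)}$. Recall that the decomposition already derived reads
$$
\Ebb_{\mcD,\mcT,\mcZ}\!\left(\frac{1}{T - T_0}\int_{T_0}^{T}\Ebb_{\ov\bY_t,\bx}\Vert \wh\bs(t,\ov{\bY}_t,\bx) - \bb(t,\ov{\bY}_t,\bx)\Vert^2 \,\mrd t\right)\;\le\; \mathrm{(Stat)} + 2\inf_{\bs\in\mathrm{NN}}\bigl(\mcL(\bs) - \mcL(\bb)\bigr),
$$
where $\mathrm{(Stat)}$ is the sum of the two statistical-error terms appearing there. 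So the plan is: (i) bound the approximation term; (ii) bound $\mathrm{(Stat)}$ via Lemma \ref{lem: stat_error}; (iii) substitute $\epsilon$ and consolidate prefactors.

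For step (i), I would apply Lemma \ref{lem: connection}, which for every admissible $\bs$ gives $\mcL(\bs)-\mcL(\bb) = \frac{1}{T-T_0}\int_{T_0}^T \Ebb_{\ov\bY_t,\bx}\Vert\bs - \bb\Vert^2\,\mrd t$, and then take $\bs$ to be the network produced by Lemma \ref{lem: approx_err} at accuracy $\epsilon$. Since that lemma guarantees $\Ebb_{\ov\bY_t}\Vert\bs(t,\ov\bY_t,\bx)-\bb(t,\ov\bY_t,\bx)\Vert^2 \le (1+d_\mcY)\epsilon^2$ uniformly over $t\in[T_0,T]$ and $\bx\in[0,B_\mcX]^{d_\mcX}$, integrating in $t$ and taking the expectation over $\bx$ yields $\inf_{\bs\in\mathrm{NN}}(\mcL(\bs)-\mcL(\bb)) \le (1+d_\mcY)\epsilon^2$. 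For step (ii), Lemma \ref{lem: stat_error}, applied with exactly this architecture (legitimate since $0<T_0<\frac{\log 2}{2}$), controls $\mathrm{(Stat)}$ by
$$
\widetilde{\mathcal{O}}\!\left(\frac{(T-T_0)\,\xi^{d_\mcY}(\beta B_\mcX)^{d_\mcX}\,\epsilon^{-(d_\mcX+d_\mcY+1)}}{n\,T_0^5} \;+\; \frac{(T-T_0)^{1/2}\,\xi^{d_\mcY/2}(\beta B_\mcX)^{d_\mcX/2}\,\epsilon^{-(d_\mcX+d_\mcY+1)/2}}{\sqrt{m}\,T_0^{7/2}}\right).
$$

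Step (iii) is then bookkeeping. Plugging in $\epsilon = n^{-1/(d_\mcX+d_\mcY+3)}$ and using the identity $-1 + \tfrac{d_\mcX+d_\mcY+1}{d_\mcX+d_\mcY+3} = -\tfrac{2}{d_\mcX+d_\mcY+3}$, the approximation term becomes $\widetilde{\mathcal{O}}(n^{-2/(d_\mcX+d_\mcY+3)})$, the $1/n$ statistical term becomes $\widetilde{\mathcal{O}}\!\bigl(\tfrac{(T-T_0)\xi^{d_\mcY}(\beta B_\mcX)^{d_\mcX}}{T_0^5}\,n^{-2/(d_\mcX+d_\mcY+3)}\bigr)$, and the $1/\sqrt m$ statistical term becomes $\widetilde{\mathcal{O}}\!\bigl(\tfrac{(T-T_0)^{1/2}\xi^{d_\mcY/2}(\beta B_\mcX)^{d_\mcX/2}}{T_0^{7/2}}\,n^{\frac{d_\mcX+d_\mcY+1}{2(d_\mcX+d_\mcY+3)}}m^{-1/2}\bigr)$. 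I would then collect the three contributions under the single prefactor $\frac{(T-T_0)\xi^{d_\mcY}(\beta B_\mcX)^{d_\mcX}}{T_0^5}$ --- which dominates the $T_0^{-7/2}$, $\xi^{d_\mcY/2}$, $(\beta B_\mcX)^{d_\mcX/2}$ factors of the second statistical term once $T_0<\frac{\log 2}{2}<1$, the remaining $n,m$-free ratio being absorbed into $\widetilde{\mathcal{O}}$ --- together with the dimension constant $1+d_\mcY$ and all logarithmic factors, which gives exactly the stated bound. There is no genuine analytic obstacle at this stage: the substance lives in Lemmas \ref{lem: approx_err} and \ref{lem: stat_error}, and the only points that need a little care are (a) promoting the pointwise-in-$(t,\bx)$ guarantee of Lemma \ref{lem: approx_err} to a bound on $\inf_{\bs}(\mcL(\bs)-\mcL(\bb))$ through Lemma \ref{lem: connection}, and (b) the clean collapse of the mismatched powers of $T_0$, $\xi$ and $\beta B_\mcX$ in the two statistical terms into one prefactor.
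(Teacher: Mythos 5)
Your proposal is correct and follows essentially the same route as the paper: the paper's proof also plugs the bounds of Lemmas \ref{lem: approx_err}--\ref{lem: stat_error} into the error decomposition (with the approximation term converted via Lemma \ref{lem: connection}), absorbs the mismatched $T_0^{-7/2}$, $\xi^{d_\mcY/2}$, $(\beta B_\mcX)^{d_\mcX/2}$ prefactor of the $m^{-1/2}$ term into the common prefactor $\frac{(T-T_0)\xi^{d_\mcY}(\beta B_\mcX)^{d_\mcX}}{T_0^{5}}$, and then substitutes $\epsilon = n^{-1/(d_\mcX+d_\mcY+3)}$. Your added care in promoting the pointwise-in-$(t,\bx)$ guarantee to a bound on $\inf_{\bs\in\mathrm{NN}}(\mcL(\bs)-\mcL(\bb))$ is exactly what the paper does implicitly.
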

With Theorem \ref{th: drift_estimation}, we can give the upper bound of $\Ebb_{\mcD,\mcT,\mcZ}[\mathrm{TV}(\wt{p}_{T_0}, p_{T_0})]$, as shown in the following theorem.
\begin{theorem}\label{th: TV_sampling_error}
Suppose that assumptions of Theorem \ref{th: drift_estimation} hold, then we have
\begin{equation*}
\begin{aligned}
\Ebb_{\mcD,\mcT,\mcZ}[\mathrm{TV}(\wt{p}_{T_0}, p_{T_0})]&=\widetilde{\mathcal{O}}\Bigg(
    \frac{(T-T_0)\xi^{\frac{d_\mcY}{2}}(\beta B_{\mathcal{X}})^{\frac{d_\mcX}{2}}}{T_0^{\frac{5}{2}}}  
    \left(n^{-\frac{1}{d_\mcX + d_{\mathcal{Y}} + 3}} + n^{\frac{d_\mcX + d_{\mathcal{Y}} + 1}{4(d_\mcX + d_{\mathcal{Y}} + 3)}}m^{-\frac{1}{4}} \right) \\
    &~~~~~~~ + \left(d_\mcY\gamma_1 + \gamma_2\right)\sqrt{\sum_{k=0}^{N-1}(t_{k+1}-t_k)^3} + {d_\mcY}^{\frac{1}{2}
    }\sqrt{\sum_{k=0}^{N-1}(t_{k+1}-t_k)^2}\Bigg),
    \end{aligned}
\end{equation*}
\end{theorem}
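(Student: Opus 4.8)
\noindent\emph{Proof strategy.} The plan is to route the triangle inequality through the law $\wh p_{T_0}$ of the \emph{continuous‑time} sampling dynamics \eqref{sde: sampling}, which is driven by the learned drift $\wh\bs$ and started from $p_T(\by|\bx)$:
\[
\Ebb_{\mcD,\mcT,\mcZ}[\mathrm{TV}(\wt p_{T_0},p_{T_0})]\le \Ebb_{\mcD,\mcT,\mcZ}[\mathrm{TV}(\wt p_{T_0},\wh p_{T_0})]+\Ebb_{\mcD,\mcT,\mcZ}[\mathrm{TV}(\wh p_{T_0},p_{T_0})].
\]
The second term isolates the drift‑estimation error and the first isolates the EM discretization error; both are treated by Girsanov's theorem combined with the data‑processing inequality for $\mathrm{TV}$ and Pinsker's inequality $\mathrm{TV}^2\le\tfrac12\mathrm{KL}$.

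For $\Ebb_{\mcD,\mcT,\mcZ}[\mathrm{TV}(\wh p_{T_0},p_{T_0})]$: the diffusions $\bY_t$ in \eqref{sde: forward_cond_OU} and $\wh\bY_t$ in \eqref{sde: sampling} share the initial law $p_T(\by|\bx)$ and the diffusion coefficient $\sqrt2$, so Girsanov's theorem gives that the KL divergence between their path laws on $[0,T-T_0]$ equals $\tfrac14\,\Ebb\int_0^{T-T_0}\|\wh\bs(T-t,\bY_t,\bx)-\bb(T-t,\bY_t,\bx)\|^2\mrd t$. Since $\bY_t$ has law $p_{T-t}$, i.e. the OU marginal $\ov\bY_{T-t}$, this is $\tfrac{T-T_0}{4}$ times the averaged drift error bounded in Theorem \ref{th: drift_estimation}. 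Applying the data‑processing inequality, Pinsker's inequality, Jensen's inequality to pull $\Ebb_{\mcD,\mcT,\mcZ}$ inside the square root, the bound of Theorem \ref{th: drift_estimation}, and $\sqrt{a+b}\le\sqrt a+\sqrt b$, yields the first line of the claimed estimate.

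For $\Ebb_{\mcD,\mcT,\mcZ}[\mathrm{TV}(\wt p_{T_0},\wh p_{T_0})]$: regarding the EM scheme \eqref{sde: EM_scheme} as an SDE with drift frozen on each $[t_k,t_{k+1})$ and again invoking Girsanov (the two processes share the initial law $p_T(\by|\bx)$), the KL divergence between their path laws equals, along the trajectory of $\wh\bY$, $\tfrac14\,\Ebb\sum_{k=0}^{N-1}\int_{t_k}^{t_{k+1}}\|\wh\bs(T-t,\wh\bY_t,\bx)-\wh\bs(T-t_k,\wh\bY_{t_k},\bx)\|^2\mrd t$. Since every element of the network class $\mrN$, hence $\wh\bs$, is $\gamma_1$‑Lipschitz in $\by$ and $\gamma_2$‑Lipschitz in $t$ (Lemma \ref{lem: approx_err}), the integrand is $\lesssim\gamma_1^2\|\wh\bY_t-\wh\bY_{t_k}\|^2+\gamma_2^2(t-t_k)^2$; writing $\wh\bY_t-\wh\bY_{t_k}=\int_{t_k}^t\wh\bs(T-s,\wh\bY_s,\bx)\mrd s+\sqrt2(\bB_t-\bB_{t_k})$ with $\|\wh\bs\|\le K$ and using It\^o's isometry gives $\Ebb\|\wh\bY_t-\wh\bY_{t_k}\|^2\lesssim K^2(t-t_k)^2+d_\mcY(t-t_k)$. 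Integrating over each $[t_k,t_{k+1}]$ and summing produces $\sum_k(t_{k+1}-t_k)^3$ and $\sum_k(t_{k+1}-t_k)^2$ contributions; taking the square root and inserting the orders of $\gamma_1,\gamma_2,K$ from Lemma \ref{lem: approx_err} reproduces the last two terms (and, if one instead compares $\wt p_{T_0}$ with $p_{T_0}$ directly, the Riemann‑sum‑versus‑integral gap of the drift error is absorbed into these terms). Combining the two pieces via the triangle inequality and discounting logarithmic factors gives the theorem.

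The step I expect to be the main obstacle is making the two Girsanov arguments rigorous on $[T_0,T]$: one must verify a Novikov‑type integrability condition ensuring that the exponential change‑of‑measure process is a genuine martingale, which is precisely why the supremum bound $\|\wh\bs\|\le K$ built into $\mrN$ is needed, and one must keep all drift‑related constants ($\xi,\gamma_1,\gamma_2,K$) under uniform control even though the true score $\nabla_{\by}\log p_t(\by|\bx)$ and its Lipschitz and temporal moduli blow up polynomially as $t\downarrow 0$. This endpoint degeneracy is what forces the $T_0^{-5/2}$ factor and the appearance of $\gamma_1,\gamma_2$ in the discretization terms, and it is why the early‑stopping level $T_0$ must be tuned jointly with the mesh size in Theorem \ref{th: end_to_end_convergence}.
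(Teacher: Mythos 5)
Your decomposition is a legitimate variant of the paper's argument, but it is not the same route: the paper performs a \emph{single} Girsanov comparison between the path law $Q$ of the true reverse process \eqref{sde: forward_cond_OU} and the path law of the EM scheme \eqref{sde: EM_scheme}, and only afterwards splits the integrand $\Vert\bb(T-t,\bY_t,\bx)-\wh\bs(T-t_k,\bY_{t_k},\bx)\Vert^2$ into a drift-estimation part plus $d_\mcY\gamma_1^2\Vert\bY_t-\bY_{t_k}\Vert^2+\gamma_2^2|t-t_k|^2$, whereas you insert the continuous-time learned dynamics $\wh p_{T_0}$ and apply Girsanov twice. The payoff of the paper's choice is that \emph{all} expectations stay under $Q$, so the increments are those of the time-reversed OU process and $\Ebb_Q\Vert\bY_t-\bY_{t_k}\Vert^2\lesssim d_\mcY(t-t_k)^2+d_\mcY(t-t_k)$ by the explicit OU moments; your second step measures increments of $\wh\bY$, for which the only available control is the network bound $\Vert\wh\bs\Vert\le K=\mathcal{O}\big(\sqrt{d_\mcY\log(d_\mcY/(\epsilon T_0))}/T_0\big)$, giving $\Ebb\Vert\wh\bY_t-\wh\bY_{t_k}\Vert^2\lesssim K^2(t-t_k)^2+d_\mcY(t-t_k)$. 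After taking square roots this inflates the $\gamma_1$-discretization term to order $\sqrt{d_\mcY}\,\gamma_1 K\sqrt{\sum_k(t_{k+1}-t_k)^3}$, i.e.\ an extra factor of order $\sqrt{\log(1/T_0)}/T_0$ relative to the stated $(d_\mcY\gamma_1+\gamma_2)\sqrt{\sum_k(t_{k+1}-t_k)^3}$; so as written your argument proves a slightly weaker bound for arbitrary partitions (it only matches the theorem if you additionally couple the mesh to $T_0$, e.g.\ $\max_k|t_{k+1}-t_k|\lesssim T_0^2$, which does hold under the choices of Theorem \ref{th: end_to_end_convergence} but is not part of this theorem's hypotheses).

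The second, more substantive issue is the one you flag yourself but do not resolve: in the comparison involving the \emph{true} drift (your bound on $\mathrm{TV}(\wh p_{T_0},p_{T_0})$, and likewise in the paper's single comparison) the drift difference contains $\bb(T-t,\bY_t,\bx)$, which is unbounded (linear in $\by$ and with constants blowing up as $t\downarrow0$), so the boundedness $\Vert\wh\bs\Vert\le K$ built into the network class does \emph{not} by itself yield a Novikov condition; it only does so for your EM-versus-continuous-learned comparison, where both drifts are values of $\wh\bs$. The paper circumvents Novikov entirely: it first checks $\Ebb_Q\int\Vert\ba_t\Vert^2\mrd t<\infty$ using Lemma \ref{lem: smooth_lipschitz} and the OU moments, then runs a localization argument with stopping times $T_r\uparrow T-T_0$, applies Girsanov to the stopped exponential martingale, and passes to the limit using a coupling, the lower semicontinuity of KL and the data-processing inequality. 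Your proposal needs this (or an equivalent truncation/approximation argument) to be complete; with it, and with the mesh-versus-$T_0$ caveat above addressed, your two-step route does deliver the first line of the bound exactly as stated, since the Girsanov/Pinsker/Jensen computation with Theorem \ref{th: drift_estimation} reproduces the factor $(T-T_0)\xi^{d_\mcY/2}(\beta B_\mcX)^{d_\mcX/2}T_0^{-5/2}\big(n^{-1/(d_\mcX+d_\mcY+3)}+n^{(d_\mcX+d_\mcY+1)/(4(d_\mcX+d_\mcY+3))}m^{-1/4}\big)$.
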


Based on the analysis provided above, the synthesis of Theorems \ref{th:TV_initial_distribution}, \ref{th: TV_early_stopping}, and \ref{th: TV_sampling_error} yields the comprehensive end-to-end convergence rate as established in Theorem \ref{th: end_to_end_convergence}.
 
\section{Numerical Experiments}\label{sec:na}
In this section, we undertake a rigorous examination through comprehensive simulation studies in Section \ref{sec:sim} and real data analyses in Section \ref{sec:rda} to assess the efficacy and practical applicability of our proposed method. This aims to provide a detailed evaluation of our method's performance, robustness, and utility in both experimental settings and real-world scenarios.
\subsection{Simulation Studies}\label{sec:sim}
In our simulation studies, we consider the following three regression models:
\begin{itemize}
\item [(I.)] 
Assume that $Y=f_0(X)+\epsilon$, where $f_0(X)=\left(x_1-1\right)^2+\left(x_2+1\right)^3-3 x_3$ with $X=\left(x_1, x_2, x_3\right)^{\top}\sim U\left([0,1]^{3}\right)$ and $\epsilon \sim \mcN\left(0, 1\right)$.
\item [(II.)]
Assume that $Y=f_0(X)+\epsilon$, where 
$$
f_0(X)=\left(x_1-2+ x_2^2\right)^2+\left(3-x_2\right)^2+ \sqrt{x_3+1}\left(x_3-1\right)^2
$$
with  $X=\left(x_1, x_2, x_3\right)^{\top}\sim U\left([0,1]^3\right)$  
and $\epsilon\sim U([-1/2, 1/2])$.
\item [(III.)]
Assume that $Y=x_1^2+\frac{1}{2}\exp \left(\left(x_2+\frac{x_3}{3}\right)\right)+$ $x_4-x_5+\frac{1}{8}\left(1+x_2^2 + x_5^2\right) \times \epsilon$, 
where
$X=\left(x_1, x_2,x_3,x_4,x_5\right)^{\top}\sim  \mcN(\m0,\mI_{5})$,
and $\epsilon \sim \mcN\left(0, 1\right)$.
\end{itemize}
In the aforementioned Models 
(I)-(III), both bounded and unbounded cases are considered. These models are derived from the works of \cite{demirkaya2024optimal,zhou2023deep}, with slight modifications to their original formulations.
We implement Algorithm \ref{sampling_algorithm} with $\wt{M}$ replications,
and assess its performance using the following metrics: mean squared error (MSE), variance, bias, and coverage probability (CP):
\begin{align*}
&
\mbox{MSE}:=\frac{1}{\wt{M}}\sum_{j=1}^{\wt{M}}\left(\overline{\wh{Y}}_{x}^{(j)}-f_0(x)\right)^2,\\
&
\mbox{Variance}:=\frac{1}{\wt{M}}\sum_{j=1}^{\wt{M}}
\left(\overline{\wh{Y}}_{x}^{(j)}-\frac{1}{\wt{M}} \sum_{j=1}^{\wt{M}}\overline{\wh{Y}}_{x}^{(j)} \right)^2,\\
&
\mbox{Bias}^2:=\left(\frac{1}{\wt{M}} \sum_{j=1}^{\wt{M}}\overline{\wh{Y}}_{x}^{(j)} -f_0(x)\right)^2,\\
&
\mbox{CP}:=\frac{1}{\wt{M}}\sum_{j=1}^{\wt{M}} I\left(\widehat{T}_x^{(j)} \in [-Z_{\alpha/2},Z_{\alpha/2}]\right).
\end{align*}
Here, we define:
\begin{align*}
&\overline{\wh{Y}}_{\bx}^{(j)}=\frac{1}{M}\sum_{i=1}^M \wh{Y}_{\bx,i}^{(j)},\\
&\wh{S}^2_{j,\bx}=\frac{1}{M-1}\sum_{i=1}^M \left(\wh{Y}_{\bx,i}^{(j)}-\overline{\wh{Y}}_{\bx}^{(j)}\right)^2,\\
&
\widehat{T}_x^{(j)}=\sqrt{M}\wh{S}_{j,\bx}^{-1}\left(\overline{\wh{Y}}_{\bx}^{(j)}-f_0(\bx)\right),
\end{align*}
where $j \in \{1,\ldots, \wt{M}\}$ denotes the $j$-th simulation iteration.

The results from the simulation studies are comprehensively summarized in Tables \ref{Table:sim1}-\ref{Table:sim2}. 
Table  \ref{Table:sim1}  details the results for the fixed test point scenario, while Table \ref{Table:sim2} presents the results for the random test points scenario.
In Models (I)-(II), we randomly sample 10000 data points, whereas in Model (III) we randomly sample 20000 data points. From these samples, 90\% are allocated to the training set, as implemented in Algorithm \ref{sampling_algorithm}, with the remaining 10\% forming the test set.  Specifically,  in the fixed test point scenario, the first two test points in Table \ref{Table:sim1} are randomly selected from the test set, with the last point specified as 0.5 to ensure fairness of the evaluation. Conversely, in the random test point scenario depicted in Table \ref{Table:sim2}, we select $\wt{M}$ random test points from the test set. 
For these simulations, we set $M=100$ in Algorithm \ref{sampling_algorithm},  the simulation time $\wt{M}=1000$, and the confidence level $\alpha=5\%$.
In the random test point case, where the test points change in every simulation iteration, we focus primarily on the CP and MSE.
From Table \ref{Table:sim1} of the fixed test point case, we can see that the MSE, variance, and bias values approaching zero indicate the robustness and effectiveness of our method for  nonparametric estimation. Additionally, CP values nearing 0.95 suggest that our method provides effective confidence intervals, thereby validating our theoretical findings.
In Table  \ref{Table:sim2}, which presents the results for the random test points case, we observe that the CP values decrease by approximately two percentage points compared to the fixed test point scenario, yet they still approximate 95\%, reaffirming the robustness of our method across varying test points.
The minimal MSE in Table  \ref{Table:sim2} further substantiates the consistency and precision of our approach across different scenarios. 
These results collectively underscore the efficacy of our method in producing stable and reliable estimates, regardless of the variability in test point selection.

\begin{table}[H]
\caption{The CP, MSE, Variance, $\mathrm{Bias}^2$ at different fixed test points in Models (I)-(III) with $M=100$ and $\wt{M}=1000$.}
\label{Table:sim1}
\centering
\begin{tabular}{cccccc}
    \toprule
    Model& $\bx$ & CP &MSE &Variance &$\mathrm{Bias}^2$ 
    \\
    \midrule 
    \multirow{3}{*}{(I)}&(0.473, 0.557, 0.386)&94.80\%&0.029363&0.028232&0.001131
    \\
    &(0.854, 0.199, 0.807)&94.90\%&0.029545&0.02894&0.000605 \\
    &(0.500, 0.500, 0.500)&95.60\%&0.028246&0.028236&0.000010
    \\
    \midrule 
    \multirow{3}{*}{(II)}&(0.633, 0.014, 0.936)&94.80\%&0.019684&0.019164&0.000520
    \\
    &(0.069, 0.500, 0.765)&95.10\%&0.019218&0.019205&0.000013
    \\
    &(0.500, 0.500, 0.500)&93.70\%&0.021202&0.019228&0.001974
    \\
    \midrule
    \multirow{3}{*}{(III)}&(1.386, 1.333, -0.029, 0.583, -0.012)&95.20\%&0.017822&0.017600&0.000222\\
    &(1.270, 0.723, 0.764, 0.408, -0.162)&94.20\%&0.019546&0.017455&0.002091\\
    &(0.500, 0.500, 0.500, 0.500, 0.500)&93.60\%&0.019481&0.017344&0.002137\\
    \bottomrule
\end{tabular}
\end{table}

\begin{table}[H]
\caption{The CP and MSE at different random test points in Models (I)-(III) with $M=100$ and $\wt{M}=1000$.}
\label{Table:sim2}
\centering
\setlength{\tabcolsep}{15mm}
\begin{tabular}{ccc}
    \toprule
    Model&CP &MSE 
    \\
    \midrule 
    (I)&92.30\%&0.037975
    \\
    (II)&92.50\%&0.024809 \\
    (III)&90.10\%&0.030276\\
\bottomrule
\end{tabular}

\end{table}

\subsection{Real Data Analysis}\label{sec:rda}
In this section, we conduct real data analyzes to validate the practical efficacy of our proposed method. Our endeavor entails a rigorous examination of three  datasets: wine quality, abalone, and superconductivity. Notably, the wine quality dataset has been studied by \cite{chang2024deep}, and the abalone dataset is explored in the works of \cite{zhou2023deep,demirkaya2024optimal}.
The primary objective of our analysis is to construct a prediction interval for the response variable  $Y$ given the covariate vector $X$.  
We assume the relationship between $Y$ and $X$ is modeled as follows:
$$
Y = f_0(X) + \epsilon, ~ \epsilon\sim\mathcal{N}(0, \sigma^2),
$$
where $f_0$ represents   the unknown regression function.

Given the pair $(X, Y)$,  let 
$\{Y_j\}_{j=1}^{M}$ be $M$ independent copies of $Y$ given $X$, that is, $Y_j=f_0(X)+\epsilon_{j}$. Then, we have 
$$
Y - \frac{1}{M}\sum_{j=1}^{M}Y_{j} = \epsilon - \frac{1}{M}\sum_{j=1}^{M}\epsilon_{j}\sim\mathcal{N}\left(0, \left(1 + \frac{1}{M}\right)\sigma^2\right).
$$
The sample variance is given by
$$
S^2 = \frac{1}{M-1}\sum_{j=1}^{M}\left(Y_j - \frac{1}{M}\sum_{j=1}^{M}Y_j\right)^2 = \frac{1}{M-1}\sum_{j=1}^{M}\left(\epsilon_j - \frac{1}{M}\sum_{j=1}^{M}\epsilon_j\right)^2.
$$
Since $\epsilon, \epsilon_1, \epsilon_2, \cdots,\epsilon_M$ are independent, and considering the independence of $\frac{1}{M}\sum_{j=1}^{M}\epsilon_j$ and $S^2$, we obtain
$$
\frac{Y - \frac{1}{M}\sum_{j=1}^{M}Y_j}{S\sqrt{1 + \frac{1}{M}}}\sim t(M-1),
$$
where $t(M-1)$ denotes $t$-distribution with $M-1$ degrees of freedom.
Denote $\overline{Y} := \frac{1}{M}\sum_{j=1}^{M}Y_j$,  we can construct the $1-\alpha$ prediction interval of $Y$ as
$$
\left[\overline{Y} - t_{\alpha/2}(M-1)S\sqrt{1 + \frac{1}{M}}, \overline{Y} + t_{\alpha/2}(M-1)S\sqrt{1 + \frac{1}{M}}\right].
$$
To approximate the prediction interval in practice, we replace $\overline{Y}$ with $\overline{\wh{Y}}_{\bx}$ and $S$ with $\wh{S}_{\bx}$.
In our analysis, we set $\alpha=5\%$.
The comprehensive numerical results corresponding to these three datasets are provided in the following Subsections \ref{sec:r1}-\ref{sec:r3}. 
\subsubsection{The Wine Quality Dataset}\label{sec:r1}
The wine quality dataset, available at \url{http://www3.dsi.uminho.pt/pcortez/wine/}, represents a comprehensive collection of data comprising two distinct subsets: red and white vinho verde wine samples originating from the northern region of Portugal. Specifically, the dataset consists of 1599 samples of red wine and 4898 samples of white wine. Each sample is meticulously characterized by 11 predictor variables, encompassing essential physicochemical attributes such as \textit{fixed acidity}, \textit{volatile acidity}, \textit{citric acid}, \textit{residual sugar}, \textit{chlorides}, \textit{free sulfur dioxide}, \textit{total sulfur dioxide}, \textit{density}, \textit{pH}, \textit{sulphates}, and \textit{alcohol content}. The response variable within this dataset is the quality score, denoting the subjective assessment of wine quality, which spans a range from 0 to 10. For a more comprehensive understanding of this dataset and its underlying features, interested readers may refer to \cite{cortez2009modeling}.

Within our modeling framework, wine quality serves as the response variable, denoted as $Y \in \mathbb{R}$, while the remaining measurements constitute the covariate vector, represented by $X \in \mathbb{R}^{11}$. To evaluate the effectiveness of our model, we employ a standard data partitioning approach, allocating 85\% of the dataset for training and reserving the remaining 15\% for testing purposes. The numerical results are depicted in Figure \ref{wine}.
Specifically, the left panel of Figure \ref{wine} presents a comparison between the estimated conditional density and the actual density distribution of the test set. Remarkably,  it reveals a close correspondence between the estimated and actual density distributions. This alignment underscores the fidelity of our modeling approach in accurately capturing the underlying distribution of the response variable within the test dataset.
Furthermore, to examine the prediction performance of our method, we construct the  prediction interval for the wine quality in the test set. 
The prediction intervals are shown in the right panel of Figure \ref{wine}, with the actual wine qualities plotted as solid dots. The actual coverage rate for the test set is 95.38\%, close to the nominal level of 95\%. This efficient interval serves as a reliable measure of the uncertainty associated with our model predictions. Such a comprehensive evaluation reinforces the robustness and reliability of our modeling framework in predicting wine quality based on the specified covariates.

\begin{figure}[H]
    \centering
    \subfloat{
\includegraphics[width=0.5\columnwidth, keepaspectratio]{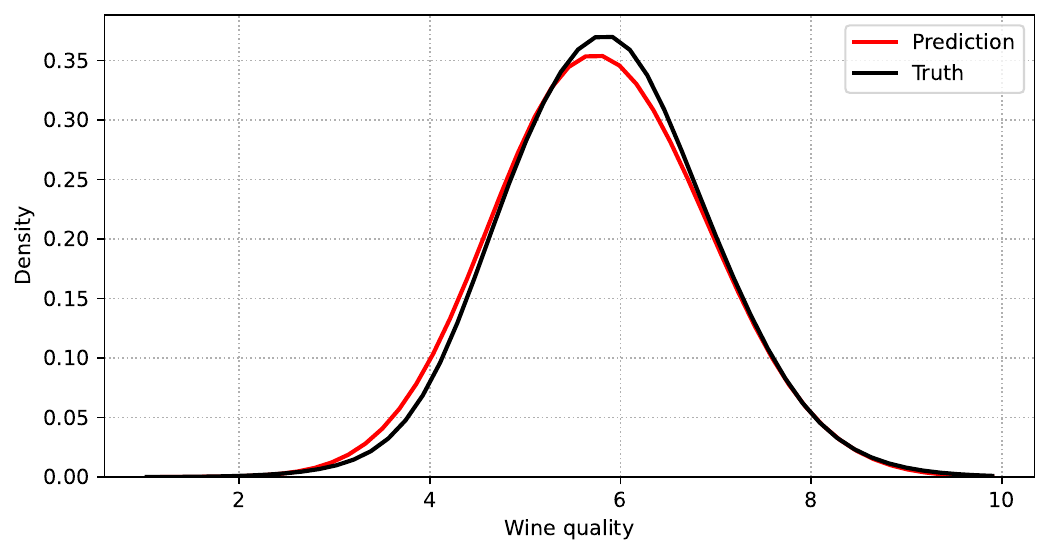}
    }
    \subfloat{
    \includegraphics[width=0.5\columnwidth, keepaspectratio]{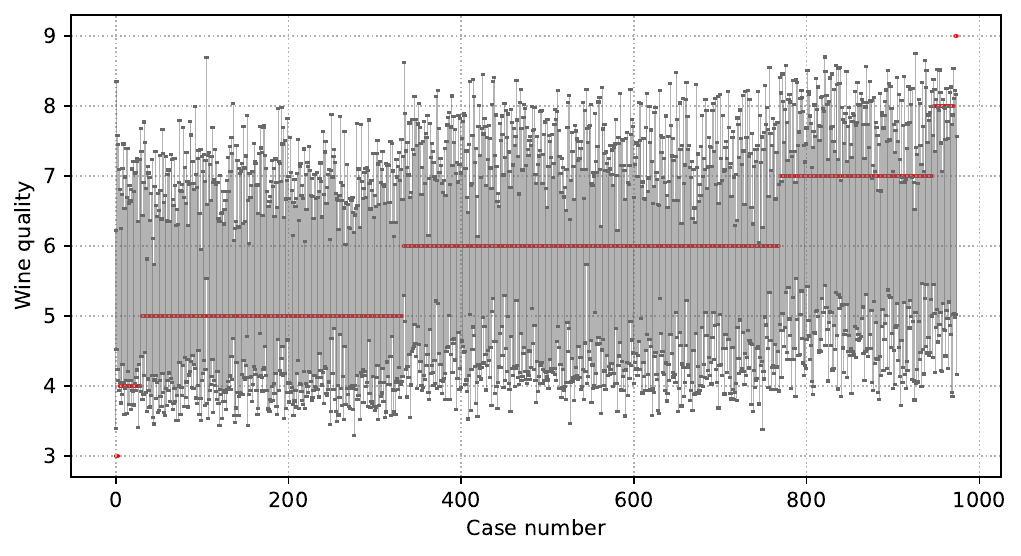}
    }
    \caption{(Left) Estimated density and actual density on the test set. (Right) The prediction intervals on the test set.}
    \label{wine}
\end{figure}

\subsubsection{The Abalone Dataset}\label{sec:r2}
The abalone dataset, accessible at \url{https://archive.ics.uci.edu/dataset/1/abalone}, presents a rich repository of data aimed at predicting the age of abalone based on physical measurements. Determining the age of the abalone traditionally involves a laborious process of cutting the shell, staining it, and manually counting the rings under a microscope, a tedious and time-consuming task. To circumvent this challenge, this dataset leverages other, more easily obtainable measurements to predict the age of abalone. However, solving this predictive task may require additional contextual information, such as weather patterns and location.
This dataset comprises 4177 observations across 8 input variables, including \textit{sex}, \textit{length}, \textit{diameter}, \textit{height}, \textit{whole weight}, \textit{shucked weight}, \textit{viscera weight}, and \textit{shell weight}. In particular, the response variable represents the number of rings found in the shell, which serves as a proxy for the age of the abalone. While all variables except for the categorical variable \textit{sex} are continuous, the \textit{sex} variable categorizes individuals into three groups: female, male, and infant.  
In our analysis, we designate the response variable as 
$Y\in\mathbb{R}$, representing rings, while the remaining measurements serve as the covariate vector  $X\in\mathbb{R}^8$.
 We partition 85\% of the dataset for training purposes, reserving the remaining 15\% for testing. Due to the discrete nature of the variable \textit{sex}, we employ one-hot encoding of this feature prior to training the model.

In Figure \ref{abalone}, our analysis of the abalone dataset reveals a significant alignment between the estimated density function and the true density function. 
Moreover, our calculated CP value of 94.44\% closely approximates the target value of 95\%.  This observation underscores the robustness of our method, affirming its reliability in accurately capturing the underlying data distribution and its efficiency in predictive modeling.
\begin{figure}[H]
    \centering
    \subfloat{
\includegraphics[width=1.0\columnwidth, keepaspectratio]{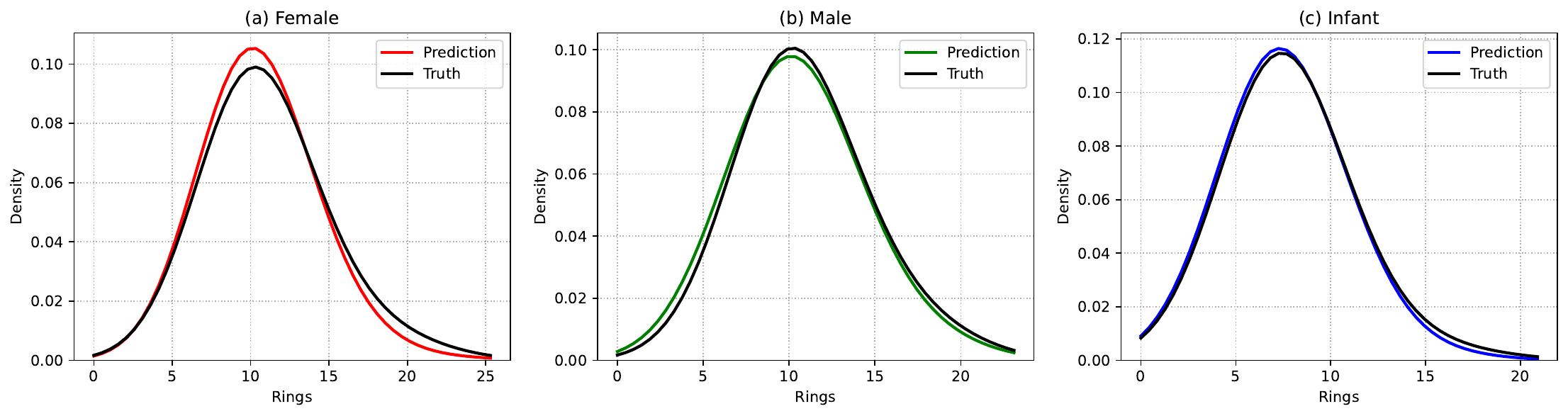}
    }\\
    \subfloat{
\includegraphics[width=1.0\columnwidth, keepaspectratio]{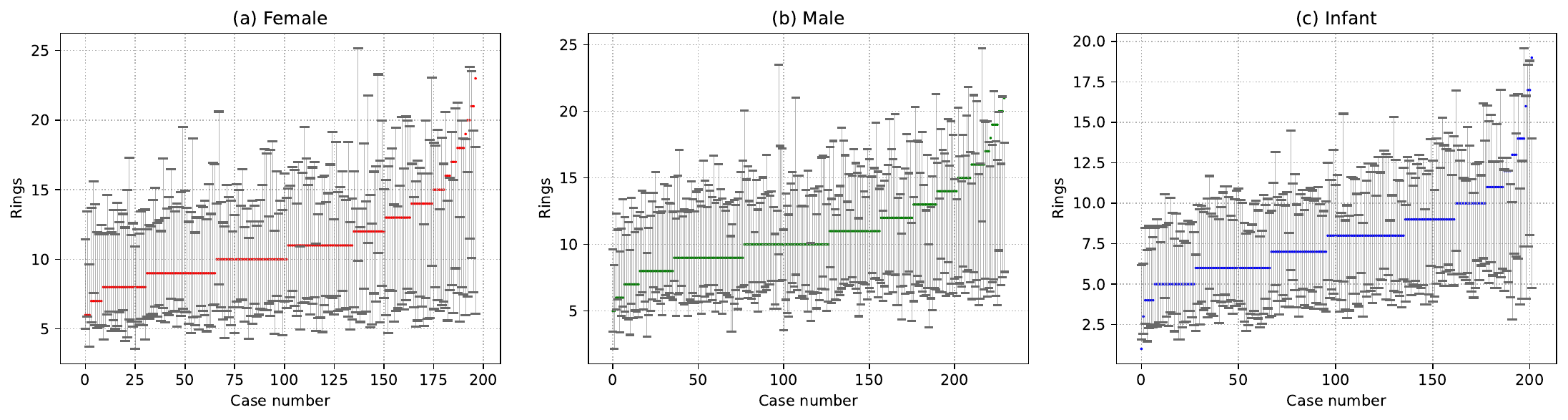}
    }
    \caption{(Top) Estimated density and actual density on the test set. (Bottom) The prediction intervals on the test set.}
    \label{abalone}
\end{figure}

\subsubsection{The Superconductivity Dataset}\label{sec:r3}
The superconducting material database, supported by the National Institute of Materials Science, a public institution based in Japan, is a pivotal resource in the field of superconductivity research. From this database comes the superconductivity dataset, which is accessible at \url{https://archive.ics.uci.edu/dataset/464/superconductivty+data}. 
This dataset serves as a valuable repository of information, encompassing data on 21263 superconductors and consisting of 81 features. 
Additionally, the dataset provides details on the chemical formula and critical temperature for all 21263 superconductors. This comprehensive dataset serves as a cornerstone resource, offering invaluable insights into the properties of superconductors. As such, it is instrumental in advancing research in materials science and condensed matter physics. For further detailed information on this dataset, one can refer to \cite{hamidieh2018data}.
In our numerical analysis, we partition this dataset into three subsets: 70\% for training, 20\% for validation, and 10\% for testing. 
In this setup, we designate the critical temperature as the response variable $Y\in\mathbb{R}$, while the other measurements form the covariate vector  $X\in\mathbb{R}^{81}$.
Figure \ref{superconductivity} presents the numerical results for the superconductivity dataset. 
Upon inspection of this figure, we observe that our method yields a precise estimation of the density function. Notably, the calculated CP value stands at 95.20\%,  illustrating the robustness of our approach in furnishing dependable prediction intervals for this examined dataset.
\begin{figure}[H]
    \centering
    \subfloat{
\includegraphics[width=0.5\columnwidth, keepaspectratio]{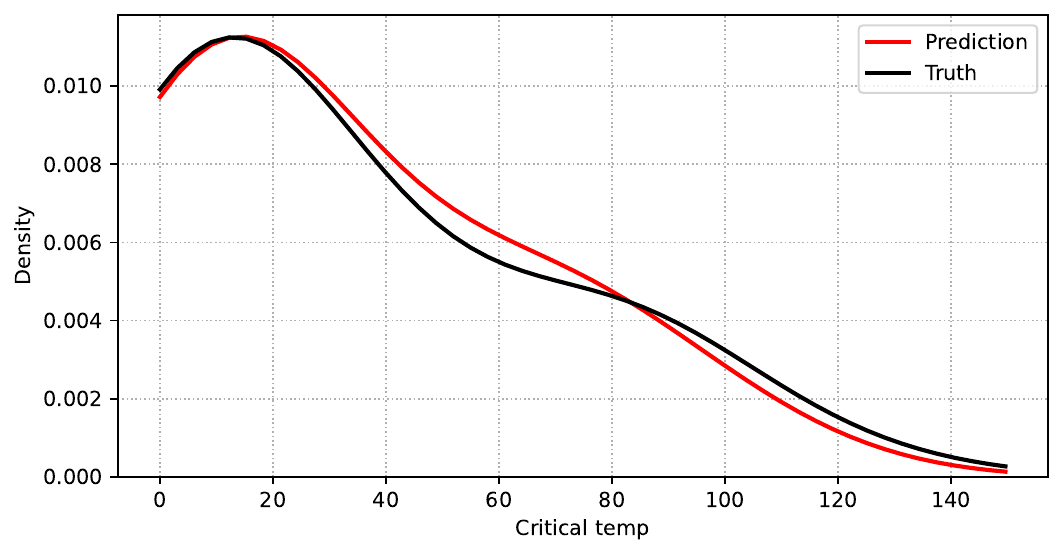}
    }
    \subfloat{
\includegraphics[width=0.5\columnwidth, keepaspectratio]{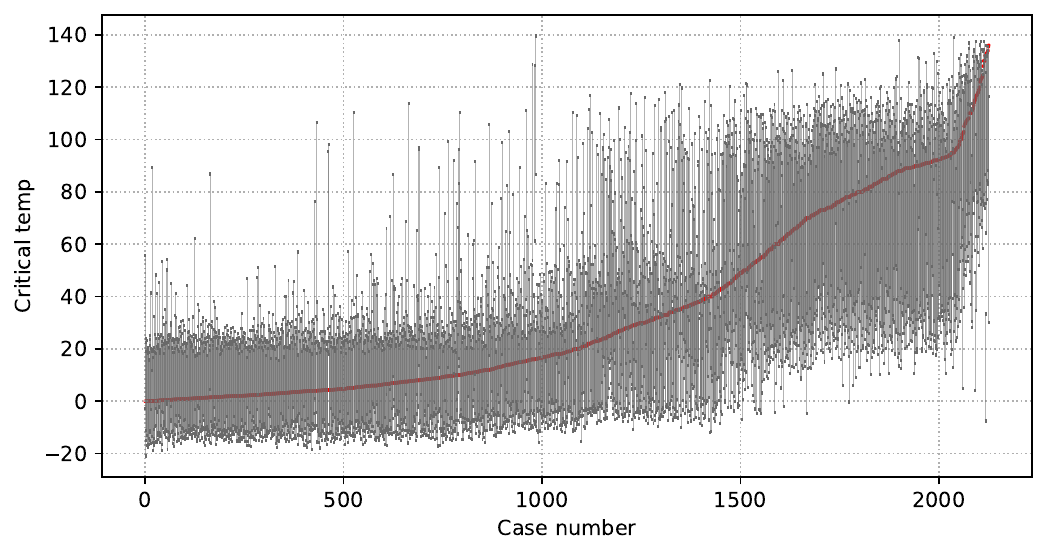}
    }
    \caption{(Left) Estimated density and actual density on the test set. (Right) The prediction intervals on the test set.}
    \label{superconductivity}
\end{figure}

\section{Conclusion}\label{sec:con}
This work   delves into statistical inference within the domain of deep nonparametric regression. This achievement is realized through the utilization of conditional diffusion models, which serve as the cornerstone of our investigative approach. Specifically, we introduce a novel conditional diffusion model and conduct a comprehensive theoretical analysis for the proposed model.
Subsequently, we establish the asymptotic normality, thereby laying a solid foundation for statistical inference in deep nonparametric regression.
Moreover, exploring alternative methods in conditional generative learning and improving convergence rates represent promising avenues for future research endeavors. 
Additionally, the study of statistical inference in deep quantile regression presents itself as another compelling direction for future investigation.

\section*{Appendix}
In this appendix, we provide detailed proofs for all lemmas and theorems presented in this paper.
In Section \ref{sec:appenda}, we begin by deriving the Lipschitz continuity of the drift term.
Section \ref{sec:appendb} contains the detailed proofs for bounding approximation and statistical errors.
In Sections \ref{sec:appendc}-\ref{sec:appende}, we
 bound $\Ebb_{\mcD,\mcT,\mcZ}[\mathrm{TV}(\ck{p}_{T_0},\wt{p}_{T_0})]$,
$\mathrm{TV}(p_{T_0},p_0)$,
and $\Ebb_{\mcD,\mcT,\mcZ}\left[\mathrm{TV}(\wt{p}_{T_0}, p_{T_0})\right]$, respectively.
Finally, in Section \ref{sec:appendf}, we present the proofs of Theorems \ref{th: end_to_end_convergence}-\ref{th: weakcon}. 

\appendix
\section{Lipschitz Continuity}\label{sec:appenda}
In this section, under Assumption \ref{ass: bounded_support}, we establish the Lipschitz continuity with respect to $\by$ on $[T_0,T]\times\Rbb^{d_\mcY}\times[0,B_{\mcX}]^{d_{\mcX}}$ and the Lipschitz continuity with respect to time $t$ on $[T_0,T]\times[-R,R]^{d_\mcY}\times[0,B_{\mcX}]^{d_{\mcX}}$ for any $R > 0$.

\begin{proof}[Proof of Lemma \ref{lem: smooth_lipschitz}.]
We rewrite $\bb(t,\by,\bx)$ as
$$
\begin{aligned}
\bb(t,\by,\bx) = \by + \frac{2\int\left(-\frac{\by-e^{-t}\by_0}{1-e^{-2t}}\right)\exp\left(-\frac{\Vert \by - e^{-t}\by_0\Vert^2}{2(1-e^{-2t})}\right)p_0(\by_0|\bx)\mrd\by_0}{\int\exp\left(-\frac{\Vert \by - e^{-t}\by_0\Vert^2}{2(1-e^{-2t})}\right)p_0(\by_0|\bx)\mrd\by_0}.
\end{aligned}
$$
Then,  we calculate $\nabla_{\by}\bb(t,\by,\bx)$ as follows:
$$
\begin{aligned}
\nabla_{\by}\bb(t,\by,\bx) = &\left(1-\frac{2}{1-e^{-2t}}\right)\mI_{d_\mcY} + \frac{
2\int\frac{(\by - e^{-t}\by_0)^{\otimes2}}{(1-e^{-2t})^2}\exp\Big(-\frac{-\Vert \by - e^{-t}\by_0\Vert^2}{2(1-e^{-2t})}\Big)p_{0}(\by_0|\bx)\mrd\by_0
}
{\int\exp\Big(-\frac{\Vert \by - e^{-t}\by_0\Vert^2}{2(1-e^{-2t})}\Big)p_{0}(\by_0|\bx)\mrd\by_0}\\
& - 2\left(\frac{\int\left(\frac{\by - e^{-t}\by_0}{1-e^{-2t}}\right)\exp\Big(-\frac{\Vert \by - e^{-t}\by_0\Vert^2}{2(1-e^{-2t})}\Big)p_{0}(\by_0|\bx)\mrd\by_0}{\int\exp\Big(-\frac{\Vert \by - e^{-t}\by_0\Vert^2}{2(1-e^{-2t})}\Big)p_0(\by_0|\bx)\mrd\by_0}
\right)^{\otimes2}.
\end{aligned}
$$
Given $\bx$ and $\ov\bY_t=\by$, the conditional density function becomes
$$
\begin{aligned}
    p(\by_0|\by,\bx):&= p(\by_0|\ov{\bY}_t=\by,\bx)\\& = \frac{p_t(\by|\by_0,\bx)p_0(\by_0|\bx)}{p_t(\by|\bx)}\\
    &=\frac{\exp\Big(-\frac{\Vert \by - e^{-t}\by_0\Vert^2}{2(1-e^{-2t})}\Big)p_{0}(\by_0|\bx)}{\int\exp\Big(-\frac{\Vert \by - e^{-t}\by_0\Vert^2}{2(1-e^{-2t})}\Big)p_{0}(\by_0|\bx)\mrd\by_0}.
\end{aligned}
$$
Then,  $\nabla_{\by}\bb(t,\by,\bx)$ can be rewritten as
\begin{equation*}
\begin{aligned}
\nabla_{\by}\bb(t,\by,\bx) &= \left(1-\frac{2}{1-e^{-2t}}\right)\mI_{d_\mcY} + \frac{2}{(1-e^{-2t})^2}\mathrm{Cov}[\by - e^{-t}\ov{\bY}_0|\by,\bx]\\
&=\left(1-\frac{2}{1-e^{-2t}}\right)\mI_{d_\mcY} + \frac{2e^{-2t}}{(1-e^{-2t})^2}\mathrm{Cov}[\ov{\bY}_0|\by,\bx].
\end{aligned}
\end{equation*}
Under Assumption \ref{ass: bounded_support},  we have
$$
\m0\preccurlyeq\mathrm{Cov}[\ov{\bY}_0|\by,\bx]\preccurlyeq d_\mcY\mI_{d_\mcY}.
$$
Thus, we have
\begin{equation*}
\begin{aligned}
\left(1-\frac{2}{1-e^{-2t}}\right)\mI_{d_\mcY}
\preccurlyeq\nabla_{\by}\bb(t,\by,\bx)&\preccurlyeq\left(\frac{2d_\mcY e^{-2t}}{(1-e^{-2t})^2} + 1-\frac{2}{1-e^{-2t}}\right)\mI_{d_\mcY}\\
&\preccurlyeq \frac{2d_\mcY}{(1-e^{-2t})^2}\mI_{d_\mcY},
\end{aligned}
\end{equation*}
which implies that
$$
\Vert \nabla_{\by}\bb(t,\by,\bx) \Vert\leq \frac{2d_\mcY}{(1-e^{-2T_0})^2}\leq \frac{2d_\mcY}{T_0^2},
$$
where we used $T_0\leq 1-e^{-2T_0}\leq 2T_0$. The proof is complete.

\end{proof}

\begin{lemma}\label{lem: lipschitz_t}
For any $R>0$, we have
$$
\mathop{\sup}_{t\in[T_0,T]}\mathop{\sup}_{\by\in[-R,R]^{d_\mcY}}\mathop{\sup}_{\bx\in[0,B_{\mathcal{X}}]^{d_{\mathcal{X}}}}
\Vert\partial_t\bb(t,\by,\bx)\Vert = \mathcal{O}\left(\frac{d_{\mcY}^{3/2}(R + 1)}{T_0^3}\right).
$$
\end{lemma}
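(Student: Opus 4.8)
The plan is to combine the posterior-mean representation of the drift used in the proof of Lemma~\ref{lem: smooth_lipschitz} with the elementary exponential-family identity for differentiating an expectation with respect to a parameter. Writing $m(t,\by,\bx):=\Ebb[\ov\bY_0\mid\ov\bY_t=\by,\bx]$ for the posterior mean under the density $p(\by_0|\by,\bx)$ displayed in the proof of Lemma~\ref{lem: smooth_lipschitz}, we have $2\nabla_{\by}\log p_t(\by|\bx)=-\tfrac{2}{1-e^{-2t}}\bigl(\by-e^{-t}m(t,\by,\bx)\bigr)$, hence
\[
\bb(t,\by,\bx)=\by-\frac{2}{1-e^{-2t}}\,\by+\frac{2e^{-t}}{1-e^{-2t}}\,m(t,\by,\bx).
\]
Differentiating in $t$ by the product rule gives $\partial_t\bb$ as the sum of three contributions: (i) $\partial_t\!\bigl(\tfrac{-2}{1-e^{-2t}}\bigr)\by$, (ii) $\partial_t\!\bigl(\tfrac{2e^{-t}}{1-e^{-2t}}\bigr)m$, and (iii) $\tfrac{2e^{-t}}{1-e^{-2t}}\,\partial_t m$. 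On $[T_0,T]$, using $T_0\le 1-e^{-2T_0}\le 2T_0$ exactly as in Lemma~\ref{lem: smooth_lipschitz}, the scalar coefficient in (i) and in (ii) is $\mathcal{O}(T_0^{-2})$ and the one in (iii) is $\mathcal{O}(T_0^{-1})$; moreover $\|\by\|\le\sqrt{d_\mcY}R$ on $[-R,R]^{d_\mcY}$ and $\|m\|\le\sqrt{d_\mcY}$ because Assumption~\ref{ass: bounded_support} forces $\ov\bY_0\in[0,1]^{d_\mcY}$. Thus (i) is $\mathcal{O}(\sqrt{d_\mcY}\,R/T_0^2)$, (ii) is $\mathcal{O}(\sqrt{d_\mcY}/T_0^2)$, and everything reduces to bounding $\|\partial_t m\|$.

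For the latter I would use that $p(\by_0|\by,\bx)\propto\exp\bigl(g_t(\by_0)\bigr)$ with $g_t(\by_0)=-\tfrac{\|\by-e^{-t}\by_0\|^2}{2(1-e^{-2t})}+\log p_0(\by_0|\bx)$, where only the first summand depends on $t$. Since $p_0(\cdot|\bx)$ has compact support one may differentiate under the integral sign, and the quotient rule yields the identity $\partial_t m_i=\mathrm{Cov}_{p(\cdot|\by,\bx)}\bigl[(\ov\bY_0)_i,\partial_t g_t(\ov\bY_0)\bigr]$. Expanding $\|\by-e^{-t}\by_0\|^2$ and differentiating the three resulting $t$-coefficients, $\partial_t g_t(\by_0)$ decomposes as $h_0'(t)\|\by\|^2+h_1'(t)\langle\by,\by_0\rangle+h_2'(t)\|\by_0\|^2$ with each $|h_k'(t)|=\mathcal{O}(T_0^{-2})$ on $[T_0,T]$. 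The constant-in-$\by_0$ piece $h_0'(t)\|\by\|^2$ does not contribute to the covariance; the linear piece contributes $h_1'(t)[\Sigma\by]_i$, where $\Sigma:=\mathrm{Cov}[\ov\bY_0|\by,\bx]$ satisfies $\|\Sigma\|\le\mathrm{Tr}(\Sigma)\le d_\mcY$ as in Lemma~\ref{lem: smooth_lipschitz}; the quadratic piece contributes $h_2'(t)\,\mathrm{Cov}[(\ov\bY_0)_i,\|\ov\bY_0\|^2]$, and since every coordinate of $\ov\bY_0$ lies in $[0,1]$ the corresponding vector has each coordinate at most $\sum_j|\mathrm{Cov}[(\ov\bY_0)_i,(\ov\bY_0)_j^2]|\le d_\mcY/4$, hence norm $\mathcal{O}(d_\mcY^{3/2})$. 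Assembling gives $\|\partial_t m\|=\mathcal{O}\bigl(d_\mcY^{3/2}(R+1)/T_0^2\bigr)$, so (iii) is $\mathcal{O}\bigl(d_\mcY^{3/2}(R+1)/T_0^3\bigr)$, which dominates (i) and (ii); taking the supremum over $t\in[T_0,T]$, $\by\in[-R,R]^{d_\mcY}$, $\bx\in[0,B_\mcX]^{d_\mcX}$ finishes the proof.

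The main obstacle is the $\|\partial_t m\|$ estimate, and specifically obtaining the \emph{linear} dependence on $R$ claimed in the statement. A crude bound $\|\partial_t m\|\le 2\sqrt{d_\mcY}\,\sup_{\by_0}|\partial_t g_t(\by_0)|$ would only give $(R+1)^2$, because $\partial_t g_t$ itself contains $h_0'(t)\|\by\|^2$ together with $h_2'(t)\|\by_0\|^2$ of total size $\mathcal{O}((R+1)^2/T_0^2)$. Keeping the \emph{covariance} form is what saves the extra power of $R$: the $\|\by\|^2$ piece drops out entirely, and the surviving pieces only see centered polynomial moments of the bounded vector $\ov\bY_0$, which are $\mathcal{O}(\mathrm{poly}(d_\mcY))$ and $R$-free, so the $R$-dependence enters only linearly through $\Sigma\by$. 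The remaining work is routine bookkeeping: justifying differentiation under the integral sign (immediate from compact support) and checking that all the scalar coefficients $h_k'(t)$ and the two coefficients in (i)--(ii) are uniformly $\mathcal{O}(T_0^{-2})$ (resp.\ $\mathcal{O}(T_0^{-1})$) on $[T_0,T]$, which follows from $T_0\le 1-e^{-2T_0}$ just as in Lemma~\ref{lem: smooth_lipschitz}.
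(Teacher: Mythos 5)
Your proposal is correct and follows essentially the same route as the paper: the paper differentiates $\nabla_{\by}\log p_t(\by|\bx)=\mathbb{E}\big[-\tfrac{\by-e^{-t}\ov\bY_0}{1-e^{-2t}}\,\big|\,\by,\bx\big]$ via the quotient rule and arrives at exactly your three contributions — a direct term of size $\mathcal{O}(\sqrt{d_\mcY}(R+1)/T_0^2)$, a $\mathrm{Cov}[\ov\bY_0|\by,\bx]\,\by$ term, and a $\mathrm{Cov}[\ov\bY_0,\Vert\ov\bY_0\Vert^2|\by,\bx]$ term with $\mathcal{O}(T_0^{-3})$ prefactors — bounding the conditional moments by $d_\mcY$, $d_\mcY^{3/2}$, $\sqrt{d_\mcY}$ using the bounded support, with the $\Vert\by\Vert^2$ piece cancelling in the covariance just as in your argument. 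Your packaging through the posterior mean and the exponential-tilting covariance identity is only a cosmetic reorganization of the paper's computation, and your bounds match the paper's conclusion $\mathcal{O}\big(d_\mcY^{3/2}(R+1)/T_0^3\big)$.
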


\begin{proof}
For ease of calculation, we denote $\bs^*(t,\by,\bx) = \nabla_{\by}\log p_t(\by|\bx)$ and
$\phi_t(\by_0,\by,\bx) = \exp\left(-\frac{\Vert \by - e^{-t}\by_0
 \Vert^2}{2(1-e^{-2t})}\right)p_{0}(\by_0|\bx)$. Then
$$
\begin{aligned}
&\partial_{t}\bs^{*}(t,\by,\bx) \\
&= -\frac{\int\left(\frac{e^{-t}\by_0(1-e^{-2t})-2(\by-e^{-t}\by_0)e^{-2t}}{(1-e^{-2t})^2}\right)\phi_{t}(\by_0,\by,\bx)\mrd\by_0 + \int\left(\frac{\by - e^{-t}\by_0}{1-e^{-2t}}\right)\partial_t\phi_t(\by_0,\by,\bx)\mrd\by_0}{\int\phi_{t}(\by_0,\by,\bx)\mrd\by_0} \\
&~~~~+ \frac{\int\left(\frac{\by - e^{-t}\by_0}{1-e^{-2t}}\right)\phi_{t}(\by_0,\by,\bx)\mrd\by_0}{\int\phi_{t}(\by_0,\by,\bx)\mrd\by_0}\cdot\frac{\int\partial_t\phi_{t}(\by_0,\by,\bx)\mrd\by_0}{\int\phi_{t}(\by_0,\by,\bx)\mrd\by_0}.
\end{aligned}
$$
Now we calculate $\partial_t\phi_t(\by_0,\by,\bx)$. Since
$$
\begin{aligned}
    \partial_t\left(-\frac{\Vert \by - e^{-t}\by_0
 \Vert^2}{2(1-e^{-2t})}\right) &=- \frac{(\by-e^{-t}\by_0)^{\top}\by_0e^{-t}(1-e^{-2t}) - \Vert \by - e^{-t}\by_0\Vert^2e^{-2t}}{(1-e^{-2t})^2}\\
&=\frac{e^{-2t}(\Vert \by_0\Vert^2 + \Vert \by \Vert^2) - \by^{\top}\by_0e^{-t}(1 + e^{-2t})}{(1-e^{-2t})^2},
\end{aligned}
$$
we have
$$
\partial_t\phi_t(\by_0,\by,\bx) = \phi_t(\by_0,\by,\bx)\left[\frac{e^{-2t}(\Vert \by_0\Vert^2 + \Vert \by \Vert^2) - \by^{\top}\by_0e^{-t}(1 + e^{-2t})}{(1-e^{-2t})^2}\right].
$$
Thus, $\partial_t\bs^{*}(t,\by,\bx)$ can be expressed as follows:
$$
\begin{aligned}
&\partial_t\bs^{*}(t,\by,\bx)\\
&= -\mathbb{E}\left[\frac{e^{-t}(1 + e^{-2t})\ov{\bY}_0 - 2e^{-2t}\by}{(1-e^{-2t})^2}\Big|\by,\bx\right] \\
&~~~~+ \mathbb{E}\left(\left[\frac{\by - e^{-t}\ov{\bY}_0}{1-e^{-2t}}\right]\left[\frac{e^{-2t}(\Vert \ov{\bY}_0\Vert^2 + \Vert \by \Vert^2) - \by^{\top}\ov{\bY}_0e^{-t}(1 + e^{-2t})}{(1-e^{-2t})^2}\right]\Big|\by,\bx\right)\\
&~~~~-\mathbb{E}\left[\frac{\by - e^{-t}\ov{\bY}_0}{1-e^{-2t}}\Big|\by,\bx\right]\mathbb{E}\left[\frac{e^{-2t}(\Vert \ov{\bY}_0\Vert^2 + \Vert \by \Vert^2) - \by^{\top}\ov{\bY}_0e^{-t}(1 + e^{-2t})}{(1-e^{-2t})^2}\Big|\by,\bx\right]\\
&=-\mathbb{E}\left[\frac{e^{-t}(1 + e^{-2t})\ov{\bY}_0 - 2e^{-2t}\by}{(1-e^{-2t})^2}\Big|\by,\bx\right] + \frac{e^{-2t}(1 + e^{-2t})}{(1-e^{-2t})^3}\mathrm{Cov}[\ov{\bY}_0|\by,\bx]\by \\
&~~~~- \frac{e^{-3t}}{(1-e^{-2t})^3}\left(\mathbb{E}[\Vert \ov{\bY}_0\Vert^2\ov{\bY}_0|\by,\bx] - \mathbb{E}[\Vert \ov{\bY}_0\Vert^2|\by,\bx]\cdot\mathbb{E}[\ov{\bY}_0|\by,\bx]\right).
\end{aligned}
$$
By Assumption \ref{ass: bounded_support}, the operator norm of $\mathrm{Cov}[\ov{\bY}_0|\by,\bx]$ satisfies
$$\Vert\mathrm{Cov}[\ov{\bY}_0|\by,\bx]\Vert_{\mathrm{op}}\leq d_\mcY.$$
By Cauchy-Schwarz inequality, we have
$$
\left\Vert\mathbb{E}[\Vert \ov{\bY}_0\Vert^2\ov{\bY}_0|\by,\bx]\right\Vert \leq \Ebb[\Vert \ov{\bY}_0 \Vert^6|\by,\bx]^{1/2} \leq d_{\mcY}^{3/2},
$$
and
$$
\left\Vert
\mathbb{E}[\ov{\bY}_0|\by,\bx]
\right\Vert \leq \Ebb[\Vert\ov{\bY}_0\Vert^2|\by,\bx]^{1/2}\leq \sqrt{d_\mcY}.
$$
Therefore, for any $t\in[T_0,T]$, $\by\in[-R,R]^{d_\mcY}$ and $\bx\in[0,B_{\mathcal{X}}]^{d_{\mathcal{X}}}$,  we obtain
$$
\begin{aligned}
    \Vert\partial_t\bs^{*}(t,\by,\bx)\Vert
    &\leq \frac{2\sqrt{d_\mcY} + 2\sqrt{d_\mcY}R}{(1-e^{-2T_0})^2} + \frac{2d_{\mcY}^{3/2}R}{(1-e^{-2T_0})^3} + \frac{2d_{\mcY}^{3/2}}{(1-e^{-2T_0})^3}\\
    &\leq\frac{2\sqrt{d_\mcY}(R + 1)}{T_0^2} + \frac{2d_{\mcY}^{3/2}(R + 1)}{T_0^3},
\end{aligned}
$$
which implies 
$$
\mathop{\sup}_{t\in[T_0,T]}\mathop{\sup}_{\by\in[-R,R]^{d_\mcY}}\mathop{\sup}_{\bx\in[0,B_{\mathcal{X}}]^{d_{\mathcal{X}}}}
\Vert\partial_t\bs^{*}(t,\by,\bx)\Vert = \mathcal{O}\left(\frac{d_{\mcY}^{3/2}(R + 1)}{T_0^3}\right).
$$
The above inequality also implies 
$$
\mathop{\sup}_{t\in[T_0,T]}\mathop{\sup}_{\by\in[-R,R]^{d_\mcY}}\mathop{\sup}_{\bx\in[0,B_{\mathcal{X}}]^{d_{\mathcal{X}}}}
\Vert\partial_t\bb(t,\by,\bx)\Vert = \mathcal{O}\left(\frac{d_{\mcY}^{3/2}(R + 1)}{T_0^3}\right).
$$
The proof is complete.
\end{proof}

\section{Approximation and Statistical Errors}\label{sec:appendb}
In this section, we first give the upper bounds of approximation  and statistical errors. Then we prove Theorem \ref{th: drift_estimation}. In subsection \ref{subsec: approx_error}, we prove Lemma \ref{lem: approx_err}. In subsection \ref{subsec: stat_error}, we prove Lemma \ref{lem: stat_error}. In subsection \ref{subsec: error_bound_for_drift}, we prove Theorem \ref{th: drift_estimation}. We start by proving Lemma \ref{lem: connection}.

\begin{proof}[Proof of Lemma \ref{lem: connection}.] For any $\bs:[T_0, T]\times\mathbb{R}^{d_\mcY}\times[0,B_{\mathcal{X}}]^{d_{\mathcal{X}}}\rightarrow\mathbb{R}^{d_\mcY}$, we have
\begin{align*}
 &~\mathbb{E}_{\ov\bY_t,\ov\bY_0,\bx}\left\Vert \bs(t,\overline{\bY}_t,\bx) - \overline{\bY}_t - 2\nabla_{\by}\log{p}_t(\overline{\bY}_t|\overline{\bY}_0,\bx)\right\Vert^2
 \\
&~~~~~~~~~ - \mathbb{E}_{\ov\bY_t,\ov\bY_0,\bx}\left\Vert \bb(t,\overline{\bY}_t,\bx)- \overline{\bY}_t - 2\nabla_{\by}\log{p}_t(\overline{\bY}_t|\overline{\bY}_0,\bx)\right\Vert^2\\
=&~\mathbb{E}_{\ov\bY_t,\bx}\Vert \bs(t,\overline{\bY}_t,\bx) \Vert^2 - \mathbb{E}_{\ov\bY_t,\bx}\Vert \bb(t,\overline{\bY}_t,\bx) \Vert^2\\
&~~~~- 2\mathbb{E}_{\ov\bY_t,\ov\bY_0,\bx}\left\langle \bs(t,\overline{\bY}_t,\bx)-\bb(t,\overline{\bY}_t,\bx), \overline{\bY}_t + 2\nabla_{\by}\log p_t(\overline{\bY}_t|\overline{\bY}_0,\bx)\right\rangle.
\end{align*}
Taking the conditional expectation for $\overline{\bY}_t$, $\bx$, we have
$$
\begin{aligned}
&~\mathbb{E}_{\ov\bY_t,\ov\bY_0,\bx}\left\langle \bs(t,\overline{\bY}_t,\bx)-\bb(t,\overline{\bY}_t,\bx), \overline{\bY}_t + 2\nabla_{\by}\log p_t(\overline{\bY}_t|\overline{\bY}_0,\bx)\right\rangle\\
=&~\mathbb{E}_{\ov\bY_t,\ov\bY_0,\bx}\left\langle \bs(t,\overline{\bY}_t,\bx)-\bb(t,\overline{\bY}_t,\bx), \overline{\bY}_t-\frac{2(\overline{\bY}_t - e^{-t}\overline{\bY}_0)}{1-e^{-2t}}\right\rangle\\
=&~\mathbb{E}_{\ov\bY_t,\bx}\left\langle \bs(t,\overline{\bY}_t,\bx)-\bb(t,\overline{\bY}_t,\bx), \overline{\bY}_t + 2\mathbb{E}_{\ov\bY_0}\left[-\frac{\overline{\bY}_t - e^{-t}\overline{\bY}_0}{1-e^{-2t}}\Big|\overline{\bY}_t,\by\right]\right\rangle\\
=&~\mathbb{E}_{\ov\bY_t,\bx}\langle \bs(t,\overline{\bY}_t,\bx)-\bb(t,\overline{\bY}_t,\bx),\bb(t,\overline{\bY}_t,\bx)\rangle.
\end{aligned}
$$
Therefore, we obtain
$$
\begin{aligned}
&~\mathbb{E}_{\ov\bY_t,\ov\bY_0,\bx}\left\Vert \bs(t,\overline{\bY}_t,\bx) - \overline{\bY}_t - 2\nabla_{\by}\log p_t(\overline{\bY}_t|\overline{\bY}_0,\bx)\right\Vert^2\\
& ~~~~~~~~  - \mathbb{E}_{\ov\bY_t,\ov\bY_0,\bx}\left\Vert \bb(t,\overline{\bY}_t,\bx) - \overline{\bY}_t - 2\nabla_{\by}\log p_t(\overline{\bY}_t| \overline{\bY}_0,\bx)\right\Vert^2 \\
=&~\mathbb{E}_{\ov\bY_t,\bx}\Vert \bs(t,\overline{\bY}_t,\bx) - \bb(t,\overline{\bY}_t,\bx)\Vert^2.
\end{aligned}
$$
This implies that
$$
\mathcal{L}(\bs) - \mathcal{L}(\bb) = \frac{1}{T - T_0}\int_{T_0}^{T}\mathbb{E}_{\ov\bY_t,\bx}\Vert \bs(t,\overline{\bY}_t,\bx) - \bb(t,\overline{\bY}_t,\bx)\Vert^2\mrd t.
$$
The proof is complete.
\end{proof}

\subsection{Approximation Error}\label{subsec: approx_error}
\textbf{Approximation on Compact Set.} The goal is to find a neural network $\bs\in\mathrm{NN}$ to approximate $\bb$. A major difficulty in approximating $\bb$ is that the input space $[T_0, T]\times\mathbb{R}^{d_\mcY}\times[0,B_{\mathcal{X}}]^{d_{\mathcal{X}}}$ is unbounded. In order to address this difficulty, we 
partition $\mathbb{R}^{d_\mcY}$ in to $\mathcal{K}$ and $\mathcal{K}^c$.

First, we consider the approximation on $[T_0,T]\times\mathcal{K}\times[0,B_{\mathcal{X}}]^{d_{\mathcal{X}}}$. Let $\mathcal{K}=\{\by\in\Rbb^{d_\mcY}|\Vert{\by}\Vert_{\infty}\leq R\}$ to be a $d_\mcY$-dimensional hypercube with edge length $2R>0$, where $R$ will be determined later. On $[T_0, T]\times\mathcal{K}\times[0,B_{\mathcal{X}}]^{d_{\mathcal{X}}}$, we approximate $k$-coordinate maps $b_{k}(t,\by,\bx)$ separately, where $\bb(t,\by,\bx) = [b_{1}(t,\by,\bx),\cdots,b_{d_\mcY}(t,\by,\bx)]^{\top}$. Then, we can obtain an approximation of $\bb(t,\by,\bx)$ by concatenation. We rescale the input by $t^{\prime} = \frac{t-T_0}{T - T_0}$, $\by^{\prime} = \frac{\by + R\mathbf{1}}{2R}$ and $\bx^{\prime} = \frac{\bx}{B_{\mathcal{X}}}$, so that the transformed space is $[0,1]\times[0,1]^{d_\mcY}\times[0,1]^{d_{\mathcal{X}}}$. Such a transformation can be exactly implemented by a single ReLU layer.

By Lemma \ref{lem: smooth_lipschitz}, $\bb(t,\by,\bx)$ is $\xi$-Lipschitz in $\by$. We define the rescaled function on the transformed input space as $\bB(t^{\prime}, \by^{\prime}, \bx^{\prime}):=\bb((T - T_0)t^{\prime} + T_0, 2R\by^{\prime}-R\mathbf{1},B_{\mathcal{X}}\bx^{\prime})$, so that $\widetilde{\bB}$ is $2R\xi$-Lipschitz in $\by^{\prime}$. Furthermore, by Assumption \ref{ass: smoothness_y}, $\bB$ is $\beta B_{\mathcal{X}}$-Lipschitz in $\bx^{\prime}$. We denote
$$
\tau(R):=\mathop{\sup}_{t\in[T_0,T]}\mathop{\sup}_{\by\in[-R,R]^{d_\mcY}}\mathop{\sup}_{\bx\in[0,B_{\mathcal{X}}]^{d_{\mathcal{X}}}}\Vert\partial_{t}\bb(t,\by,\bx)\Vert.
$$
By Lemma \ref{lem: lipschitz_t}, $\tau(R) = \mathcal{O}\Big(\frac{d_{\mcY}^{3/2}(1 + R)}{T_0^3}\Big)$, then $\bB$ is $(T-T_0)\tau(R)$-Lipschitz in $t^{\prime}$. Now the goal becomes approximating $\bB$ on $[0,1]\times[0,1]^{d_\mcY}\times[0,1]^{d_{\mathcal{X}}}$.

Second, we partition the time interval $[0, 1]$ into non-overlapping sub-intervals of length $e_1$. We also partition $[0,1]^{d_\mcY}$ for $\by^{\prime}$ into non-overlapping hypercubes with equal edge length and $[0,1]^{d_{\mathcal{X}}}$ for $\bx^{\prime}$ into equal edge length $e_3$. $e_1$, $e_2$ and $e_3$ will be chosen depending on the desired approximation error. We denote $N_1=\lceil\frac{1}{e_1}\rceil$, $N_2=\lceil\frac{1}{e_2}\rceil$ and $N_3=\lceil\frac{1}{e_3}\rceil$.

Let $\boldsymbol{u}=[u_1,u_2,\cdots,u_{d_\mcY}]^{\top}\in[N_2]^{d_\mcY}$, $\boldsymbol{v}=[v_1,v_2,\cdots,v_{d_\mcX}]^{\top}\in[N_3]^{d_\mcX}$ be multi-indexes. We define $\overline{B}_i(t^{\prime}, \by^{\prime}, \bx^{\prime})$ as
$$
\overline{B}_{i}(t^{\prime}, \by^{\prime}, \bx^{\prime}):=\sum_{j\in[N_1],\boldsymbol{u}\in[N_2]^{d_\mcY},\boldsymbol{v}\in[N_3]^{d_\mcX}}B_{i}\left(\frac{j}{N_1},\frac{\boldsymbol{u}}{N_2},\frac{\boldsymbol{v}}{N_3}\right)\Psi_{j,\boldsymbol{u},\boldsymbol{v}}(t^{\prime}, \by^{\prime}, \bx^{\prime}), i=1,2,\cdots,d_\mcY,
$$
where $\Psi_{j,\boldsymbol{u},\boldsymbol{v}}(t^{\prime}, \by^{\prime}, \bx^{\prime})$ is a partition of unity function which satisfies that 
$$
\sum_{j\in[N_1],\boldsymbol{u}\in[N_2]^{d_\mcY},\boldsymbol{v}\in[N_3]^{d_\mcX}}\Psi_{j,\boldsymbol{u},\boldsymbol{v}}(t^{\prime}, \by^{\prime}, \bx^{\prime})\equiv{1}, \text{ on } [0,1]\times[0,1]^{d_\mcY}\times[0,1]^{d_{\mathcal{X}}}.
$$
We choose $\Psi_{j,\boldsymbol{u},\boldsymbol{v}}$ as a product of coordinate-wise trapezoid functions:
$$
\Psi_{j,\boldsymbol{u},\boldsymbol{v}}:=\psi\left(3N_1\left(t^{\prime} - \frac{j}{N_1}\right)\right)\prod_{i=1}^{d_\mcY}\psi\left(3N_2\left(y_{i}^{\prime} - \frac{u_i}{N_2}\right)\right)\prod_{i=1}^{d_{\mathcal{X}}}\psi\left(3N_3\left(x_{i}^{\prime} - \frac{v_i}{N_3}\right)\right),
$$
where $\psi$ is a trapezoid function,
$$
\psi(x):=
\begin{cases}
    1, &|x| < 1,\\
    2 - |x|, &|x|\in[1,2],\\
    0, & |x| > 2.
\end{cases}
$$
We claim that
\begin{itemize}
    \item $\overline{B}_{i}(t^{\prime}, \by^{\prime}, \bx^{\prime})$ is an approximation of $B_{i}(t^{\prime}, \by^{\prime}, \bx^{\prime})$;
    \item $\overline{B}_{i}(t^{\prime}, \by^{\prime}, \bx^{\prime})$ can be implemented by a ReLU neural network $S_i(t^{\prime}, \by^{\prime}, \bx^{\prime})$ with small error.
\end{itemize}
Both claims are verified in \cite[Lemma 10]{chen2022distribution}, where we only need to substitute the
Lipschitz constant $2R\xi$, $\beta B_{\mathcal{X}}$ and $(T - T_0)\tau(R)$ into the error analysis. By concatenating $S_{i}(t^{\prime}, \by^{\prime}, \bx^{\prime}), i=1,2,\cdots,d_\mcY$ together, we construct $\bS=[S_1,S_2,\cdots,S_{d_\mcY}]^{\top}$. Given $\epsilon>0$, we can achieve
$$
\mathop{\sup}_{t^{\prime},\by^{\prime},\bx^{\prime}\in[0,1]\times[0,1]^{d_\mcY}\times[0,1]^{d_{\mathcal{X}}}}\Vert \bS(t^{\prime}, \by^{\prime}, \bx^{\prime}) - \bB(t^{\prime}, \by^{\prime}, \bx^{\prime}) \Vert_{\infty}\leq\epsilon,
$$
where the neural network configuration is 
$$
L = \mathcal{O}\Big(\log\frac{1}{\epsilon} + d_\mcX + d_{\mathcal{Y}}\Big), M = \mathcal{O}\Big(d_\mcY(T-T_0)\tau(R)(R\xi)^{d_\mcY}(\beta B_{\mathcal{X}})^{d_{\mathcal{X}}}\epsilon^{-(d_\mcX+d_{\mathcal{Y}}+1)}\Big),
$$
$$
J = \mathcal{O}\left(d_\mcY(T-T_0)\tau(R)(R\xi)^{d_\mcY}(\beta B_{\mathcal{X}})^{d_{\mathcal{X}}}\epsilon^{-(d_\mcX+d_{\mathcal{Y}}+1)}\Big(\log\frac{1}{\epsilon} + d_\mcX+d_{\mathcal{Y}}\Big)\right),
$$
$$
K = \mathcal{O}\left(\frac{\sqrt{d_\mcY}R}{T_0}\right), \kappa = \mathcal{O}(\max\{1,R\xi,(T - T_0)\tau(R),\beta B_{\mathcal{X}}\}).
$$
Here, we set $e_1 = \mathcal{O}\Big(\frac{\epsilon}{(T - T_0)\tau(R)}\Big)$, $e_2 = \mathcal{O}\Big(\frac{\epsilon}{R\xi}\Big)$ and $e_3=\mathcal{O}\Big(\frac{\epsilon}{\beta B_{\mathcal{X}}}\Big)$.
The output range $K$ is computed by
$$
\begin{aligned}
K &= \mathop{\sup}_{t,\by,\bx\in[T_0,T]\times[-R,R]^{d_\mcY}\times[0,B_{\mathcal{X}}]^{d_{\mathcal{X}}}}\Vert 
\bb(t,\by,\bx)\Vert\\
&\leq\mathop{\sup}_{t,\by,\bx\in[T_0,T]\times[-R,R]^{d_\mcY}\times[0,B_{\mathcal{X}}]^{d_{\mathcal{X}}}}\left(\frac{\Vert \by \Vert}{1-e^{-t}} + \frac{2e^{-t}\sqrt{d_\mcY}}{1 - e^{-2t}}\right)\\
&\leq \frac{\sqrt{d_\mcY}(R + 1)}{1-e^{-T_0}} =\mathcal{O}\left(\frac{\sqrt{d_\mcY}R}{T_0}\right).
\end{aligned}
$$
Further, the neural network $\bS(t^{\prime},\by^{\prime},\bx^{\prime})$ is Lipschitz continuous in $\bx^{\prime}$, i.e.,
for any $\bx^{\prime}_1$, $\bx^{\prime}_2\in[0,1]^{d_{\mathcal{X}}}$, $t^{\prime}\in[0,1]$ and $\by^{\prime}\in[0,1]^{d_\mcY}$, it holds
$$
\Vert \bS(t^{\prime},\by^{\prime},\bx^{\prime}_1) - \bS(t^{\prime},\by^{\prime},\bx^{\prime}_2) \Vert_{\infty}\leq 10d\beta B_{\mathcal{X}}\Vert \bx^{\prime}_1 - \bx^{\prime}_2\Vert.
$$
$\bS(t^{\prime},\by^{\prime},\bx^{\prime})$ is also Lipschitz continuous in $\by^{\prime}$, i.e., for any $t^{\prime}\in[0,1]$, $\bx^{\prime}\in[0,1]^{d_{\mathcal{X}}}$, and $\by^{\prime}_1$, $\by^{\prime}_2\in[0,1]^{d_\mcY}$, it holds
$$
\Vert \bS(t^{\prime},\by^{\prime}_1, \bx^{\prime}) - \bS(t^{\prime},\by^{\prime}_2, \bx^{\prime}) \Vert_{\infty}\leq 20dR\xi\Vert \by^{\prime}_1 - \by^{\prime}_2\Vert.
$$
And for any $t^{\prime}_{1},t^{\prime}_{2}\in[0,1]$, $\by^{\prime}\in[0,1]^{d_\mcY}$, $\bx^{\prime}\in[0,1]^{d_{\mathcal{X}}}$, it holds
$$
\Vert \bS(t^{\prime}_{1},\by^{\prime},\bx^{\prime}) - \bS(t^{\prime}_{2},\by^{\prime},\bx^{\prime}) \Vert_{\infty}\leq 10(T - T_0)\tau(R)|t^{\prime}_{1} - t^{\prime}_{2}|.
$$
Combining with the single input transformation layer, we denote 
$$
\bs(t,\by,\bx) = \bS\left(\frac{t-T_0}{T - T_0}, \frac{\by + R\mathbf{1}}{2R}, \frac{\bx}{B_{\mathcal{X}}}\right),
$$
then for any 
$t\in[T_0,T]$, $\bx_1,\bx_2\in[0,B_{\mathcal{X}}]^{d_{\mathcal{X}}}$, $\by\in\mathcal{K}$, it holds
$$
\Vert \bs(t,\by,\bx_1) - \bs(t,\by,\bx_2) \Vert_{\infty}\leq 10d\beta\Vert \bx_1 - \bx_2 \Vert.
$$
Moreover, for any $t\in[T_0,T]$, $\bx\in[0,B_{\mathcal{X}}]^{d_{\mathcal{X}}}$, $\by_1, \by_2\in\mathcal{K}$, it holds
$$
\Vert \bs(t,\by_1,\bx) - \bs(t, \by_2, \bx) \Vert_{\infty}\leq 10d\xi\Vert \by_1- \by_2 \Vert.
$$
And for any $t_1,t_2\in[T_0,T]$, $\by\in\mathcal{K}$, $\bx\in[0,B_{\mathcal{X}}]^{d_{\mathcal{X}}}$, it holds
$$
\Vert \bs(t_1,\by,\bx) - \bs(t_2,\by,\bx)\Vert_{\infty}\leq 10\tau(R)|t_1-t_2|.
$$
\\
\noindent\textbf{Global Lipschitz.} In order to ensure global Lipschitz continuity of $\bs$ with respect to $\by$, we introduce the following lemma.

\begin{lemma}\label{lem: clipping}
For $\mathcal{K}=\{\by\in\mathbb{R}^{d_\mcY}|\Vert 
{\by}\Vert_{\infty}\leq R\}$, there exists an $1$-Lipshitz map $\mathcal{T}_{\mathcal{K}}:\mathbb{R}^{d_\mcY}\rightarrow\mathcal{K}$, which satisfies $\mathcal{T}_{\mathcal{K}}(\by)=\by$, for any $\by\in\mathcal{K}$. And $\mathcal{T}_{\mathcal{K}}$ can be expressed as an 2-layer ReLU network with width order $\mathcal{O}(d_\mcY)$.
\end{lemma}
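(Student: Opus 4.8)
The plan is to take $\mathcal{T}_{\mathcal{K}}$ to be coordinatewise truncation onto the box $\mathcal{K}=[-R,R]^{d_\mcY}$: define the scalar clip $\mathrm{cl}(u):=\max\{-R,\min\{R,u\}\}$ and set $\mathcal{T}_{\mathcal{K}}(\by):=(\mathrm{cl}(y_1),\dots,\mathrm{cl}(y_{d_\mcY}))^{\top}$. This is precisely the Euclidean metric projection onto the closed convex set $\mathcal{K}$, which makes the required properties transparent. First, since $\mathrm{cl}(u)\in[-R,R]$ for every $u\in\mathbb{R}$, the image of $\mathcal{T}_{\mathcal{K}}$ lies in $\mathcal{K}$; and if $\by\in\mathcal{K}$ then $|y_i|\le R$ for all $i$, so $\mathrm{cl}(y_i)=y_i$ and hence $\mathcal{T}_{\mathcal{K}}(\by)=\by$.

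Second, for the Lipschitz property I would use that the scalar map $\mathrm{cl}$ is nonexpansive on $\mathbb{R}$, i.e. $|\mathrm{cl}(u)-\mathrm{cl}(v)|\le|u-v|$ (it is monotone nondecreasing with slope in $\{0,1\}$, equivalently the metric projection onto an interval). Applying this in each coordinate and summing squares gives $\|\mathcal{T}_{\mathcal{K}}(\by)-\mathcal{T}_{\mathcal{K}}(\bz)\|^2=\sum_{i=1}^{d_\mcY}|\mathrm{cl}(y_i)-\mathrm{cl}(z_i)|^2\le\sum_{i=1}^{d_\mcY}|y_i-z_i|^2=\|\by-\bz\|^2$, so $\mathcal{T}_{\mathcal{K}}$ is $1$-Lipschitz with respect to $\|\cdot\|$ (the same coordinatewise estimate also yields the $\ell^\infty$ version).

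Third, to realize $\mathcal{T}_{\mathcal{K}}$ as a small ReLU network I would verify the identity $\mathrm{cl}(u)=\mathrm{ReLU}(u+R)-\mathrm{ReLU}(u-R)-R$ by checking the three regimes $u<-R$, $-R\le u\le R$, $u>R$. Thus each output coordinate is produced by two ReLU units, and stacking the $d_\mcY$ coordinates yields a single hidden ReLU layer of width $2d_\mcY=\mathcal{O}(d_\mcY)$ followed by an affine output layer, i.e. a $2$-layer ReLU network; fitting this into the format of Definition \ref{relufnns} is routine bookkeeping.

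Since the construction is entirely explicit, there is no genuinely hard step here; the only point deserving a little care is the passage from scalar nonexpansiveness of $\mathrm{cl}$ to $\ell^2$-nonexpansiveness of the vector map, which is exactly the sum-of-squares estimate above.
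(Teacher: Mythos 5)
Your proposal is correct and follows essentially the same route as the paper: coordinatewise clipping onto $[-R,R]$, nonexpansiveness checked coordinate by coordinate, and an explicit ReLU realization of the scalar clip. The only cosmetic differences are that you use the two-unit identity $\mathrm{ReLU}(u+R)-\mathrm{ReLU}(u-R)-R$ (the paper uses a four-unit identity, both giving width $\mathcal{O}(d_\mcY)$) and you record the Euclidean $\ell^2$ bound directly, which is in fact the form needed later in the paper's argument.
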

\begin{proof}
We consider the following function
$$
f_{R}(y):=
\begin{cases}
    R, & y > R\\
    y, & y\in[-R, R]\\
    -R, & y < -R
\end{cases}
$$
and define the map $\mathcal{T}_{\mathcal{K}}(\by)=(f_{R}(y_1),f_{R}(y_2),\cdots,f_{R}(y_{d_\mcY}))^{\top}$. Then, for any $\by\in\mathcal{K}$, $\mathcal{T}_{\mathcal{K}}(\by) = \by$, and for any $\by\in\mathcal{K}^{c}$, $\mathcal{T}_{\mathcal{K}}(\by)\in\partial\mathcal{K}$. By a simple calculation, for any $\by,\boldsymbol{z}\in\Rbb^{d_\mcY}$, we have
$$
\begin{aligned}
\Vert\mathcal{T}_{\mathcal{K}}(\by) - \mathcal{T}_{\mathcal{K}}(\boldsymbol{z})\Vert_{\infty} &= \mathop{\max}_{i=1,2,\cdots,d_\mcY}|f_{R}(y_{i}) - f_{R}(z_{i})|\\
&\leq\mathop{\max}_{i=1,2,\cdots,d_\mcY}|y_i - z_i| = \Vert 
\by - \boldsymbol{z} \Vert_{\infty}\leq\Vert 
\by - \boldsymbol{z} \Vert.
\end{aligned}
$$
And it is easy to check
$$
f_{R}(y) = \mathrm{ReLU}(y) - \mathrm{ReLU}(-y) + \mathrm{ReLU}(-y - R) - \mathrm{ReLU}(y - R).
$$
The proof is complete.
\end{proof}

We now consider the neural network $\bs(t,\mathcal{T}_{\mathcal{K}}(\by),\bx)$. It preserves the approximation capability of $\bs(t,\by,\bx)$, i.e.,
$$
\begin{aligned}
&\sup_{t,\by,\bx\in[T_0,T]\times\mathcal{K}\times[0,B_{\mathcal{X}}]^{d_{\mathcal{X}}}}\Vert \bs(t,\mathcal{T}_{\mathcal{K}}(\by),\bx) - \bb(t,\by,\bx)\Vert_{\infty} \\
=&\sup_{t,\by,\bx\in[T_0,T]\times\mathcal{K}\times[0,B_{\mathcal{X}}]^{d_{\mathcal{X}}}}\Vert \bs(t,\by,\bx) - \bb(t,\by,\bx)\Vert_{\infty}\leq\epsilon. 
\end{aligned}
$$
Moreover, it holds the upper bound, i.e.,
$$
\sup_{t,\by,\bx\in[T_0,T]\times\mathbb{R}^{d_\mcY}\times[0,B_{\mathcal{X}}]^{d_{\mathcal{X}}}}\Vert \bs(t,\mathcal{T}_{\mathcal{K}}(\by),\bx)\Vert = \sup_{t,\by,\bx\in[T_0,T]\times\mathcal{K}\times[0,B_{\mathcal{X}}]^{d_{\mathcal{X}}}}\Vert \bs(t,\by,\bx)\Vert \leq K.
$$
For any $\bx\in[0,B_{\mathcal{X}}]^{d_\mcX}$, $\by_1,\by_2\in\mathbb{R}^{d_\mcY}$ and $t\in[T_0,T]$, it yields that
$$
\begin{aligned}
\Vert \bs(t,\mathcal{T}_{\mathcal{K}}(\by_1),\bx) - \bs(t,\mathcal{T}_{\mathcal{K}}(\by_2),\by)\Vert_{\infty}
&\leq 10d\xi\Vert \mathcal{T}_{\mathcal{K}}(\by_1) - \mathcal{T}_{\mathcal{K}}(\by_2)\Vert\\
&\leq 10d\xi\Vert \by_1 - \by_2\Vert.
\end{aligned}
$$
Therefore, $\bs(t,\mathcal{T}_{\mathcal{K}}(\by),\bx)$ is global Lipschitz continuous with respect to $\by$. For notational convenience, we still use $\bs(t,\by,\bx)$ to denote $\bs(t,\mathcal{T}_{\mathcal{K}}(\by),\bx)$.
\\\\
\noindent\textbf{$L^2$ Approximation.} For any $t\in[T_0,T]$, $\bx\in[0,B_{\mcX}]^{d_{\mcX}}$, the $L^2$ approximation error can be decomposed into two terms,
\begin{equation*}\label{eq: L2_error}
\begin{aligned}
\mathbb{E}_{\ov\bY_t}\Vert \bs(t,\overline{\bY}_t,\bx) - \bb(t,\overline{\bY}_t,\bx)\Vert^2 &= \mathbb{E}_{\ov\bY_t}\left(\Vert \bs(t,\overline{\bY}_t,\bx) - \bb(t,\overline{\bY}_t,\bx)\Vert^2\mI_{\{\Vert \overline{\bY}_t \Vert_{\infty}\leq R\}}\right)\\
&~~~~~+ \mathbb{E}_{\ov\bY_t}\left(\Vert \bs(t,\overline{\bY}_t,\bx) - \bb(t,\overline{\bY}_t,\bx)\Vert^2\mI_{\{\Vert \overline{\bY}_t \Vert_{\infty} > R\}}\right).
\end{aligned}
\end{equation*}
The first term satisfies
$$
\begin{aligned}
&~~~~\mathbb{E}_{\ov\bY_t}\left(\Vert \bs(t,\overline{\bY}_t,\bx) - \bb(t,\overline{\bY}_t,\bx)\Vert^2\mI_{\{\Vert \overline{\bY}_t \Vert_{\infty}\leq R\}}\right)\\
&\leq d_\mcY\sup_{t,\by,\bx\in[T_0,T]\times[-R,R]^{d_\mcY}\times[0,B_{\mathcal{X}}]^{d_{\mathcal{X}}}}\Vert \bs(t,\by,\bx) - \bb(t,\by,\bx)\Vert^2_{\infty}\\
&\leq d_\mcY\epsilon^2.
\end{aligned}
$$
The second term satisfies
$$
\begin{aligned}
&\mathbb{E}_{\ov\bY_t}\left(\Vert \bs(t,\overline{\bY}_t,\bx) - \bb(t,\overline{\bY}_t,\bx)\Vert^2\mI_{\{\Vert \overline{\bY}_t \Vert_{\infty} > R\}}\right)\\
&\leq 2\wh{K}^2\mathbb{P}_{\ov\bY_t}(\Vert 
 \overline{\bY}_t\Vert_{\infty} > R) + 2\mathbb{E}_{\ov\bY_t}\left[\Vert \bb(t,\overline{\bY}_t,\bx) \Vert^2\mI_{\{\Vert \overline{\bY}_t\Vert_{\infty} > R\}}\right]\\
&\leq 2\wh{K}^2\mathbb{P}_{\ov\bY_t}(\Vert 
 \overline{\bY}_t\Vert_{\infty} > R) + 2\left(\mathbb{E}_{\ov\bY_t}\Vert \bb(t,\overline{\bY}_t,\bx)\Vert^4\right)^{\frac{1}{2}}\mathbb{P}_{\ov\bY_t}(\Vert 
 \overline{\bY}_t\Vert_{\infty} > R)^{\frac{1}{2}},
\end{aligned}
$$
where $\wh{K} = \sup_{ \by\in\mathbb{R}^{d_\mcY}, \bx\in[0,B_{\mathcal{X}}]^{d_{\mathcal{X}}}}\Vert \bs(t,\by,\bx)\Vert\leq  \frac{\sqrt{d_\mcY}(R + 1)}{1-e^{-t}}$. 
\\\\
Now we bound $\mathbb{E}_{\ov\bY_t}\Vert \bb(t,\overline{\bY}_t,\bx)\Vert^4$ and $\mathbb{P}_{\ov\bY_t}(\Vert 
 \overline{\bY}_t\Vert_{\infty} > R)$ separately. By Jensen's inequality, 
$$
\begin{aligned}
\mathbb{E}_{\ov\bY_t}\Vert \bb(t,\overline{\bY}_t,\bx)\Vert^4
&\leq\mathbb{E}_{\ov\bY_t}\left(\frac{\Vert\overline{\bY}_t\Vert}{1-e^{-t}} + \frac{2e^{-t}\mathbb{E}_{\ov\bY_0}[\Vert\ov{\bY}_0\Vert|\ov\bY_t]}{1-e^{-2t}}\right)^4\\
&\leq\mathbb{E}_{\ov\bY_t}\left(\frac{\Vert\overline{\bY}_t\Vert + \mathbb{E}_{\ov\bY_0}[\Vert\ov\bY_0\Vert|\ov\bY_t]}{1-e^{-t}}\right)^4\\
&\leq\frac{8\mathbb{E}_{\ov\bY_t}\Vert \overline{\bY}_t\Vert^4 + 8\mathbb{E}_{\ov\bY_0}\Vert\ov\bY_0\Vert^4}{(1-e^{-t})^4}.
\end{aligned}
$$
The fourth moment $\mathbb{E}_{\ov\bY_t}\Vert \overline{\bY}_t\Vert^4$ can be bounded by
$$
\begin{aligned}
\mathbb{E}_{\ov\bY_t}\Vert \overline{\bY}_t\Vert^4
&\leq\mathbb{E}_{\ov\bY_0}\left[\mathbb{E}\left(e^{-t}\Vert \overline{\bY}_0 \Vert + \sqrt{1-e^{-2t}}\Vert \bZ \Vert\right)^4\Big|\overline{\bY}_0\right]\\
&\leq 8e^{-4t}\mathbb{E}_{\ov\bY_0}\Vert\ov{\bY}_0\Vert^4 + 8(1-e^{-2t})^2\mathbb{E}_{\bZ}\Vert \bZ \Vert^4\\
&\leq 8d_\mcY^2 + 8d_\mcY(d_\mcY + 2)\\
&=16d_\mcY(d_\mcY+ 1).
\end{aligned}
$$
Therefore,
$$
\mathbb{E}_{\ov\bY_t}\Vert \bb(t,\overline{\bY}_t,\bx)\Vert^4\leq\frac{136d_\mcY^2+128d_\mcY}{(1-e^{-t})^4}\lesssim \frac{d_\mcY^2}{(1-e^{-t})^4}.
$$
The tail probability can be bounded as follows:
$$
\begin{aligned}
\mathbb{P}_{\ov\bY_t}(\Vert \overline{\bY}_t\Vert_{\infty} > R)
&= \mathbb{E}_{\ov\bY_0}\left[\mathbb{P}_{\bZ}\left(\Vert e^{-t}\ov{\bY}_0 + \sqrt{1-e^{-2t}}\bZ\Vert_{\infty} > R, \Big|\ov{\bY}_0\right)\right]
\\
&\leq\sum_{i=1}^{d_\mcY}\mathbb{E}_{\ov\bY_0}\left[\mathbb{P}_{\bZ}\left(|e^{-t}\overline{Y}_{0}^{(i)} + \sqrt{1-e^{-2t}}Z^{(i)}| > R, \Big|\ov\bY_{0}\right)\right]\\
&\leq\sum_{i=1}^{d_\mcY}\mathbb{P}_{\bZ}\left(|Z^{(i)}| > \frac{R-1}{\sqrt{1-e^{-2t}}}\right)\\
&\leq 2 d_\mcY \exp\left(-\frac{(R-1)^2}{2(1-e^{-2t})}\right)\\
&\leq 2d_\mcY\exp\left(-\frac{(R-1)^2}{4}\right).
\end{aligned}
$$
Thus, the second term can be bounded by
$$
\begin{aligned}
&~\mathbb{E}_{\ov\bY_t}\left(\Vert \bs(t,\overline{\bY}_t,\bx) - \bb(t,\overline{\bY}_t,\bx)\Vert^2\mI_{\{\Vert \overline{\bY}_t \Vert_{\infty} > R\}}\right)\\
\lesssim &~\frac{d_\mcY^2R^2}{(1-e^{-t})^2}\exp\left(-\frac{(R-1)^2}{4}\right) + \frac{d_\mcY^{3/2}}{(1-e^{-t})^2}\exp\left(-\frac{(R-1)^2}{8}\right)\\
\lesssim &~\frac{d_\mcY^2(R-1)^2}{(1-e^{-t})^2}\exp\left(-\frac{(R-1)^2}{8}\right).
\end{aligned}
$$
Using $x\leq e^{x}$, we have
$$
\frac{(R-1)^2}{16} \leq \exp\left(\frac{(R-1)^2}{16}\right).
$$
Thus, we have
\begin{equation}\label{eq: L2_Bound}
\mathbb{E}_{\ov\bY_t}\left(\Vert \bs(t,\overline{\bY}_t,\bx) - \bb(t,\overline{\bY}_t,\bx)\Vert^2\mI_{\{\Vert \overline{\bY}_t \Vert_{\infty} > R\}}\right)
\lesssim \frac{d_\mcY^2}{(1-e^{-t})^2}\exp\left(-\frac{(R-1)^2}{16}\right).
\end{equation}
Let the right hand of \eqref{eq: L2_Bound} be less than $\epsilon^2$, we obtain
$$
R\gtrsim \sqrt{\log\frac{d_\mcY}{\epsilon(1-e^{-t})}}.
$$
Therefore, we can choose
\begin{equation*}
R = \mathcal{O}\left(\sqrt{\log\frac{d_\mcY}{\epsilon(1-e^{-T_0})}}\right)= \mathcal{O}\left(\sqrt{\log\frac{d_\mcY}{\epsilon T_0}}\right),
\end{equation*}  
such that
$$
\mathbb{E}_{\ov\bY_t}\Vert \bs(t,\overline{\bY}_t,\bx) - \bb(t,\overline{\bY}_t,\bx)\Vert^2\leq(1 + d_\mcY)\epsilon^2.
$$
Substituting $R$ into the network configuration, we obtain
$$
L = \mathcal{O}\left(\log{\frac{1}{\epsilon}} + d_\mcX+ d_{\mathcal{Y}}\right),
M = \mathcal{O}\Bigg(\frac{d_\mcY^{\frac{5}{2}}(T-T_0)}{T_0^3}\left(\log{\frac{d_\mcY}{\epsilon T_0}}\right)^{\frac{d_\mcY+1}{2}}\xi^{d_\mcY}(\beta B_{\mcX})^{d_{\mcX}}
\epsilon^{-(d_{\mcX}+d_{\mcY}+1)}\Bigg),
$$
$$
J = \mathcal{O}\Bigg(\frac{d_\mcY^{\frac{5}{2}}(T-T_0)}{T_0^3}\left(\log{\frac{d_\mcY}{\epsilon T_0}}\right)^{\frac{d_\mcY+1}{2}}\xi^d(\beta B_{\mathcal{X}})^{d_{\mcX}}\epsilon^{-(d_\mcX+d_{\mathcal{Y}}+1)}\left(\log{\frac{1}{\epsilon}} + d_\mcX + d_{\mathcal{Y}}\right)\Bigg), 
$$
$$
K = \mathcal{O}\Bigg(\frac{\sqrt{d_\mcY\log{\frac{d_\mcY}{\epsilon T_0}}}}{T_0}\Bigg),
\kappa=\mathcal{O}\Bigg(
\xi\sqrt{\log{\frac{d_\mcY}{\epsilon T_0}}}
\vee
\frac{(T-T_0)d_\mcY^{3/2}\sqrt{\log\frac{d_\mcY}{\epsilon T_0}}}{T_0^3}\vee\beta B_{\mathcal{X}}
\Bigg),
$$
$$
\gamma_1 = 10d\xi=\mathcal{O}\left(\frac{20 d_\mcY^2}{T_0^2}\right), \gamma_2 = 10\tau(R)=\mathcal{O}\Big(\frac{10 d_\mcY^{3/2}\sqrt{\log\frac{d_\mcY}{\epsilon T_0}}}{T_0^3}\Big).
$$

\subsection{Statistical Error}\label{subsec: stat_error}
\textbf{Upper Bound for $\ell_{\bs}(\overline{\bY}_0,\bx)$.} Now we derive the upper bound for $\ell_{\bs}(\overline{\bY}_0,\bx)$. For notational convenience, we define
$$
h_{\bs}(t,\overline{\bY}_0,\bx,\bZ) := \left\Vert \bs(t,e^{-t}\overline{\bY}_{0} + \sqrt{1-e^{-2t}}\bZ, \bx) -e^{-t}\overline{\bY}_{0} + \frac{(1 + e^{-2t})\bZ}{\sqrt{1-e^{-2t}}}\right\Vert^2,
$$
then
$$
\begin{aligned}
\mathbb{E}_{\bZ}[h_{\bs}(t,\overline{\bY}_0,\bx,\bZ)]
&\leq 3\left(K^2 + d_\mcY e^{-2t} + \frac{(1 + e^{-2t})^2}{1-e^{-2t}}\mathbb{E}_{\bZ}\Vert \bZ\Vert^2\right)\\
&=3\left(K^2 + \frac{d_\mcY(1 + 3e^{-2t})}{1-e^{-2t}}\right)\\
&\leq 3\left(K^2 + \frac{4d_\mcY}{1-e^{-{2T_0}}}\right)\\
&\leq 3\left(K^2 + \frac{4d_\mcY}{T_0}\right).
\end{aligned}
$$
Thus, we have
$$
\ell_{\bs}(\overline{\bY}_0,\bx)\leq A_{T_0}:= 3\left(K^2 + \frac{4d_\mcY}{T_0}\right).
$$
\textbf{Lipschitz Continuity for $\ell_{\bs}(\overline{\bY}_0,\bx)$.} Now we derive the Lipschitz continuity for $\ell_{\bs}(\overline{\bY}_0,\bx)$.
Some elementary computations show that
$$
\begin{aligned}
&~|\ell_{\bs_1}(\overline{\bY}_0,\bx) - \ell_{\bs_2}(\overline{\bY}_0,\bx)|\\
\leq &~ \frac{1}{T - T_0}\int_{T_0}^{T}\mathbb{E}_{\bZ}\Vert \bs_1 - \bs_2\Vert\left\Vert \bs_1 + \bs_2 - 2e^{-t}\overline{\bY}_0  + \frac{2(1 + e^{-2t})\bZ}{\sqrt{1-e^{-2t}}}\right\Vert \mrd t\\
\leq &~ \frac{1}{T - T_0}\int_{T_0}^{T}\left(\mathbb{E}_{\bZ}\Vert \bs_1 - \bs_2\Vert^2\right)^{\frac{1}{2}}\left(\mathbb{E}_{\bZ}\left\Vert 
\bs_1 + \bs_2 - 2e^{-t}\overline{\bY}_0  + \frac{2(1 + e^{-2t})\bZ}{\sqrt{1-e^{-2t}}}
\right\Vert^2\right)^{\frac{1}{2}}\mrd t\\
\leq &~ \frac{1}{T - T_0}\left(\int_{T_0}^{T}\mathbb{E}_{\bZ}\Vert \bs_1 - \bs_2\Vert^2 \mrd t\right)^{\frac{1}{2}}
\left(\int_{T_0}^{T}\mathbb{E}_{\bZ}\left\Vert 
\bs_1 + \bs_2 - 2e^{-t}\overline{\bY}_0  + \frac{2(1 + e^{-2t})\bZ}{\sqrt{1-e^{-2t}}}
\right\Vert^2 \mrd t\right)^{\frac{1}{2}}\\
\leq &~ \left(\frac{1}{T - T_0}\int_{T_0}^{T}\mathbb{E}_{\bZ}\Vert \bs_1 - \bs_2\Vert^2 \mrd t\right)^{\frac{1}{2}}\left(8K^2 + \frac{64 d_\mcY }{1-e^{-2T_0}}\right)^{\frac{1}{2}}\\
\leq &~ \left(8K^2 + \frac{64 d_\mcY}{T_0}\right)^{\frac{1}{2}}
\Vert \bs_1 - \bs_2\Vert_{L^{\infty}([T_0,T]\times\mathbb{R}^{d_\mcY}\times[0,B_{\mathcal{X}}]^{d_{\mathcal{X}}})}.
\end{aligned}
$$
For convenience, we let
$$
B_{T_0}:=\left(8K^2 + \frac{64 d_\mcY}{T_0}\right)^{\frac{1}{2}}.
$$
\\
\textbf{Covering Number Evaluation.} 
Denote $\mcN_{\delta}:=\mathcal{N}\left(\mathrm{NN},\delta,\Vert\cdot\Vert_{L^{\infty}([T_0,T]\times\mathbb{R}^{d_\mcY}\times[0,B_{\mathcal{X}}]^{d_{\mathcal{X}}})}\right)$ as the $\delta$-covering number of the neural network class $\mathrm{NN}$.  By \cite{chen2022distribution},
$\mcN_{\delta}$ is then  evaluated as follows:
$$
\begin{aligned}
\log\mcN_{\delta} &= \log\mathcal{N}\left(\mathrm{NN},\delta,\Vert\cdot\Vert_{L^{\infty}([T_0,T]\times\mathbb{R}^{d_\mcY}\times[0,B_{\mathcal{X}}]^{d_{\mathcal{X}}})}\right)\\
&=\log\mathcal{N}\left(\mathrm{NN},\delta,\Vert\cdot\Vert_{L^{\infty}([T_0,T]\times[-R,R]^{d_\mcY}\times[0,B_{\mathcal{X}}]^{d_{\mathcal{X}}})}\right)\\
&\lesssim JL\log\left(\frac{LM(T \vee R\vee B_{\mathcal{X}})\kappa}{\delta}\right)
\\
&=\widetilde{\mathcal{O}}\left(\frac{(T-T_0)\xi^{d_\mcY}(\beta B_{\mathcal{X}})^{d_{\mathcal{X}}}\epsilon^{-(d_\mcX+ d_{\mathcal{Y}} + 1)}}{T_0^3}\right)
.
\end{aligned}
$$
\\
\noindent\textbf{Statistical Error Bound.} Let $\widetilde\ell_{\bs}(\overline{\bY}_{0},\bx) = \ell_{\bs}(\overline{\bY}_{0},\bx) - \ell_{\bs^{*}}(\overline{\bY}_{0},\bx)$ and $\mathcal{D}^{\prime}=\{(\overline{\bY}_{0,i}^{\prime},\bx_{i}^{\prime})\}_{i=1}^{n}$ be an independent copy of $\mathcal{D}$. By using the Lipschitz continuity of $\ell_{\bs}$, we have
$$
\begin{aligned}
|\widetilde\ell_{\bs_1}(\overline{\bY}_{0},\bx) - \widetilde\ell_{\bs_2}(\overline{\bY}_{0},\bx)| &= |\ell_{\bs_1}(\overline{\bY}_{0},\bx) - \ell_{\bs_2}(\overline{\bY}_{0},\bx)|\\
&\leq B_{T_0}\Vert \bs_1 - \bs_2\Vert_{L^{\infty}([T_0,T]\times\mathbb{R}^{d_\mcY}\times[0,B_{\mathcal{X}}]^{d_{\mathcal{X}}})}.
\end{aligned}
$$
We first estimate $\mathbb{E}_{\mathcal{D},\mathcal{T},\mathcal{Z}}\left(\mathcal{L}(\wh{\bs}) - 2\overline{\mathcal{L}}_{\mathcal{D}}(\wh{\bs}) + \mathcal{L}(\bs^{*})\right)$. We define
$G_{\wh{\bs}}(\overline{\bY}_{0},\bx) = \mathbb{E}_{\mathcal{D}^{\prime}}[\widetilde\ell_{\wh{\bs}}(\overline{\bY}_0^{\prime},\bx^{\prime})] - 2\widetilde\ell_{\bs}(\overline{\bY}_{0},\bx)$, then
$$
\begin{aligned}
&~\mathbb{E}_{\mathcal{D},\mathcal{T},\mathcal{Z}}\left(\mathcal{L}(\wh{\bs}) - 2\overline{\mathcal{L}}_{\mathcal{D}}(\wh{\bs}) + \mathcal{L}(\bs^{*})\right) \\
=&~\mathbb{E}_{\mathcal{D},\mathcal{T},\mathcal{Z}}\left(
\mathbb{E}_{\mathcal{D}^{\prime}}\left[\frac{1}{n}\sum_{i=1}^{n}(\ell_{\wh{\bs}}(\overline{\bY}_{0,i}^{\prime}, \bx_{i}^{\prime}) - \ell_{\bs^{*}}(\overline{\bY}_{0,i}^{\prime},\bx_{i}^{\prime}))\right] - \frac{2}{n}\sum_{i=1}^{n}(\ell_{\wh{\bs}}(\overline{\bY}_{0,i},\bx_{i}) - \ell_{\bs^{*}}(\overline{\bY}_{0,i},\bx_{i}))\right)\\
=&~\mathbb{E}_{\mathcal{D},\mathcal{T},\mathcal{Z}}\left[\frac{1}{n}\sum_{i=1}^{n}G_{\wh{\bs}}(\overline{\bY}_{0,i},\bx_{i})\right].
\end{aligned}
$$
Given $\delta > 0$, we denote the $\delta$-covering with minimal cardinality $\mcN_{\delta}$ as $\mathcal{C}_{\delta}$. For any $\bs\in\mathrm{NN}$, there exist $\bs_{\delta}\in\mathcal{C}_{\delta}$ such that
$$
|\widetilde\ell_{\bs}(\overline{\bY}_0,\bx) - \widetilde\ell_{\bs_{\delta}}(\overline{\bY}_0,\bx)|\leq B_{T_0}\Vert \bs - \bs_{\delta}\Vert_{L^{\infty}([T_0,T]\times\mathbb{R}^{d_\mcY}\times[0,B_{\mathcal{X}}]^{d_{\mathcal{X}}})}
\leq B_{T_0}\delta.
$$
Therefore,
$$
G_{\bs}(\overline{\bY}_0,\bx)\leq G_{\bs_{\delta}}(\overline{\bY}_0,\bx) + 3B_{T_0}\delta.
$$
For any $(\overline{\bY}_{0,i},\bx_{i})$,  $|\widetilde{\ell}_{\bs_{\delta}}(\overline{\bY}_{0,i},\bx_{i})|\leq 2A_{T_0}$, we have $|\widetilde{\ell}_{\bs_{\delta}}(\overline{\bY}_{0,i},\bx_{i}) - \mathbb{E}_{\mathcal{D}}[\widetilde{\ell}_{\bs_{\delta}}(\overline{\bY}_{0,i},\bx_{i})]|\leq 4A_{T_0}$. Let $V^2 = \mathrm{Var}[\widetilde{\ell}_{\bs_{\delta}}(\overline{\bY}_{0,i},\bx_{i})]$, then
$$
\begin{aligned}
V^2\leq\mathbb{E}_{\mathcal{D}}[\widetilde{\ell}_{\bs_{\delta}}(\overline{\bY}_{0,i},\bx_{i})]^2
&\leq B^2_{T_0}\mathbb{E}_{\mathcal{D}}\left(\frac{1}{T - T_0}\int_{T_0}^{T}\mathbb{E}_{\bZ}\Vert\bs_{\delta} - \bs^{*}\Vert^2 \mrd t\right)\\
&=B^2_{T_0}[\mathcal{L}(\bs_{\delta}) - \mathcal{L}(\bs^{*})]\\
&=B^2_{T_0}\mathbb{E}_{\mathcal{D}}[\ell_{\bs_{\delta}}(\overline{\bY}_{0,i},\bx_{i}) - \ell_{\bs^{*}}(\overline{\bY}_{0,i},\bx_{i})]\\
&= B^2_{T_0}\mathbb{E}_{\mathcal{D}}[\widetilde{\ell}_{\bs_{\delta}}(\overline{\bY}_{0,i},\bx_{i})].
\end{aligned}
$$
We obtain
$$
\mathbb{E}_{\mathcal{D}}[\widetilde{\ell}_{\bs_{\delta}}(\overline{\bY}_{0,i},\bx_{i})] \geq \frac{V^2}{B^2_{T_0}}.
$$
By Bernstein's inequality, we have
$$
\begin{aligned}
&~\mathbb{P}_{\mathcal{D},\mathcal{T},\mathcal{Z}}\left[\frac{1}{n}\sum_{i=1}^{n}G_{\bs_{\delta}}(\overline{\bY}_{0,i},\bx_{i}) > t\right]\\
=&~\mathbb{P}_{\mathcal{D},\mathcal{T},\mathcal{Z}}\left(
\mathbb{E}_{\mathcal{D}^{\prime}}\left[\frac{1}{n}\sum_{i=1}^{n}\widetilde{\ell}_{\bs_{\delta}}(\overline{\bY}_{0,i}^{\prime},\bx_{i}^{\prime})\right] - \frac{1}{n}\sum_{i=1}^{n}\widetilde{\ell}_{\bs_{\delta}}(\overline{\bY}_{0,i},\bx_{i}) > \frac{t}{2} + \mathbb{E}_{\mathcal{D}^{\prime}}\left[\frac{1}{2n}\sum_{i=1}^{n}\widetilde{\ell}_{\bs_{\delta}}(\overline{\bY}_{0,i}^{\prime},\bx_{i}^{\prime})\right]
\right)\\
=&~\mathbb{P}_{\mathcal{D},\mathcal{T},\mathcal{Z}}\left(
\mathbb{E}_{\mathcal{D}}\left[\frac{1}{n}\sum_{i=1}^{n}\widetilde{\ell}_{\bs_{\delta}}(\overline{\bY}_{0,i},\bx_{i})\right] - \frac{1}{n}\sum_{i=1}^{n}\widetilde{\ell}_{\bs_{\delta}}(\overline{\bY}_{0,i},\bx_{i}) > \frac{t}{2} + \mathbb{E}_{\mathcal{D}}\left[\frac{1}{2n}\sum_{i=1}^{n}\widetilde{\ell}_{\bs_{\delta}}(\overline{\bY}_{0,i},\bx_{i})\right]
\right)\\
\leq&~\mathbb{P}_{\mathcal{D},\mathcal{T},\mathcal{Z}}\left(
\mathbb{E}_{\mathcal{D}}\left[\frac{1}{n}\sum_{i=1}^{n}\widetilde{\ell}_{\bs_{\delta}}(\overline{\bY}_{0,i},\bx_{i})\right] - \frac{1}{n}\sum_{i=1}^{n}\widetilde{\ell}_{\bs_{\delta}}(\overline{\bY}_{0,i},\bx_{i}) > \frac{t}{2} + \frac{V^2}{2B^2_{T_0}}
\right)\\
\leq&~\exp\left(-\frac{nu^2}{2V^2 + \frac{8uA_{T_0}}{3}}\right)\\
\leq&~\exp\left(-\frac{nt}{8B^2_{T_0} + \frac{16A_{T_0}}{3}}\right),
\end{aligned}
$$
where $u = \frac{t}{2} + \frac{V^2}{2B^2_{T_0}}$, and we use $u\geq\frac{t}{2}$ and $V^2\leq 2B^2_{T_0}u$. Hence, for any $t > 3B_{T_0}\delta$, we have
$$
\begin{aligned}
\mathbb{P}_{\mathcal{D},\mathcal{T},\mathcal{Z}}\left[\frac{1}{n}\sum_{i=1}^{n}G_{\wh{\bs}}(\overline{\bY}_{0,i},\bx_{i}) > t\right]
\leq&~\mathbb{P}_{\mathcal{D},\mathcal{T},\mathcal{Z}}\left[\mathop{\sup}_{\bs\in\mathrm{NN}}\frac{1}{n}\sum_{i=1}^{n}G_{\bs}(\overline{\bY}_{0,i},\bx_{i}) > t\right]\\
\leq &~ \mathbb{P}_{\mathcal{D},\mathcal{T},\mathcal{Z}}\left[\mathop{\max}_{\bs_{\delta}\in\mathcal{C}_{\delta}}\frac{1}{n}\sum_{i=1}^{n}G_{\bs_{\delta}}(\overline{\bY}_{0,i},\bx_{i}) > t - 3B_{T_0}\delta\right]\\
\leq &~ \mathcal{N}_{\delta}\cdot\mathop{\max}_{\bs_{\delta}\in\mathcal{C}_{\delta}}\mathbb{P}_{\mathcal{D},\mathcal{T},\mathcal{Z}}\left[\frac{1}{n}\sum_{i=1}^{n}G_{\bs_{\delta}}(\overline{\bY}_{0,i},\bx_{i}) > t - 3B_{T_0}\delta\right]\\
\leq &~ \mathcal{N}_{\delta}\exp\left(-\frac{n(t - 3B_{T_0}\delta)}{8B^2_{T_0} + \frac{16A_{T_0}}{3}}\right).
\end{aligned}
$$
By setting $a = 3B_{T_0}\delta + \frac{8B^2_{T_0} + \frac{16A_{T_0}}{3}}{n}$ and $\delta = \frac{1}{n}$, then we obtain
$$
\begin{aligned}
\mathbb{E}_{\mathcal{D},\mathcal{T},\mathcal{Z}}\left[\frac{1}{n}\sum_{i=1}^{n}G_{\wh{\bs}}(\overline{\bY}_{0,i},\bx_{i})\right]
&\leq a + \mathcal{N}_{\frac{1}{n}}\int_{a}^{\infty}\exp\left(-\frac{n(t - 3B_{T_0}\delta)}{8B^2_{T_0} + \frac{16A_{T_0}}{3}}\right)dt\\
&\leq\frac{\left(8B^2_{T_0} + \frac{16A_{T_0}}{3}\right)\left(1 + \log\mathcal{N}_{\frac{1}{n}}\right) + 3B_{T_0}}{n}\\
&=\widetilde{\mathcal{O}}\left(\frac{1}{n}\cdot\frac{(T-T_0)\xi^{d_\mcY}(\beta B_{\mathcal{X}})^{d_{\mathcal{X}}}\epsilon^{-(d_\mcX+ d_{\mathcal{Y}} + 1)}}{T_0^5}\right).
\end{aligned}
$$
Next, we estimate the term $\mathbb{E}_{\mathcal{D},\mathcal{T},\mathcal{Z}}\left(\overline{\mathcal{L}}_{\mathcal{D}}(\wh{\bs}) - \wh{\mathcal{L}}_{\mathcal{D},\mathcal{T},\mathcal{Z}}(\wh{\bs})\right)$. Recall that
$$
\overline{\mathcal{L}}_{\mathcal{D}}(\wh{\bs}) - \wh{\mathcal{L}}_{\mathcal{D},\mathcal{T},\mathcal{Z}}(\wh{\bs}) = \frac{1}{n}\sum_{i=1}^{n}\left(
\ell_{\wh{\bs}}(\overline{\bY}_{0,i},\bx_{i}) - \wh{\ell}_{\wh{\bs}}(\overline{\bY}_{0,i},\bx_{i})
\right).
$$
Let $h_{\bs}^{\mathrm{trunc}}(t,\overline{\bY}_{0},\bx,\bZ):=h_{\bs}(t,\overline{\bY}_{0},\bx,\bZ)\mI_{\{\Vert \bZ\Vert_{\infty}\leq r\}}$. We decompose 
$$
\frac{1}{n}\sum_{i=1}^{n}\left(\ell_{\wh{\bs}}(\overline{\bY}_{0,i},\bx_{i}) - \wh{\ell}_{\wh{\bs}}(\overline{\bY}_{0,i},\bx_{i})\right)
$$ into the following three terms:
$$
\begin{aligned}
\frac{1}{n}\sum_{i=1}^{n}\left(\ell_{\wh{\bs}}(\overline{\bY}_{0,i},\bx_{i}) - \wh{\ell}_{\wh{\bs}}(\overline{\bY}_{0,i},\bx_{i})\right)
& = \underbrace{\frac{1}{n}\sum_{i=1}^{n}\left(
\ell_{\wh{\bs}}(\overline{\bY}_{0,i},\bx_{i}) - \ell_{\wh{\bs}}^{\mathrm{trunc}}(\overline{\bY}_{0,i},\bx_{i})
\right)}_{(\mathrm{I})}\\
&+\underbrace{\frac{1}{n}\sum_{i=1}^{n}\left(
\ell_{\wh{\bs}}^{\mathrm{trunc}}(\overline{\bY}_{0,i},\bx_{i}) - \wh{\ell}_{\wh{\bs}}^{\mathrm{trunc}}(\overline{\bY}_{0,i},\bx_{i})
\right)}_{(\mathrm{II})}\\
&+\underbrace{\frac{1}{n}\sum_{i=1}^{n}\left(
\wh{\ell}_{\wh{\bs}}^{\mathrm{trunc}}(\overline{\bY}_{0,i},\bx_{i}) - \wh{\ell}_{\wh{\bs}}(\overline{\bY}_{0,i},\bx_{i})
\right)}_{(\mathrm{III})},
\end{aligned}
$$
where
$$
\ell_{\wh{\bs}}^{\mathrm{trunc}}(\overline{\bY}_{0,i},\bx_{i}):= \frac{1}{T - T_0}\int_{T_0}^{T}\mathbb{E}_{\bZ}\left[h_{\wh{\bs}}^{\mathrm{trunc}}(t,\overline{\bY}_{0,i},\bx_{i},\bZ)\right] \mrd t
$$
and
$$
\wh{\ell}_{\wh{\bs}}^{\mathrm{trunc}}(\overline{\bY}_{0,i},\bx_{i}):=\frac{1}{m}\sum_{j=1}^{m}h_{\wh{\bs}}^{\mathrm{trunc}}(t_j,\overline{\bY}_{0,i},\bx_{i},\bZ_{j}).
$$
Now, we estimate these three terms separately. The first term satisfies
$$
\begin{aligned}
(\mathrm{I}) &= \frac{1}{n}\sum_{i=1}^{n}\left(\frac{1}{T - T_0}\int_{T_0}^{T}\mathbb{E}_{\bZ}\left[h_{\bs}(t,\overline{\bY}_{0,i},\bx_{i},\bZ)\mI_{\{\Vert \bZ \Vert_{\infty} > r\}}\right] \mrd t\right)\\
&\leq 3(K^2 + d_\mcY e^{-2t})\mathbb{P}_{\bZ}(\Vert \bZ \Vert_{\infty} > r) + \frac{3(1 + e^{-2t})^2}{1-e^{-2t}}\mathbb{E}_{\bZ}[\Vert \bZ\Vert^2\mI_{\{\Vert \bZ \Vert_{\infty} > r\}}]\\
&\leq 3(K^2 + d_\mcY e^{-2t})\exp\left(-\frac{r^2}{2}\right) + \frac{3(1 + e^{-2t})^2}{1-e^{-2t}}\sqrt{d_\mcY(d_\mcY+ 2)}\exp\left(-\frac{r^2}{4}\right)\\
&\leq 3(K^2 + d_\mcY)\exp\left(-\frac{r^2}{2}\right) + \frac{12\sqrt{d_\mcY(d_\mcY+ 2)}}{T_0}\exp\left(-\frac{r^2}{4}\right).
\end{aligned}
$$
Therefore,
\begin{equation}\label{eq: trunc_1}
\begin{aligned}
&\mathbb{E}_{\mathcal{D},\mathcal{T},\mathcal{Z}}\left[\frac{1}{n}\sum_{i=1}^{n}\left(
\ell_{\wh{\bs}}(\overline{\bY}_{0,i},\bx_{i}) - \ell_{\wh{\bs}}^{\mathrm{trunc}}(\overline{\bY}_{0,i},\bx_{i})\right)\right]\\
\leq &\left(3K^2 + 3d_\mcY + \frac{12\sqrt{d_\mcY(d_\mcY+ 2)}}{T_0}\right)\exp\left(-\frac{r^2}{4}\right).
\end{aligned}
\end{equation}
Next, we estimate the second term. For any paired sample $(\overline{\bY}_{0,i},\bx_{i})$, we have
$$
0\leq h_{\bs}^{\mathrm{trunc}}(t,\overline{\bY}_{0,i},\bx_{i},\bZ)\leq H_{T_0}(r):= 3\left(K^2 + d_\mcY + \frac{4d_\mcY r^2}{T_0}\right).
$$
For any $\bs\in\mathrm{NN}$, there exists $\bs_{\delta}\in\mathcal{C}_{\delta}$ such that
$$
\begin{aligned}
&~|h_{\bs}^{\mathrm{trunc}}(t,\overline{\bY}_{0,i},\bx_{i},\bZ) - h_{\bs_{\delta}}^{\mathrm{trunc}}(t,\overline{\bY}_{0,i},\bx_{i},\bZ)|\\
\leq &~ \delta\left\Vert \bs + \bs_{\delta} -2e^{-t}\overline{\bY}_0 + \frac{2(1+e^{-2t})\bZ}{\sqrt{1-e^{-2t}}}\right\Vert\mI_{\{\Vert \bZ\Vert_{\infty}\leq r\}}\\
\leq &~ \delta\left(2K +2\sqrt{d_\mcY} + \frac{4\sqrt{d_\mcY}r}{\sqrt{T_0}}\right).
\end{aligned}
$$
Then, for any paired sample $(\overline{\bY}_{0,i},\bx_{i})$, we have
$$
\begin{aligned}
&\frac{1}{T - T_0}\int_{T_0}^{T}\mathbb{E}_{\bZ}\left[
h_{\wh{\bs}}^{\mathrm{trunc}}(t,\overline{\bY}_{0,i}\bx_{i},\bZ)
\right]\mrd t - \frac{1}{m}\sum_{j=1}^{m}h_{\wh{\bs}}^{\mathrm{trunc}}(t_j,\overline{\bY}_{0,i},\bx_{i},\bZ_j)\\
\leq &\mathop{\sup}_{\bs\in\mathrm{NN}}\left(
\frac{1}{T - T_0}\int_{T_0}^{T}\mathbb{E}_{\bZ}\left[
h_{\bs}^{\mathrm{trunc}}(t,\overline{\bY}_{0,i},\bx_{i},\bZ)
\right]\mrd t - \frac{1}{m}\sum_{j=1}^{m}h_{\bs}^{\mathrm{trunc}}(t_j,\overline{\bY}_{0,i},\bx_{i},\bZ_j)\right)\\
\leq &\mathop{\max}_{\bs_{\delta}\in\mathcal{C}_{\delta}}\left(
\frac{1}{T - T_0}\int_{T_0}^{T}\mathbb{E}_{\bZ}\left[
h_{\bs_{\delta}}^{\mathrm{trunc}}(t,\overline{\bY}_{0,i},\bx_{i},\bZ)
\right]\mrd t - \frac{1}{m}\sum_{j=1}^{m}h_{\bs_{\delta}}^{\mathrm{trunc}}(t_j,\overline{\bY}_{0,i},\bx_{i},\bZ_j)\right)\\
&~~~ +4\delta\left(K + \sqrt{d_\mcY} +  \frac{2\sqrt{d_\mcY}r}{\sqrt{T_0}}\right).
\end{aligned}
$$
Let $b:=4\delta\left(K + \sqrt{d_\mcY} +  \frac{2\sqrt{d_\mcY}r}{\sqrt{T_0}}\right)$. For $t > b$, using Hoeffding's inequality implies
$$
\begin{aligned}
&~\mathbb{P}_{\mathcal{T},\mathcal{Z}}
\left(\frac{1}{T - T_0}\int_{T_0}^{T}\mathbb{E}_{\bZ}\left[
h_{\hat{\bs}}^{\mathrm{trunc}}(t,\overline{\bY}_{0,i},\bx_{i},\bZ)
\right] \mrd t - \frac{1}{m}\sum_{j=1}^{m}h_{\wh{\bs}}^{\mathrm{trunc}}(t_j,\overline{\bY}_{0,i},\bx_{i},\bZ_j) > t\right)\\
\leq  &~\mathbb{P}_{\mathcal{T},\mathcal{Z}}
\left(\mathop{\max}_{\bs_{\delta}\in\mathcal{C}_{\delta}}\left(
\frac{1}{T - T_0}\int_{T_0}^{T}\mathbb{E}_{\bZ}\left[
h_{\bs_{\delta}}^{\mathrm{trunc}}(t,\overline{\bY}_{0,i},\bx_{i},\bZ)
\right]\mrd t - \frac{1}{m}\sum_{j=1}^{m}h_{\bs_{\delta}}^{\mathrm{trunc}}(t_j,\overline{\bY}_{0,i},\bx_{i},\bZ_j)\right) > t - b\right)\\
\leq &~ \mathcal{N}_{\delta}\cdot\exp\left(-\frac{2m(t-b)^2}{H_{T_0}^2(r)}\right).
\end{aligned}
$$
By taking expectation over $\mathcal{T},\mathcal{Z}$, the error $(\mathrm{II})$ satisfies
$$
\begin{aligned}
&\mathbb{E}_{\mathcal{T},\mathcal{Z}}\left[
\ell_{\wh{\bs}}^{\mathrm{trunc}}(\overline{\bY}_{0,i},\bx_{i}) - \wh{\ell}_{\wh{\bs}}^{\mathrm{trunc}}(\overline{\bY}_{0,i},\bx_{i})\right]\\ = &\int_{0}^{\infty}\mathbb{P}_{\mathcal{T},\mathcal{Z}}\left(\ell_{\wh{\bs}}^{\mathrm{trunc}}(\overline{\bY}_{0,i},\bx_{i}) - \wh{\ell}_{\wh{\bs}}^{\mathrm{trunc}}(\overline{\bY}_{0,i},\bx_{i}) > t\right) \mrd t\\
\leq &~ b + c + \mathcal{N}_{\delta}\cdot\int_{c}^{\infty}\exp\left(-\frac{2mt^2}{H_{T_0}^2(r)}\right)dt\\
\leq &~ b + c + \frac{\sqrt{\pi}}{2}\mathcal{N}_{\delta}\cdot\exp\left(-\frac{2mc^2}{H_{T_0}^2(r)}\right)\frac{H_{T_0}(r)}{\sqrt{2m}}.
\end{aligned}
$$
Thus, we have
\begin{equation}\label{eq: trunc_2}
\begin{aligned}
&~\mathbb{E}_{\mathcal{D},\mathcal{T},\mathcal{Z}}\left(\frac{1}{n}\sum_{i=1}^{n}\left[
\ell_{\wh{\bs}}^{\mathrm{trunc}}(\overline{\bY}_{0,i},\bx_{i}) - \wh{\ell}_{\wh{\bs}}^{\mathrm{trunc}}(\overline{\bY}_{0,i},\bx_{i})\right]\right) \\
= &~ \frac{1}{n}\sum_{i=1}^{n}\mathbb{E}_{\mathcal{D}}\left(\mathbb{E}_{\mathcal{T},\mathcal{Z}}\left[
\ell_{\wh{\bs}}^{\mathrm{trunc}}(\overline{\bY}_{0,i},\bx_{i}) - \wh{\ell}_{\wh{\bs}}^{\mathrm{trunc}}(\overline{\bY}_{0,i},\bx_{i})\right]\right)\\
\leq &~ b + c + \frac{\sqrt{\pi}}{2}\mathcal{N}_{\delta}\cdot\exp\left(-\frac{2mc^2}{H_{T_0}^2(r)}\right)\frac{H_{T_0}(r)}{\sqrt{2m}}.
\end{aligned}
\end{equation}
The last term can be expressed as
$$
(\mathrm{III}) = -\frac{1}{mn}\sum_{i=1}^{n}\sum_{j=1}^{m}h_{\wh{s}}(t_j,\overline{\bY}_{0,i},\bx_i,\bZ_{j})\mI_{\{\Vert \bZ_j \Vert_{\infty} > r\}} \leq 0.
$$
Thus, we have
\begin{equation}\label{eq: trunc_3}
\mathbb{E}_{\mathcal{D},\mathcal{T},\mathcal{Z}}\left[\frac{1}{n}\sum_{i=1}^{n}\left(
\wh{\ell}_{\wh{\bs}}^{\mathrm{trunc}}(\overline{\bY}_{0,i},\bx_{i}) - \wh{\ell}_{\wh{\bs}}(\overline{\bY}_{0,i},\bx_{i})
\right)\right]\leq 0.
\end{equation}
Combining \eqref{eq: trunc_1}, \eqref{eq: trunc_2} and \eqref{eq: trunc_3}, we obtain
$$
\begin{aligned}
&~\mathbb{E}_{\mathcal{D},\mathcal{T},\mathcal{Z}}\left[\frac{1}{n}\sum_{i=1}^{n}\left(
\ell_{\wh{\bs}}(\overline{\bY}_{0,i},\bx_{i}) - \wh{\ell}_{\wh{\bs}}(\overline{\bY}_{0,i},\bx_{i})
\right)\right]\\
\leq &~ b_0 + c + \frac{\sqrt{\pi}}{2}\mathcal{N}_{\delta}\cdot\exp\left(-\frac{2mc^2}{H_{T_0}^2(r)}\right)\frac{H_{T_0}(r)}{\sqrt{2m}},
\end{aligned}
$$
where 
$$
b_0 = \left(3K^2 + 3d_\mcY + \frac{12\sqrt{d_\mcY(d_\mcY+ 2)}}{T_0}\right)\exp\left(-\frac{r^2}{4}\right) + 4\delta\left(K + \sqrt{d_\mcY} +  \frac{2\sqrt{d_\mcY}r}{\sqrt{T_0}}\right).
$$
By setting $r = 2\sqrt{\log{m}}$, $\delta=\frac{1}{m}$, and $c = H_{T_0}(r)\sqrt{\frac{\log\mathcal{N}_{\frac{1}{m}}}{2m}}$, we obtain
$$
\begin{aligned}
&~\mathbb{E}_{\mathcal{D},\mathcal{T},\mathcal{Z}}\left[\frac{1}{n}\sum_{i=1}^{n}\left(
\ell_{\wh{\bs}}(\overline{\bY}_{0,i},\bx_{i}) - \wh{\ell}_{\wh{\bs}}(\overline{\bY}_{0,i},\bx_{i})
\right)\right]\\
\leq &~ b_0 + H_{T_0}(r)\frac{\sqrt{\log\mathcal{N}_{\frac{1}{m}}} + 1}{\sqrt{2m}}\\
\leq &~
\frac{1}{m}\left(3K^2 + 3d_\mcY + \frac{12\sqrt{d_\mcY(d_\mcY+ 2)}}{T_0}\right) + \frac{4}{m}\left(K + \sqrt{d_\mcY} +  \frac{4\sqrt{d_\mcY\log m}}{\sqrt{T_0}}\right)
\\
 &~~~~~~ +3\left(K^2 + d_\mcY + \frac{16d_\mcY\log m}{T_0}\right)\cdot\frac{\sqrt{\log\mathcal{N}_{\frac{1}{m}}} + 1}{\sqrt{2m}}\\
= &~ \widetilde{\mathcal{O}}\left(\frac{K^2}{m} + \frac{K^2}{\sqrt{m}}\cdot\frac{(T-T_0)^\frac{1}{2}\xi^{\frac{d_\mcY}{2}}(\beta B_{\mathcal{X}})^{\frac{d_{\mathcal{Y}}}{2}}\epsilon^{-\frac{d_\mcX+ d_{\mathcal{Y}} + 1}{2}}}{T_0^{\frac{3}{2}}}\right)\\
=&\widetilde{\mathcal{O}}\left(\frac{1}{\sqrt{m}}\cdot\frac{(T-T_0)^{\frac{1}{2}}\xi^{\frac{d_\mcY}{2}}(\beta B_{\mathcal{X}})^{\frac{d_{\mathcal{X}}}{2}}\epsilon^{-\frac{d_\mcX + d_{\mathcal{Y}} + 1}{2}}}{T_0^{\frac{7}{2}}}\right).
\end{aligned}
$$

\subsection{Error Bound for Drift Estimation}\label{subsec: error_bound_for_drift}
\begin{proof}[Proof of Theorem \ref{th: drift_estimation}.] Combining the approximation error and statistical error, we obtain
$$
\begin{aligned}
&~\mathbb{E}_{\mathcal{D},\mathcal{T},\mathcal{Z}}\left(\frac{1}{T - T_0}\int_{T_0}^{T}\mathbb{E}_{\ov\bY_t,\bx}\Vert \bs(t,\overline{\bY}_t,\bx) - \bb(t,\overline{\bY}_t,\bx)\Vert^2 \mrd t\right)\\
=&~\widetilde{\mathcal{O}}\left(\frac{1}{n}\cdot\frac{(T-T_0)\xi^{d_\mcY}(\beta B_{\mathcal{X}})^{d_{\mathcal{X}}}\epsilon^{-(d_\mcX+ d_{\mathcal{Y}} + 1)}}{T_0^5} + \frac{1}{\sqrt{m}}\cdot\frac{(T-T_0)^{\frac{1}{2}}\xi^{\frac{d_\mcY}{2}}(\beta B_{\mathcal{X}})^{\frac{d_{\mathcal{X}}}{2}}\epsilon^{-\frac{d_\mcX+ d_{\mathcal{Y}} + 1}{2}}}{T_0^{\frac{7}{2}}} + (1 + d_\mcY)\epsilon^2\right)\\
=&~\widetilde{\mathcal{O}}\left(
\frac{(T-T_0)\xi^d(\beta B_{\mathcal{X}})^{d_{\mathcal{X}}}}{T_0^5} 
\left(\frac{\epsilon^{-(d_\mcX+ d_{\mathcal{Y}}+ 1)}}{n} + \frac{\epsilon^{-\frac{d_\mcX+ d_{\mathcal{Y}} + 1}{2}}}{\sqrt{m}} + \epsilon^2
\right)\right).
\end{aligned}
$$
By setting $\epsilon = n^{-\frac{1}{d_\mcX + d_{\mathcal{Y}} + 3}}$, it holds that
$$
\begin{aligned}
&~\mathbb{E}_{\mathcal{D},\mathcal{T},\mathcal{Z}}\left(\frac{1}{T - T_0}\int_{T_0}^{T}\mathbb{E}_{\ov\bY_t,\bx}\Vert \bs(t,\overline{\bY}_t,\bx) - \bb(t,\overline{\bY}_t,\bx)\Vert^2 \mrd t\right)\\
=&~\widetilde{\mathcal{O}}\left(
\frac{(T-T_0)\xi^{d_\mcY}(\beta B_{\mathcal{X}})^{d_{\mathcal{X}}}}{T_0^5} 
\left(n^{-\frac{2}{d_\mcX+ d_{\mathcal{Y}} + 3}} + n^{\frac{d_\mcX+ d_{\mathcal{Y}} + 1}{2(d_\mcX+ d_{\mathcal{Y}} + 3)}}m^{-\frac{1}{2}}\right)
\right).
\end{aligned}
$$
The proof is complete.
\end{proof}

\section{Bound $\Ebb_{\mcD,\mcT,\mcZ}[\mathrm{TV}(\ck{p}_{T_0},\wt{p}_{T_0})]$}\label{sec:appendc}
In this section, we give the upper bound for $\Ebb_{\mcD,\mcT,\mcZ}[\mathrm{TV}(\ck{p}_{T_0},\wt{p}_{T_0})]$.
We first prove Lemma \ref{lem: KL_2_normal}. Then we prove Theorem \ref{th:TV_initial_distribution}.
\begin{proof}[Proof of Lemma \ref{lem: KL_2_normal}.]
By the definition of $\mathrm{KL}$ divergence, we have
    $$
    \begin{aligned}
    \mathrm{KL}(p|q) &= \Ebb_{p(\by)}\left[\log\frac{p(\by)}{q(\by)}\right]\\
    &=\Ebb_{p(\by)}[\log p(\by)] - \Ebb_{p(\by)}[\log q(\by)].
    \end{aligned}
    $$
We first calculate $ \Ebb_{p(\by)}[-\log q(\by)]$ as follows:
    $$
    \Ebb_{p(\by)}[-\log q(\by)] = \frac{d_\mcY}{2}\log 2\pi + \frac{1}{2}\log|\boldsymbol{\Sigma}_2| + \frac{1}{2}\Ebb_{p(\by)}\left[(\by-\boldsymbol{\mu}_2)^{\top}\boldsymbol{\Sigma}_2^{-1}(\by-\boldsymbol{\mu}_2)\right].
    $$
    Since $(\by-\boldsymbol{\mu}_2)\boldsymbol{\Sigma}_2^{-1}(\by-\boldsymbol{\mu}_2)\in\Rbb$, we have
    $$
    \begin{aligned}
        \Ebb_{p(\by)}\left[(\by-\boldsymbol{\mu}_2)^{\top}\boldsymbol{\Sigma}_2^{-1}(\by-\boldsymbol{\mu}_2)\right] &= \Ebb_{p(\by)}\left[\mathrm{Tr}\left((\by-\boldsymbol{\mu}_2)^{\top}\boldsymbol{\Sigma}_2^{-1}(\by-\boldsymbol{\mu}_2)\right)\right]\\
        &=\Ebb_{p(\by)}\left[\mathrm{Tr}\left(\boldsymbol{\Sigma}_2^{-1}(\by-\boldsymbol{\mu}_2)(\by-\boldsymbol{\mu}_2)^{\top}\right)\right]\\
        &=\mathrm{Tr}\left(\boldsymbol{\Sigma}_2^{-1}\Ebb_{p(\by)}\left[(\by-\boldsymbol{\mu}_2)(\by-\boldsymbol{\mu}_2)^{\top}\right]\right)\\
        &=\mathrm{Tr}\left(
        \boldsymbol{\Sigma}_2^{-1}(\boldsymbol{\Sigma}_1 + \boldsymbol{\mu}_1\boldsymbol{\mu}_1^{\top} - \boldsymbol{\mu}_2\boldsymbol{\mu}_1^{\top} - \boldsymbol{\mu}_1\boldsymbol{\mu}_2^{\top} + \boldsymbol{\mu}_2\boldsymbol{\mu}_2^{\top})
        \right)\\
        &=\mathrm{Tr}\left(\boldsymbol{\Sigma}_2^{-1}\boldsymbol{\Sigma}_1 + \boldsymbol{\Sigma}_2^{-1}(\boldsymbol{\mu}_1 - \boldsymbol{\mu}_2)(\boldsymbol{\mu}_1 - \boldsymbol{\mu}_2)^{\top}\right)\\
        &=\mathrm{Tr}\left(\boldsymbol{\Sigma}_2^{-1}\boldsymbol{\Sigma}_1\right) + (\boldsymbol{\mu}_1 - \boldsymbol{\mu}_2)^{\top}\boldsymbol{\Sigma}_2^{-1}(\boldsymbol{\mu}_1 - \boldsymbol{\mu}_2)^{\top}.
    \end{aligned}
    $$
    Therefore, we have
    $$
    \Ebb_{p(\by)}[-\log q(\by)] = \frac{d_\mcY}{2}\log 2\pi + \frac{1}{2}\log|\boldsymbol{\Sigma}_2| + \frac{1}{2}\mathrm{Tr}\left(\boldsymbol{\Sigma}_2^{-1}\boldsymbol{\Sigma}_1\right) + (\boldsymbol{\mu}_1 - \boldsymbol{\mu}_2)^{\top}\boldsymbol{\Sigma}_2^{-1}(\boldsymbol{\mu}_1 - \boldsymbol{\mu}_2)^{\top}.
    $$
    In the above equality, we set $q(\by)=p(\by)$, then we obtain
    $$
    \Ebb_{p(\by)}[\log p(\by)] = -\frac{d_\mcY}{2}\log 2\pi - \frac{1}{2}\log|\boldsymbol{\Sigma}_1| - \frac{d_\mcY}{2}.
    $$
    Combining these two equations, we finally obtain
    $$
    \mathrm{KL}(p|q) = \frac{1}{2}\left[
    (\boldsymbol{\mu}_1-\boldsymbol{\mu}_2)^{\top}\boldsymbol{\Sigma}_2^{-1}(\boldsymbol{\mu}_1 - \boldsymbol{\mu}_2) - \log\left|\boldsymbol{\Sigma}_2^{-1}\boldsymbol{\Sigma}_1\right| + \mathrm{Tr}(\boldsymbol{\Sigma}_2^{-1}\boldsymbol{\Sigma}_1) - d_\mcY
    \right].
    $$
    In particular, when $q(\by) = \mcN(\m0,\mI_{d_\mcY})$, the above result reduces to
    $$
    \mathrm{KL}(p|q) = \frac{1}{2}\left[\Vert\boldsymbol{\mu}_1\Vert^2-\log|\boldsymbol{\Sigma}_1| + \mathrm{Tr}(\boldsymbol{\Sigma}_1) - d_\mcY\right].
    $$
    The proof is complete.
\end{proof}

\begin{proof}[Proof of Theorem \ref{th:TV_initial_distribution}.]
    Since \eqref{sde: EM_scheme} and \eqref{sde: distri_replace} differ only in their initial distributions, by data processing inequality, we have
    $$
    \Ebb_{\mcD,\mcT,\mcZ}[\mathrm{TV}(\ck{p}_{T_0},\wt{p}_{T_0})]\leq\mathrm{TV}(p_{T}, \mcN(\m0,\mI_{d_\mcY})).
    $$
    According to Lemma \ref{lem: KL_2_normal}, we have
    $$
    \mathrm{KL}(p_{T}(\by|\ov\by_0,\bx),\mcN(\m0,\mI_{d_\mcY})) = \frac{1}{2}\left[e^{-2T}\Vert\ov\by_0\Vert^2-d\log(1-e^{-2T}) -de^{-2T}\right].
    $$
    By the convexity of the $\mathrm{KL}$ divergence, we have 
    $$
    \begin{aligned}
        \mathrm{KL}
        \left(p_{T}(\by|\bx)\big|\mcN(\m0,\mI_{d_\mcY})\right) &= \mathrm{KL}\left(\int_{\Rbb^{d_\mcY}}p_{T}(\by|\ov\by_0,\bx)p_{0}(\ov\by_0|\bx)\mrd\ov\by_0\big|\mcN(\m0,\mI_{d_\mcY})\right)\\
        &\leq\int_{\Rbb^{d_\mcY}}\mathrm{KL}\left(p_{T}(\by|\ov\by_0,\bx)\big|\mcN(\m0,\mI_{d_\mcY})\right)p_{0}(\ov\by_0|\bx)\mrd\ov\by_0\\
        &=\frac{1}{2}\left[e^{-2T}\Ebb_{\ov\bY_0}\Vert\ov\bY_0\Vert^2-d\log(1-e^{-2T}) -de^{-2T}\right]\\
        &\lesssim e^{-2T}.
    \end{aligned}
    $$
    Therefore, by the Pinsker's inequality  $\mathrm{TV}^2(p,q)\leq\frac{1}{2}\mathrm{KL}(p|q)$, we finally obtain
    $$
    \Ebb_{\mcD,\mcT,\mcZ}[\mathrm{TV}(\ck{p}_{T_0},\wt{p}_{T_0})]\lesssim e^{-T}.
    $$
    The proof is complete.
\end{proof}

\section{Bound $\mathrm{TV}(p_{T_0},p_0)$}\label{sec:appendd}
In this section, we  prove Theorem \ref{th: TV_early_stopping}. 

\begin{proof}[Proof of Theorem \ref{th: TV_early_stopping}.] 
Let $\mu_t:= e^{-t}$, $\sigma_t^2:=1-e^{-2t}$. We first show that there exists a constant $C>2$ such that for any $0 < \epsilon < 1$,
\begin{equation}\label{eq: tail_prob}
\int_{\Rbb^{d_\mcY}\backslash A_{\by}}|p_0(\by|\bx)-p_t(\by|\bx)|\mrd\by \lesssim\epsilon,
\end{equation}
where $A_{\by} = \left[-C, C\right]^{d_\mcY}$.

Since $p_0(\by|\bx)$ is supported on $[0,1]^{d_\mcY}$, we have
$$
\begin{aligned}
\int_{\Rbb^{d_\mcY}\backslash A_{\by}}|p_0(\by|\bx)-p_t(\by|\bx)|\mrd\by &= \int_{\Rbb^{d_\mcY}\backslash A_{\by}}p_t(\by|\bx)\mrd\by\\
&=\mathbb{P}_{\ov\bY_t}\left(\Vert \ov{\bY}_t \Vert_{\infty} > C|\bx\right)\\
&=\Ebb_{\ov\bY_0}\left[\mathbb{P}_{\bZ}\left(\Vert \mu_t\ov{\bY}_0 + \sigma_t\bZ\Vert_{\infty} > C|\ov\bY_0, \bx\right)\right]\\
&\leq \mathbb{P}_{\bZ}\left(\Vert\bZ\Vert_{\infty} > \frac{C-\mu_t}{\sigma_t}\Big|\bx\right)\\
&\leq \mathbb{P}_{\bZ}\left(\Vert\bZ\Vert_{\infty} > \frac{C}{2}\right)\\
&\leq 2d_{\mcY}\exp\left(-\frac{C^2}{8}\right).
\end{aligned}
$$
Let $\exp\left(-\frac{C^2}{8}\right)\leq \epsilon$, we obtain
$C \geq \sqrt{8\log\frac{1}{\epsilon}}$. Therefore, we can choose $C = \mathcal{O}\left(\sqrt{\log\frac{1}{\epsilon}}\right)$ such that \eqref{eq: tail_prob} holds.

Now, we prove our result. 
We decompose $p_0(\by|\bx) - p_t(\by|\bx)$ as follows:
$$
\begin{aligned}
p_0(\by|\bx) - p_t(\by|\bx)
&= \int_{\Rbb^{d_\mcY}}(p_0(\by|\bx) - p_0(\by_0|\bx))\frac{\mu_t^{d_\mcY}}{\sigma_t^{d_\mcY}(2\pi)^{d_\mcY/2}}\exp\left(-\frac{\Vert\by-\mu_t\by_0\Vert}{2\sigma_t^2}\right)\mrd\by_0\\
&~~~~+ (\mu_t^{d_\mcY} - 1)p_t(\bx|\by).
\end{aligned}
$$
Then, by the Lipschitz continuity of $p_0(\by|\bx)$, we have
\begin{equation*}
\begin{aligned}
&\left|\int_{\Rbb^{d_\mcY}}(p_0(\by|\bx) - p_0(\by_0|\bx))\frac{\mu_t^{d_\mcY}}{\sigma_t^{d_\mcY}(2\pi)^{d_\mcY/2}}\exp\left(-\frac{\Vert \by - \mu_t\by_0\Vert^2}{2\sigma_t^2}\right)\mrd\by_0\right|\\
\leq & ~ L\int_{\Rbb^{d_\mcY}}\Vert\by-\by_0\Vert\frac{\mu_t^{d_\mcY}}{\sigma_t^{d_\mcY}(2\pi)^{d_\mcY/2}}\exp\left(-\frac{\Vert \by - \mu_t\by_0\Vert^2}{2\sigma_t^2}\right)\mrd\by_0\\
\leq & ~ L\int_{\Rbb^{d_\mcY}}\left(\frac{1-\mu_t}{\mu_t}\Vert\by\Vert + \frac{\sigma_t}{\mu_t}\Vert\bu\Vert\right)\exp\left(-\frac{\Vert\bu\Vert^2}{2}\right)\mrd\bu\\
\lesssim & ~ \frac{1-\mu_t}{\mu_t}\Vert\by\Vert + \frac{\sigma_t}{\mu_t}.
\end{aligned}
\end{equation*}
Therefore, for any $t > 0$, we have
$$
\begin{aligned}
\mathrm{TV}(p_t, p_0) &\lesssim 
\int_{A_{\by}}|p_0(\by|\bx) - p_t(\by|\bx)|\mrd\by + \int_{\Rbb^{d_\mcY}\backslash A_{\by}}|p_0(\by|\bx) - p_t(\by|\bx)|\mrd\by\\
&\lesssim\int_{A_{\by}}\left|\int_{\Rbb^{d_\mcY}}(p_0(\by|\bx) - p_0(\by_0|\bx))\frac{\mu_t^{d_\mcY}}{\sigma_t^{d_\mcY}(2\pi)^{d_\mcY/2}}\exp\left(-\frac{\Vert \by - \mu_t\by_0\Vert^2}{2\sigma_t^2}\right)\mrd\by_0\right|\mrd\by \\
& ~~~~~ + |\mu_t^{d_\mcY} - 1|\int_{A_{\by}}p_t(\by|\bx)\mrd\by + \epsilon\\
\lesssim &~ \int_{A_{\by}}\left(\frac{1-\mu_t}{\mu_t}\Vert\by\Vert + \frac{\sigma_t}{\mu_t}\right)\mrd\by + (1-\mu_t^{d_\mcY}) + \epsilon\\
\lesssim &(2C)^{d_\mcY}\left(\frac{1-\mu_t}{\mu_t}\sqrt{d_\mcY}C + \frac{\sigma_t}{\mu_t}\right) + (1-\mu_t^{d_\mcY}) + \epsilon.
\end{aligned}
$$
In our framework, $T_0$ will be chosen very small such that $e^{T_0} - 1 = \mathcal{O}(T_0)$. Let $t = T_0$, 
we choose $\epsilon = T_0$, then we have
$$
\mathrm{TV}(p_{T_0}, p_0)=\mathcal{O}\left(\sqrt{T_0}\log^{(d_\mcY + 1)/2}\frac{1}{T_0}\right).
$$
The proof is complete.
\end{proof}

\section{Bound $\Ebb_{\mcD,\mcT,\mcZ}\left[\mathrm{TV}(\wt{p}_{T_0}, p_{T_0})\right]$}\label{sec:appende}
In this section, we  prove Theorem \ref{th: TV_sampling_error}. To this end, we first introduce Girsanov theorem.
\begin{theorem}\label{th: Girsanov}
    Suppose that $(\Omega, \mathcal{F},(\mathcal{F}_t)_{t\geq 0}, Q)$ is a filtered probability space and $(\ba_t)_{t\in[0,T]}$ is an adapted process on this space. For $t\in[0,T]$, let $\mcL_t:=\int_0^t \ba_s\mrd\bB_s$ where $(\bB_t)_{t\in[0,T]}$ is a $Q$-Brownian motion. Assume that $\Ebb_{Q}\left[\int_0^T\Vert\ba_t\Vert^2\mrd t\right] < \infty$. Then, $(\mcL_t)_{t\in[0,T]}$ is a square-integrable $Q$-martingal. Moreover, if 
    \begin{equation*}
        \Ebb_Q\left[\mathcal{E}(\mcL)_T\right] = 1,
    \end{equation*}
    where 
    $$
    \mathcal{E}(\mcL)_t:=\exp\left(\int_0^t\ba_s\mrd\bB_s - \frac{1}{2}\int_0^t\Vert\ba_s\Vert^2\mrd s\right),
    $$
    then $(\mathcal{E}(\mcL)_t)_{t\in[0,T]}$ is also a $Q$-martingale and the process
    $$
    \boldsymbol{\beta}_t:=\bB_t - \int_0^t\ba_s\mrd s
    $$
    is a Brownian motion under the new measure $P:=\mathcal{E}(\mcL)_T Q$.
\end{theorem}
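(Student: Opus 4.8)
The plan is to establish the three assertions of the theorem in turn, in the order stated. That $(\mcL_t)_{t\in[0,T]}$ is a square-integrable $Q$-martingale is the standard construction of the It\^o integral: the hypothesis $\Ebb_Q[\int_0^T \Vert \ba_t \Vert^2 \mrd t] < \infty$ places $\ba$ in the class of integrands for which $\mcL_t = \int_0^t \ba_s \mrd \bB_s$ is well defined, and the It\^o isometry gives $\Ebb_Q[\Vert \mcL_t \Vert^2] = \Ebb_Q[\int_0^t \Vert \ba_s \Vert^2 \mrd s] \le \Ebb_Q[\int_0^T \Vert \ba_s \Vert^2 \mrd s] < \infty$, so $(\mcL_t)$ is an $L^2$-bounded continuous martingale.

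For the second assertion, I would apply It\^o's formula to $x\mapsto e^{x}$ evaluated at $\mcL_t - \tfrac12\langle\mcL\rangle_t = \int_0^t \ba_s\mrd\bB_s - \tfrac12\int_0^t \Vert\ba_s\Vert^2\mrd s$, which yields the linear SDE
\begin{equation*}
\mrd\mathcal{E}(\mcL)_t = \mathcal{E}(\mcL)_t\,\ba_t\mrd\bB_t .
\end{equation*}
Thus $(\mathcal{E}(\mcL)_t)$ is a nonnegative continuous local martingale, hence a supermartingale by Fatou's lemma; since $\mathcal{E}(\mcL)_0 = 1$ and, by hypothesis, $\Ebb_Q[\mathcal{E}(\mcL)_T] = 1$, a nonnegative supermartingale with equal expectations at the two endpoints is in fact a uniformly integrable martingale on $[0,T]$. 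Because $\mathcal{E}(\mcL)_T\ge 0$ and integrates to $1$ under $Q$, the set function $P(A):=\Ebb_Q[\mathbf{1}_A\,\mathcal{E}(\mcL)_T]$ is a probability measure, and the martingale property together with the tower rule gives the density $\mathrm{d}P/\mathrm{d}Q$ restricted to $\mathcal{F}_t$ equal to $\mathcal{E}(\mcL)_t$.

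For the third assertion I would use L\'evy's characterization of Brownian motion: it suffices to show that $\boldsymbol{\beta}_t = \bB_t - \int_0^t \ba_s\mrd s$ is a continuous $P$-local martingale with cross-variations $\langle\beta^i,\beta^j\rangle_t = \delta_{ij}t$. The quadratic covariations are pathwise objects and therefore unaffected by the change of measure, so $\langle\beta^i,\beta^j\rangle_t = \langle B^i,B^j\rangle_t = \delta_{ij}t$. For the local martingale property I would invoke the abstract Bayes rule that $X_t$ is a $P$-local martingale iff $X_t\mathcal{E}(\mcL)_t$ is a $Q$-local martingale, and then apply the It\^o product rule with $\mrd\boldsymbol{\beta}_t = \mrd\bB_t - \ba_t\mrd t$ and $\mrd\mathcal{E}(\mcL)_t = \mathcal{E}(\mcL)_t\ba_t\mrd\bB_t$, noting $\mrd\langle\boldsymbol{\beta},\mathcal{E}(\mcL)\rangle_t = \mathcal{E}(\mcL)_t\ba_t\mrd t$, so that
\begin{equation*}
\mrd(\boldsymbol{\beta}_t\mathcal{E}(\mcL)_t) = \mathcal{E}(\mcL)_t\,\mrd\bB_t + \boldsymbol{\beta}_t\mathcal{E}(\mcL)_t\ba_t\,\mrd\bB_t - \mathcal{E}(\mcL)_t\ba_t\,\mrd t + \mathcal{E}(\mcL)_t\ba_t\,\mrd t ,
\end{equation*}
whose finite-variation part vanishes. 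Hence $\boldsymbol{\beta}_t\mathcal{E}(\mcL)_t$ is a $Q$-local martingale, $\boldsymbol{\beta}_t$ is a $P$-local martingale, and L\'evy's theorem identifies it as a $P$-Brownian motion.

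The step I expect to be the main obstacle is the careful passage between local and true martingales: verifying that $(\mathcal{E}(\mcL)_t)$ is a genuine martingale on $[0,T]$ is precisely where the hypothesis $\Ebb_Q[\mathcal{E}(\mcL)_T] = 1$ enters, in place of a Novikov- or Kazamaki-type sufficient condition, and justifying the abstract Bayes rule together with the localization in the product-rule computation requires introducing a reducing sequence of stopping times and the corresponding uniform-integrability estimates. The multidimensional bookkeeping — treating $\boldsymbol{\beta}_t$ componentwise and tracking the cross-variation terms — is routine but must be carried out with some care.
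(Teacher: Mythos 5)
Your proposal is correct, but there is no in-paper argument to compare it against: the paper states Theorem \ref{th: Girsanov} without proof, importing it as a classical result whose only role is to justify the change-of-measure bound \eqref{eq: KL} used in proving Theorem \ref{th: TV_sampling_error}. Your three steps are the standard textbook proof: the It\^o isometry gives the square-integrable $Q$-martingale property of $\mcL$; It\^o's formula yields $\mrd\mathcal{E}(\mcL)_t=\mathcal{E}(\mcL)_t\,\ba_t\mrd\bB_t$, so $\mathcal{E}(\mcL)$ is a nonnegative local martingale, hence a supermartingale, and the hypothesis $\Ebb_Q[\mathcal{E}(\mcL)_T]=1=\mathcal{E}(\mcL)_0$ forces its expectation to be constant, upgrading it to a (uniformly integrable) martingale that defines the density process of $P$; and the abstract Bayes rule combined with the product-rule computation and L\'evy's characterization identifies $\boldsymbol{\beta}$ as a $P$-Brownian motion. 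The delicate points you flag are the right ones, and none is a genuine gap: the equivalence ``$X$ is a $P$-local martingale iff $X\mathcal{E}(\mcL)$ is a $Q$-local martingale'' requires $\mathcal{E}(\mcL)$ to be a strictly positive true martingale (which your second step supplies, since $\mathcal{E}(\mcL)_T>0$ $Q$-a.s.); the invariance of the brackets is best justified as preservation of limits in probability of discrete quadratic sums under $P\ll Q$ rather than as a purely pathwise statement; and the product-rule step needs a localizing sequence. It is worth noting that the localization and truncation work you anticipate is exactly what the paper carries out downstream, inside its proof of Theorem \ref{th: TV_sampling_error} (the stopping times $T_r$ and the measures $Q^r$), where the theorem is applied to a drift for which Novikov-type conditions are not assumed.
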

Denote the path measures corresponding to \eqref{sde: forward_cond_OU} and \eqref{sde: EM_scheme} are $Q$ and $\wt{Q}$. Let $\ba_t=\sqrt{2}\left[\wh\bs(T-t_k,\bY_{t_k},\bx) - \bb(T-t,\bY_t,x)\right]$, $t\in[t_k, t_{k+1}]$. We apply Theorem \ref{th: Girsanov} to $Q$ and $\ba_t$, and assume the assumptions of Theorem \ref{th: Girsanov} are satisfied, then by data processing inequality, we have
    \begin{equation}\label{eq: KL}
    \mathrm{KL}(p_{T_0}|\wt{p}_{T_0})\leq\mathrm{KL}(Q|\wt{Q})\leq\sum_{k=0}^{N-1}\Ebb_{Q}\int_{t_k}^{t_{k+1}}\Vert\bb(T-t,\bY_t,\bx)-\wh\bs(T-t_k,\bY_{t_k},\bx)\Vert^2\mrd t.
    \end{equation}
Taking expectations at the both sides of \eqref{eq: KL} with respect to $\mcD,\mcT,\mcZ$, we obtain
    \begin{equation}\label{eq: KL_Expectation}
        \Ebb_{\mcD,\mcT,\mcZ}\left[\mathrm{KL}(p_{T_0}|\wt{p}_{T_0})\right]\leq\Ebb_{\mcD,\mcT,\mcZ}\left(\sum_{k=0}^{N-1}\Ebb_{Q}\int_{t_k}^{t_{k+1}}\Vert\bb(T-t,\bY_t,\bx)-\wh\bs(T-t_k,\bY_{t_k},\bx)\Vert^2\mrd t\right).
    \end{equation}

\begin{proof}[Proof of Theorem \ref{th: TV_sampling_error}.]
    We first show that 
    $$
    \sum_{k=0}^{N-1}\Ebb_{Q}\int_{t_k}^{t_{k+1}}\Vert\bb(T-t,\bY_t,\bx)-\wh\bs(T-t_k,\bY_{t_k},\bx)\Vert^2\mrd t < \infty.
    $$
    For any $t\in[t_k, t_{k+1}]$, by \cite[Lemma 10]{Chen2022SamplingIA}, we have 
    $$
    \begin{aligned}
    &~~~~\Ebb_{Q}\Vert\bb(T-t,\bY_t,\bx)-\wh\bs(T-t_k,\bY_{t_k},\bx)\Vert^2\\
    &\lesssim\Ebb_{Q}\Vert\bY_t\Vert^2 + K^2 + \Ebb_{Q}\Vert\nabla_{\by}\log p_{T-t}(\bY_t|\bx)\Vert^2\\
    &\lesssim d_\mcY + K^2 + d_\mcY\xi
    \end{aligned}
    $$
    Therefore, we have
    $$
    \begin{aligned}
    &~~~~\sum_{k=0}^{N-1}\Ebb_{Q}\int_{t_k}^{t_{k+1}}\Vert\bb(T-t,\bY_t,\bx)-\wh\bs(T-t_k,\bY_{t_k},\bx)\Vert^2\mrd t\\ &\lesssim (d_\mcY + K^2 + d_\mcY\xi)(T-T_0)<\infty.
    \end{aligned}
    $$
    
    Secondly, we give an upper bound for the right side of \eqref{eq: KL_Expectation}. For any $t\in[t_k, t_{k+1}]$, by the Lipschitz continuity of $\wh{\bs}$, we have
    $$
    \begin{aligned}
    &~~~~\Vert \bb(T-t,\bY_t,\bx)-\wh\bs(T-t_k,\bY_{t_k},\bx)\Vert^2\\
    &\lesssim \Vert \bb(T-t,\bY_t,\bx)-\wh\bs(T-t,\bY_{t},\bx)\Vert^2
    + \Vert\wh\bs(T-t, \bY_t,\bx) - \wh\bs(T-t,\bY_{t_k},\bx)\Vert^2 \\
    &~~~~+ \Vert\wh\bs(T-t, \bY_{t_k},\bx) - \wh\bs(T-t_k,\bY_{t_k},\bx)\Vert^2\\
    &\lesssim \Vert \bb(T-t,\bY_t,\bx)-\wh\bs(T-t,\bY_{t},\bx)\Vert^2 + d_\mcY\gamma_1^2\Vert \bY_t - \bY_{t_k}\Vert^2 + \gamma_2^2|t-t_k|^2.
    \end{aligned}
    $$
    Then, we have
    $$
    \begin{aligned}
    &~~~~\sum_{k=0}^{N-1}\Ebb_{Q}\int_{t_k}^{t_{k+1}}\Vert\bb(T-t,\bY_t,\bx)-\wh\bs(T-t_k,\bY_{t_k},\bx)\Vert^2\mrd t\\
    &\leq \sum_{k=0}^{N-1}\Ebb_{Q}\int_{t_k}^{t_{k+1}}\Vert \bb(T-t,\bY_t,\bx) - \wh\bs(T-t, \bY_t,\bx)\Vert^2\mrd t \\
    &~~~~~~~~~~+ d_\mcY\gamma_1^2\sum_{k=0}^{N-1}\int_{t_k}^{t_{k+1}}\Ebb_{Q}\Vert\bY_t - \bY_{t_k}\Vert^2\mrd t + \frac{1}{3}\gamma_2^2\sum_{k=0}^{N-1}(t_{k+1}-t_k)^3\\
    &=\int_{0}^{T-T_0}\Ebb_{Q}\Vert\bb(T-t,\bY_t,\bx)-\wh\bs(T-t,\bY_{t},\bx)\Vert^2\mrd t\\
    &~~~~~~~~~~+ d_\mcY\gamma_1^2\sum_{k=0}^{N-1}\int_{t_k}^{t_{k+1}}\Ebb_{Q}\Vert\bY_t - \bY_{t_k}\Vert^2\mrd t + \frac{1}{3}\gamma_2^2\sum_{k=0}^{N-1}(t_{k+1}-t_k)^3.
    \end{aligned}
    $$
    The term $\Ebb_{Q}\Vert\bY_t - \bY_{t_k}\Vert^2$ can be bounded as follows:
    $$
    \begin{aligned}
    \Ebb_{Q}\Vert\bY_t - \bY_{t_k}\Vert^2 &= \Ebb_{Q}\Vert\ov\bY_{T-t} - \ov\bY_{T-t_k}\Vert^2\\
    &=\Ebb\left[\left\Vert-\int_{T-t}^{T-t_k}\ov\bY_{u}\mrd u + \sqrt{2}(\bB_{T-t_k} - \bB_{T-t})\right\Vert^2\right]\\
    &\lesssim (t-t_k)\int_{T-t}^{T-t_k}\Ebb_Q\Vert\ov\bY_u\Vert^2\mrd u + d_\mcY(t-t_k)\\
    &\lesssim d_\mcY(t-t_k)^2 + d_\mcY(t-t_k).
    \end{aligned}
    $$
    Therefore, with Theorem \ref{th: drift_estimation}, we have
    $$
\begin{aligned}
&~~~~\Ebb_{\mcD,\mcT,\mcZ}\left(\sum_{k=0}^{N-1}\Ebb_{Q}\int_{t_k}^{t_{k+1}}\Vert\bb(T-t,\bY_t,\bx)-\wh\bs(T-t_k,\bY_{t_k},\bx)\Vert^2\mrd t\right)\\
&\lesssim 
\widetilde{\mathcal{O}}\left(
\frac{(T-T_0)^2\xi^{d_\mcY}(\beta B_{\mathcal{X}})^{d_{\mathcal{X}}}}{T_0^5}  
\left(n^{-\frac{2}{d_\mcX + d_{\mathcal{Y}} + 3}} + n^{\frac{d_\mcX + d_{\mathcal{Y}} + 1}{2(d_\mcX + d_{\mathcal{Y}} + 3)}}m^{-\frac{1}{2}}\right)
    \right)\\
    &~~~~+ \frac{1}{3}\left(d_{\mcY}^2\gamma_1^2 + \gamma_2^2\right)\sum_{k=0}^{N-1}(t_{k+1}-t_k)^3 + \frac{1}{2}d_{\mcY}\sum_{k=0}^{N-1}(t_{k+1}-t_k)^2.
    \end{aligned}
    $$
By the definition of $\mcL_t$, we know $(\mathcal{E}(\mcL)_t)_{t\in[0,T-T_0]}$ is a continuous local martingale. So we can find an increasing sequence of stopping times $(T_r)_{r\geq 1}$ such that $T_r\uparrow T-T_0$ such that $(\mathcal{E}(\mcL)_{t\wedge T_r})_{t\in[0,T-T_0]}$ is a continuous martingale. We define $\mcL_t^r:=\int_0^t\ba_s\mI_{[0,T_r]}\mrd\bB_s$ for all $t\in[0,T-T_0]$ and $r\geq 1$. Then we have $\mathcal{E}(\mcL)_{t\wedge T_r} = \mathcal{E}(\mcL^r)_t$, so $\mathcal{E}(\mcL^r)$ is a continuous martingale, and it follows that $\Ebb_{Q}\left[\mathcal{E}(\mcL^n)_{T-T_0}\right]=1$. Therefore, we can apply Theorem \ref{th: Girsanov} to $\mcL^r$ on $[0,T-T_0]$. Since $\Ebb_{Q}\left[\int_0^{T-T_0}\Vert\ba_t\mI_{[0,T_r]}\Vert^2\mrd t\right]\leq\Ebb_{Q}\left[\int_0^{T-T_0}\Vert\ba_t\Vert^2\mrd t\right] < \infty$, we obtain that under the new measure $Q^r:=\mathcal{E}(\mcL^r)_{T-T_0}Q$, there exists a $Q^r$-Brownian motion $(\boldsymbol{\beta}_t^r)_{t\in[0,T-T_0]}$ such that
    $$
    \mrd\boldsymbol{\beta}_t^r = \mrd\bB_t - \ba_t\mI_{[0,T_r]}(t)\mrd t.
    $$
    Recall that under $Q$ we have
    $$
    \mrd\bY_t = \left[\bY_t + 2\nabla_{\by}\log p_{T-t}(\bY_t|\bx)\right]\mrd t + \sqrt{2}\mrd \bB_t, ~\bY_0\sim p_{T}(\by|\bx).
    $$
    The equation \eqref{sde: forward_cond_OU} still holds almost surely under $Q^r$, since $Q^r\ll Q$. Therefore, we obtain that under $Q^r$, it almost surely holds
    $$
    \mrd\bY_t = \wh\bs(T-t_k,\bY_{t_k}, 
    \bx)\mI_{[0,T_r]}(t)\mrd t + \bb(T-t,\bY_t,\bx)\mI_{[T_r,T-T_0]}(t)\mrd t + \sqrt{2}\mrd\boldsymbol{\beta}_t^r.
    $$
    In addition, we have the following bound:  
    $$
    \begin{aligned}
    \mathrm{KL}(Q|Q^r) &= \Ebb_{Q}\left[\log\frac{\mrd Q}{\mrd Q^r}\right] = -\Ebb_{Q}\left[\mathcal{E}(\mcL^r)_{T-T_0}\right]\\ 
    &=\Ebb_{Q}\left[-\mcL_{T_r} + \frac{1}{2}\int_0^{T_r}\Vert\ba_t\Vert^2\mrd t\right]\\
    &\leq\frac{1}{2}\Ebb_Q\left[\int_0^{T-T_0}\Vert\ba_t\Vert^2\mrd t\right]\\
    &=\sum_{k=0}^{N-1}\Ebb_{Q}\int_{t_k}^{t_{k+1}}\Vert\bb(T-t,\bY_t,\bx)-\wh\bs(T-t_k,\bY_{t_k},\bx)\Vert^2\mrd t,
    \end{aligned}
    $$
    where we used that $\Ebb_{Q}[\mcL_{T_r}] = 0$ since $(\mcL_t)_{t\in[0,T-T_0]}$ is a $Q$-martingale and $T_r$ is a bounded stopping time.

    Now, we consider coupling $(Q^r)_{r\in\mathbb{N}^{+}}$ and $\wt{Q}$ by defining a sequence of stochastic processes $(\bY_t^r)_{t\in[0,T-T_0]}$, a stochastic process $(\bY_t)_{t\in[0,T-T_0]}$ and a single Brownian motion $(\boldsymbol{W}_t)_{t\in[0,T-T_0]}$ on the same probability space such that
    $$
    \mrd\bY_t^r = \wh\bs(T-t_k, \bY_{t_k}^r,\bx)\mI_{[0,T_r]}(t)\mrd t + \bb(T-t,\bY_t,\bx)\mI_{[T_r,T-T_0]}(t)\mrd t + \sqrt{2}\mrd\boldsymbol{W}_t,
    $$
    and
    $$
    \mrd \bY_t = \wh\bs(T-t_k,\bY_{t_k},\bx)\mrd t + \sqrt{2}\mrd\boldsymbol{W}_t
    $$
    with ${\bY}^r_0 = \bY_0$ and $\bY_0\sim p_{T}(\by|\bx)$. The path measures corresponding to $\bY_t^r$, $\bY_t$ are $Q^r$ and $\wt{Q}$.

    For fixed $\epsilon > 0$, we define $\pi_{\epsilon}:C([0,T-T_0];\Rbb^{d_\mcY})\rightarrow C([0,T-T_0];\Rbb^{d_\mcY})$ by 
    $$
    \pi_{\epsilon}(\omega)(t):=\omega(t\wedge (T-T_0-\epsilon)).
    $$
    Then, $\pi_{\epsilon}(\bY_t^r)\rightarrow\pi_{\epsilon}(\bY_t)$ uniformly over $[0,T-T_0]$ almost surely. Therefore, $(\pi_{\epsilon})_{\#}Q^r\rightarrow (\pi_{\epsilon})_{\#}\wt{Q}$ weakly. Using the lower semicontinuity of the $\mathrm{KL}$ divergence and the data processing inequality, we have
    $$
    \begin{aligned}\mathrm{KL}\left((\pi_{\epsilon})_{\#}Q\big|(\pi_{\epsilon})_{\#}\wt{Q}\right)&\leq \mathop{\lim\inf}_{r\rightarrow\infty}\mathrm{KL}\left((\pi_{\epsilon})_{\#}Q\big|(\pi_{\epsilon})_{\#}Q^r\right)\\
    &\leq\mathop{\lim\inf}_{r\rightarrow\infty}\mathrm{KL}\left(Q|Q^r\right)\\
    &\leq \sum_{k=0}^{N-1}\Ebb_{Q}\int_{t_k}^{t_{k+1}}\Vert\bb(T-t,\bY_t,\bx)-\wh\bs(T-t_k,\bY_{t_k},\bx)\Vert^2\mrd t.
    \end{aligned}
    $$
    Finally, letting $\epsilon\rightarrow 0$, we obtain that $\pi_{\epsilon}(\omega)\rightarrow\omega$ uniformly on $[0,T-T_0]$ and hence 
    $$
    \mathrm{KL}\left((\pi_{\epsilon})_{\#}Q\big|(\pi_{\epsilon})_{\#}\wt{Q}\right)\rightarrow\mathrm{KL}(Q|\wt{Q}),
    $$
    which implies that
    $$
    \mathrm{KL}(p_{T_0}|\wt{p}_{T_0})\leq\mathrm{KL}(Q|\wt{Q})\leq\sum_{k=0}^{N-1}\Ebb_{Q}\int_{t_k}^{t_{k+1}}\Vert\bb(T-t,\bY_t,\bx)-\wh\bs(T-t_k,\bY_{t_k},\bx)\Vert^2\mrd t,
    $$
    and 
    $$
    \begin{aligned}
&~~~~\Ebb_{\mcD,\mcT,\mcZ}\left[\mathrm{KL}(p_{T_0}|\wt{p}_{T_0})\right]\\
&\leq\Ebb_{\mcD,\mcT,\mcZ}\left(\sum_{k=0}^{N-1}\Ebb_{Q}\int_{t_k}^{t_{k+1}}\Vert\bb(T-t,\bY_t,\bx)-\wh\bs(T-t_k,\bY_{t_k},\bx)\Vert^2\mrd t\right)\\
    &\lesssim 
    \widetilde{\mathcal{O}}\left(
    \frac{(T-T_0)^2\xi^{d_\mcY}(\beta B_{\mathcal{X}})^{d_{\mathcal{X}}}}{T_0^5}  
    \left(n^{-\frac{2}{d_\mcX + d_{\mathcal{Y}} + 3}} + n^{\frac{d_\mcX + d_{\mathcal{Y}} + 1}{2(d_\mcX + d_{\mathcal{Y}} + 3)}}m^{-\frac{1}{2}}\right)
    \right)\\
    &~~~~+ \frac{1}{3}\left(d_\mcY^2\gamma_1^2 + \gamma_2^2\right)\sum_{k=0}^{N-1}(t_{k+1}-t_k)^3 + \frac{1}{2}d_\mcY \sum_{k=0}^{N-1}(t_{k+1}-t_k)^2.
    \end{aligned}
    $$
    Therefore, with inequality 
    $\Ebb_{\mcD,\mcT,\mcZ}\left[\mathrm{TV}(p_{T_0},\wt{p}_{T_0})\right]\lesssim\left(\Ebb_{\mcD,\mcT,\mcZ}\left[\mathrm{KL}(p_{T_0}|\wt{p}_{T_0})\right]\right)^{1/2}$, 
    we have
    $$
    \begin{aligned}
    \Ebb_{\mcD,\mcT,\mcZ}\left[\mathrm{TV}(p_{T_0},\wt{p}_{T_0})\right] 
    &= 
\widetilde{\mathcal{O}}\Bigg(
    \frac{(T-T_0)\xi^{\frac{d_\mcY}{2}}(\beta B_{\mathcal{X}})^{\frac{d_\mcX}{2}}}{T_0^{\frac{5}{2}}}  
    \left(n^{-\frac{1}{d_\mcX+ d_{\mathcal{Y}} + 3}} + n^{\frac{d_\mcX + d_{\mathcal{Y}} + 1}{4(d_\mcX+ d_{\mathcal{Y}} + 3)}}m^{-\frac{1}{4}} \right) \\
    &~~~~~~~ + \left(d_\mcY \gamma_1 + \gamma_2\right)\sqrt{\sum_{k=0}^{N-1}(t_{k+1}-t_k)^3} + d_\mcY^{\frac{1}{2}
    }\sqrt{\sum_{k=0}^{N-1}(t_{k+1}-t_k)^2}\Bigg).
    \end{aligned}
    $$
The proof is complete.
\end{proof}

\section{Proof of Main Results.}\label{sec:appendf}
In this section, we give the proofs of Theorem \ref{th: end_to_end_convergence} and Theorem \ref{th: weakcon}. 

\begin{proof}[Proof of Theorem \ref{th: end_to_end_convergence}]
    Combining Theorem \ref{th:TV_initial_distribution}, Theorem \ref{th: TV_early_stopping}, and Theorem \ref{th: TV_sampling_error}, we have
$$
\begin{aligned}
&~~~~\Ebb_{\mcD,\mcT,\mcZ}\left[\mathrm{TV}(\ck{p}_{T_0},p_0)\right]\\ 
&= 
\widetilde{\mathcal{O}}\Bigg(e^{-T} + \sqrt{T_0}\log^{(d_\mcY+ 1)/2}\frac{1}{T_0} + 
    \frac{(T-T_0)\xi^{\frac{d_\mcY}{2}}(\beta B_{\mathcal{X}})^{\frac{d_\mcX}{2}}}{T_0^{\frac{5}{2}}}  
    \left(n^{-\frac{1}{d_\mcX+ d_{\mathcal{Y}} + 3}} + n^{\frac{d_\mcX + d_{\mathcal{Y}} + 1}{4(d_\mcX+ d_{\mathcal{Y}} + 3)}}m^{-\frac{1}{4}} \right) \\
    &~~~~~~~~~+ \left(d_\mcY \gamma_1 + \gamma_2\right)\sqrt{\sum_{k=0}^{N-1}(t_{k+1}-t_k)^3} + d_\mcY^{\frac{1}{2}
    }\sqrt{\sum_{k=0}^{N-1}(t_{k+1}-t_k)^2}\Bigg).
    \end{aligned}
    $$
By the chosen neural networks, we have $\gamma_1 = \mathcal{O}\left(\frac{20 d_\mcY^2}{T_0^2}\right)$ and $\gamma_2 = \mathcal{O}\Big(\frac{10 d_\mcY^{3/2}\sqrt{\log\frac{d_\mcY}{\epsilon T_0}}}{T_0^3}\Big)$. 
Let $m > n^{\frac{d_\mcX+ d_{\mcY} + 5}{d_\mcX+ d_{\mcY} + 3}}$, by choosing
    $T = \mathcal{O}(\log n)$, $T_0 = \mathcal{O}\left(n^{-\frac{1}{(d_\mcY+ 3)(d_\mcX+ d_{\mcY} + 3)}}\right)$ and $\max_{k=0,\cdots,N-1}|t_{k+1}-t_k| = \mathcal{O}\left(n^{-\frac{6}{(d_\mcY+ 3)(d_\mcX+ d_{\mcY} + 3)}}\right)$, we obtain
    $$
    \Ebb_{\mcD,\mcT,\mcZ}[\mathrm{TV}(\ck{p}_{T_0},p_0)] = \wt{O}\left(n^{-\frac{1}{2(d_\mcY+ 3)(d_\mcX+ d_{\mcY} + 3)}}\right).
    $$
    This proof is complete.
\end{proof}

\begin{proof}[Proof of Theorem \ref{th: weakcon}]
Denote by $\wh{f}(\bx):= \mathbb{E}_{\wh{Y}_{\bx} \sim \wh{P}_{Y|X=\bx}} \wh{Y}_{\bx}$.
Then, we have
\begin{align*}
\sqrt{M} \wh{S}_{\bx}^{-1} \left(\overline{\wh{Y}}_{\bx}-f_0(\bx)\right)
=\sqrt{M}\wh{S}_{\bx}^{-1}\left(\overline{\wh{Y}}_{\bx}-\wh{f}(\bx)\right)
+\sqrt{M}\wh{S}_{\bx}^{-1}\left(\wh{f}(\bx)-f_0(\bx)\right).
\end{align*}
From the definition of the TV distance and using Theorem \ref{th: end_to_end_convergence}, we can conclude that  
$\wh{S}^2_{\bx}$ converges to the variance of $Y_{\bx}$ as $M$ and $n$ tends to infinity.
Then, by invoking the central limit theorem,  
we can infer that 
$\sqrt{M} \wh{S}_{\bx}^{-1} \left(\overline{\wh{Y}}_{\bx}-\wh{f}(\bx)\right)
$ weakly converges to the standard Gaussian distribution.
Therefore, it remains to prove that 
$\sqrt{M}\wh{S}_{\bx}^{-1}\left(\wh{f}(\bx)-f_0(\bx)\right)
=o_p(1)$ as $M$ tends to infinity.
It follows that
\begin{align*}
\Ebb_{\mcD,\mcT,\mcZ}
\left[|\wh{f}(\bx)-f_0(\bx)|\right]
&=\Ebb_{\mcD,\mcT,\mcZ} \left[\left|\mathbb{E}_{\wh{Y}_{\bx} \sim \wh{P}_{Y|X=\bx}} \wh{Y}_{\bx}
-\mathbb{E}_{Y_{\bx} \sim P_{Y|X=\bx}} Y_{\bx}
\right|\right]\\
&\lesssim 
\Ebb_{\mcD,\mcT,\mcZ}\left[
\mathrm{TV}(\wh{P}_{Y|X=\bx},P_{Y|X=\bx})
\right]\\
&\leq 
\widetilde{\mathcal{O}}\Big(n^{-\frac{1}{2(d_\mcY+ 3)(d_\mcX+ d_{\mcY} + 3)}}\Big),
\end{align*}
where the first inequality follows from the definition of  TV distance,
the second inequality holds by Theorem \ref{th: end_to_end_convergence}.
When $n\geq M^{(d_\mcY+ 3)(d_\mcX+ d_{\mcY} + 4)}$, we have $\sqrt{M}\wh{S}_{\bx}^{-1}\left(\wh{f}(\bx)-f_0(\bx)\right)=o_p(1)$.
Therefore, we obtain the desired result.
\end{proof}

\bibliographystyle{alpha}
\bibliography{biblio}
\end{document}